\renewcommand*{\backrefalt}[4]{%
    \ifcase #1 \footnotesize{(Not cited.)}%
    \or        \footnotesize{(Cited on page~#2.)}%
    \else      \footnotesize{(Cited on pages~#2.)}%
    \fi}
\newtheorem{theorem}{Theorem}[section]
\newtheorem{lemma}[theorem]{Lemma}
\newtheorem{proposition}[theorem]{Proposition}
\newtheorem{definition}[theorem]{Definition}
\newtheorem{remark}[theorem]{Remark}
\newtheorem{assumption}[theorem]{Assumption}
\numberwithin{equation}{section}
\newcommand{\E}{\mathbb{E}}
\newcommand{\EE}{\mathbb{E}}
\newcommand{\grad}{\nabla}
\newcommand{\gradx}{\nabla_{\mathbf x}}
\newcommand{\grady}{\nabla_{\mathbf y}}
\newcommand{\subg}{\partial}
\newcommand{\proj}{\mathcal{P}}
\newcommand{\x}{\mathbf x}
\newcommand{\y}{\mathbf y}
\newcommand{\z}{\mathbf z}
\newcommand{\w}{\mathbf w}
\newcommand{\argmin}{\mathop{\rm argmin}}
\newcommand{\argmax}{\mathop{\rm argmax}}
\newcommand{\OCal}{\mathcal{O}}
\newcommand{\YCal}{\mathcal{Y}}
\newcommand{\prox}{\textnormal{prox}}
\newcommand{\br}{\mathbb{R}}
\newcommand{\ba}{\begin{array}}
\newcommand{\ea}{\end{array}}
\title{\bf{\LARGE{On Gradient Descent Ascent for Nonconvex-Concave Minimax Problems}}}
\author{Tianyi Lin\thanks{Department of IEOR, UC Berkeley, Berkeley, CA 94720, USA; Email: darren{\_}lin@berkeley.edu.} \and Chi Jin\thanks{Department of EE, Princeton University, Princeton, NJ 08544, USA; Email: chij@princeton.edu.} \and Michael. I. Jordan\thanks{Department of EECS and Statistics, UC Berkeley, Berkeley, CA 94720, USA; Email: jordan@cs.berkeley.edu.}
}
\begin{document}
\maketitle

\begin{abstract}
We consider nonconvex-concave minimax problems, $\min_\x \max_{\y \in \YCal} f(\x, \y)$ where $f$ is nonconvex in $\x$ but concave in $\y$ and $\YCal$ is a convex and bounded set. One of the most popular algorithms for solving this problem is the celebrated gradient descent ascent (GDA) algorithm, which has been widely used in machine learning, control theory and economics. Despite the extensive convergence results for the convex-concave setting, GDA with equal stepsize can converge to limit cycles or even diverge in a general setting. In this paper, we present the complexity results on two-timescale GDA for solving nonconvex-concave minimax problems, showing that the algorithm can find a stationary point of the function $\Phi(\cdot) := \max_{\y\in\YCal} f(\cdot, \y)$ efficiently. To our knowledge, this is the first nonasymptotic analysis for two-timescale GDA in this setting, shedding light on its superior practical performance in training generative adversarial networks (GANs) and other real applications. 
\end{abstract}

\section{Introduction}
We consider the following smooth minimax optimization problem:
\begin{equation}\label{prob:main}
\min_{\x \in \br^m} \max_{\y \in \YCal} \ f(\x, \y), 
\end{equation}
where $f: \br^m \times \br^n \rightarrow \br$ is nonconvex in $\x$ but concave in $\y$ and where $\YCal$ is a convex set. Since von Neumann's seminal work~\citep{Neumann-1928-Theorie}, the problem of finding the solution to problem~\eqref{prob:main} has been a major focus of research in mathematics, economics and computer science~\citep{Basar-1999-Dynamic,Nisan-2007-Algorithmic,Von-2007-Theory}. In recent years, minimax optimization theory has begun to see applications in machine learning, with examples including generative adversarial networks (GANs)~\citep{Goodfellow-2014-Generative}, statistics~\citep{Xu-2009-Robustness, Abadeh-2015-Distributionally}, online learning~\citep{Cesa-2006-Prediction}, deep learning~\citep{Sinha-2018-Certifiable} and distributed computing~\citep{Shamma-2008-Cooperative, Mateos-2010-Distributed}. Moreover, there is increasing awareness that machine-learning systems are embedded in real-world settings involving scarcity or competition that impose game-theoretic constraints~\citep{Jordan-2018-Artificial}. 

One of the simplest candidates for solving problem~\eqref{prob:main} is the natural generalization of gradient descent (GD) known as \emph{gradient descent ascent} (GDA). At each iteration, this algorithm performs gradient descent over the variable $\x$ with the stepsize $\eta_\x$ and gradient ascent over the variable $\y$ with the stepsize $\eta_\y$. On the positive side, when the objective function $f$ is convex in $\x$ and concave in $\y$, there is a vast literature establishing asymptotic and nonasymptotic convergence for the average iterates generated by GDA with the equal stepsizes ($\eta_\x = \eta_\y$);~\citep[see, e.g.,][]{Korpelevich-1976-Extragradient, Chen-1997-Convergence, Nedic-2009-Subgradient, Nemirovski-2004-Prox, Du-2018-Linear}.  Local linear convergence can also be shown under the additional assumption that $f$ is locally strongly convex in $\x$ and strongly concave in $\y$~\citep{Cherukuri-2017-Saddle, Adolphs-2018-Local, Liang-2018-Interaction}. However, there has been no shortage of research highlighting the fact that in a general setting GDA with equal stepsizes can converge to limit cycles or even diverge~\citep{Benaim-1999-Mixed, Hommes-2012-Multiple, Mertikopoulos-2018-Cycles}.  

Recent research has focused on alternative gradient-based algorithms that have guarantees beyond the convex-concave setting~\citep{Daskalakis-2017-Training, Heusel-2017-Gans, Mertikopoulos-2019-Optimistic, Mazumdar-2019-Finding}. Two-timescale GDA~\citep{Heusel-2017-Gans} has been particularly  popular. This algorithm, which involves unequal stepsizes ($\eta_\x \neq \eta_\y$),  has been shown to empirically to alleviate the issues of limit circles and it has theoretical support in terms of local asymptotic convergence to Nash equilibria~\citep[Theorem~2]{Heusel-2017-Gans}. 
\renewcommand{\arraystretch}{1.4}
\begin{table*}[!t]
\centering
\caption{The gradient complexity of all algorithms for nonconvex-(strongly)-concave minimax problems. $\epsilon$ is a tolerance and $\kappa>0$ is a condition number. The result denoted by $^\star$ refers to the complexity bound after translating from $\epsilon$-stationary point of $f$ to our optimality measure; see Propositions~\ref{prop:criterion-nsc} and~\ref{prop:criterion-nc}. The result denoted by $^\circ$ is not presented explicitly but easily derived by standard arguments.}\hspace*{-2em}
\begin{tabular}{|c|c|c|c|c|c|} \hline
& \multicolumn{2}{c|}{Nonconvex-Strongly-Concave} & \multicolumn{2}{c|}{Nonconvex-Concave} & \multirow{2}{*}{Simplicity} \\ \cline{2-5}
& Deterministic & Stochastic & Deterministic & Stochastic & \\ \hhline{|======|}
~\citet{Jin-2019-Minmax} & $\tilde{O}\left(\kappa^2\epsilon^{-2}\right)^\circ$ & $\tilde{O}\left(\kappa^3\epsilon^{-4}\right)$ & $O(\epsilon^{-6})$ & $O(\epsilon^{-8})^\circ$ & Double-loop \\ \hline
~\citet{Rafique-2018-Non} & $\tilde{O}(\kappa^2\epsilon^{-2})$ & $\tilde{O}(\kappa^3\epsilon^{-4})$ & $\tilde{O}(\epsilon^{-6})$ & $\tilde{O}(\epsilon^{-6})$ & Double-loop \\ \hline
~\citet{Nouiehed-2019-Solving} & $\tilde{O}(\kappa^4\epsilon^{-2})^{\star, \circ}$ & -- & $O(\epsilon^{-7})^\star$ & -- & Double-loop \\ \hline
~\citet{Thekumparampil-2019-Efficient} & -- & -- & $\tilde{O}(\epsilon^{-3})$ & -- & Triple-loop \\ \hline
~\citet{Kong-2019-Accelerated} & -- & -- & $\tilde{O}(\epsilon^{-3})$ & -- & Triple-loop \\ 
\hhline{|======|}
~\citet{Lu-2019-Hybrid} & $O(\kappa^4\epsilon^{-2})^\star$ & -- & $O(\epsilon^{-8})^\star$ & -- & Single-loop \\ \hline
\cellcolor{lightgray} \textbf{This paper} & $O(\kappa^2\epsilon^{-2})$ & $O(\kappa^3\epsilon^{-4})$ & $O(\epsilon^{-6})$ & $O(\epsilon^{-8})$ & Single-loop \\ \hline
\end{tabular}\label{tab:results} 
\end{table*}

This asymptotic result stops short of providing an understanding of algorithmic efficiency, and it would be desirable to provide a stronger, nonasymptotic, theoretical convergence rate for two-timescale GDA in a general setting. In particular, the following general structure arises in many applications: $f(\x, \cdot)$ is concave for any $\x$ and $\YCal$ is a bounded set. Two typical examples include training of a neural network which is robust to adversarial examples~\citep{Madry-2017-Towards} and learning of a robust classifier from multiple distributions~\citep{Sinha-2018-Certifiable}. Both of these schemes can be posed as nonconvex-concave minimax problems. Based on this observation, it is natural to ask the question: \textit{Are two-timescale GDA and stochastic GDA (SGDA) provably efficient for nonconvex-concave minimax problems?}

\paragraph{Our results:} This paper presents an affirmative answer to this question, providing nonasymptotic complexity results for two-time scale GDA and SGDA in two settings. In the nonconvex-strongly-concave setting, two-time scale GDA and SGDA require $O(\kappa^2\epsilon^{-2})$ gradient evaluations and $O(\kappa^3\epsilon^{-4})$ stochastic gradient evaluations, respectively, to return an $\epsilon$-stationary point of the function $\Phi(\cdot) = \max_{\y \in \YCal} f(\cdot, \y)$ where $\kappa > 0$ is a condition number. In the nonconvex-concave setting, two-time scale GDA and SGDA require $O(\epsilon^{-6})$ gradient evaluations and $O(\epsilon^{-8})$ stochastic gradient evaluations. 

\paragraph{Main techniques:} To motivate the proof ideas for analyzing two-time scale GDA and SGDA, it is useful to contrast our work with some of the strongest existing convergence analyses for nonconvex-concave problems.  In particular,~\citet{Jin-2019-Minmax} and \citet{Nouiehed-2019-Solving} have provided complexity results for algorithms that have a nested-loop structure.  Specifically, GDmax and multistep GDA are algorithms in which the outer loop can be interpreted as an inexact gradient descent on a nonconvex function $\Phi(\cdot) = \max_{\y \in \YCal} f(\cdot, \y)$ while the inner loop provides an approximate solution to the maximization problem $\max_{\y \in \YCal} f(\x, \y)$ for a given $\x \in \br^m$.  Strong convergence results are obtained when accelerated gradient ascent is used in the maximization problem.

Compared to GDmax and multistep GDA, two-time scale GDA and SGDA are harder to analyze. Indeed, $\y_t$ is not necessarily guaranteed to be close to $\y^\star(\x_t)$ at each iteration and thus it is unclear that $\gradx f(\x_t, \y_t)$ might a reasonable descent direction. To overcome this difficulty, we develop a new technique which analyzes the concave optimization with a slowly changing objective function. This is the main technical contribution of this paper.

\paragraph{Notation.} We use bold lower-case letters to denote vectors and caligraphic upper-case letter to denote sets. We use $\left\|\cdot\right\|$ to denote $\ell_2$-norm of vectors and spectral norm of matrices. For a function $f: \br^n \rightarrow \br$, $\partial f(\z)$ denotes the subdifferential of $f$ at $\z$. If $f$ is differentiable, $\partial f(\z) = \left\{\grad f(\z)\right\}$ where $\grad f(\cdot)$ denotes the gradient of $f$ and $\gradx f(\cdot)$ denotes the partial gradient of $f$ with respect to $\x$. For a symmetric matrix $A \in \br^{n \times n}$, the largest and smallest eigenvalue of $A$ denoted by $\lambda_{\max}(A)$ and $\lambda_{\min}(A)$.

\section{Related Work}
\paragraph{Convex-concave setting.} Historically, an early concrete instantiation of problem~\eqref{prob:main} involved computing a pair of probability vectors $\left(\x, \y\right)$, or equivalently solving $\min_{\x \in \Delta^m} \max_{\y \in \Delta^n} \x^\top A \y$ for a matrix $A \in \br^{m \times n}$ and probability simplices $\Delta^m$ and $\Delta^n$. This bilinear minimax problem together with von Neumann's minimax theorem~\citep{Neumann-1928-Theorie} was a cornerstone in the development of game theory. A simple and generic algorithm scheme was developed for solving this problem in which the min and max players each implemented a simple learning procedure in tandem~\citep{Robinson-1951-Iterative}. After then,~\citet{Sion-1958-General} generalized von Neumann's result from bilinear games to general convex-concave games, $\min_\x \max_\y f(\x, \y) = \max_\y \min_\x f(\x, \y)$, and triggered a line of algorithmic research on convex-concave minimax optimization in both continuous time~\citep{Kose-1956-Solutions, Cherukuri-2017-Saddle} and discrete time~\citep{Uzawa-1958-Iterative, Golshtein-1974-Generalized, Korpelevich-1976-Extragradient, Nemirovski-2004-Prox, Nedic-2009-Subgradient, Mokhtari-2019-Unified, Mokhtari-2019-Proximal, Azizian-2019-Tight}. It is well known that GDA can find an $\epsilon$-approximate saddle point within $O(\kappa^2 \log (1/\epsilon))$ iterations for strongly-convex-strongly-concave games, and $O(\epsilon^{-2})$ iterations for convex-concave games if we impose the diminishing stepsizes~\citep{Nedic-2009-Subgradient, Nemirovski-2004-Prox}.

\paragraph{Nonconvex-concave setting.} Nonconvex-concave minimax problems appear to be a class of tractable problems in the form of problem~\eqref{prob:main} and have emerged as a focus in optimization and machine learning~\citep{Namkoong-2016-Stochastic, Sinha-2018-Certifiable, Rafique-2018-Non, Sanjabi-2018-Convergence, Grnarova-2018-An, Lu-2019-Hybrid, Nouiehed-2019-Solving, Thekumparampil-2019-Efficient, Kong-2019-Accelerated}; see Table~\ref{tab:results} for a comprehensive overview. We also wish to highlight the work of~\citet{Grnarova-2018-An}, who proposed a variant of GDA for nonconvex-concave problem and the work of~\citet{Sinha-2018-Certifiable} and~\citet{Sanjabi-2018-Convergence}, who studied a class of inexact nonconvex SGD algorithms that can be categorized as variants of SGDmax for nonconvex-strongly-concave problem. \citet{Jin-2019-Minmax} analyzed the GDmax algorithm for nonconvex-concave problem and provided nonasymptotic convergence results. 

\citet{Rafique-2018-Non} proposed ``proximally guided stochastic mirror descent'' and ``variance reduced gradient'' algorithms (PGSMD/PGSVRG) and proved that these algorithms find an approximate stationary point of $\Phi(\cdot) :=\max_{\y\in\YCal} f(\cdot, \y)$. However, PGSMD/PGSVRG are nested-loop algorithms and convergence results were established only in the special case where $f(\x, \cdot)$ is a linear function~\citep[Assumption 2 D.2]{Rafique-2018-Non}.~\citet{Nouiehed-2019-Solving} developed a multistep GDA (MGDA) algorithm by incorporating accelerated gradient ascent as the subroutine at each iteration. This algorithm provably finds an approximate stationary point of $f(\cdot, \cdot)$ for nonconvex-concave problems with the fast rate of $O(\epsilon^{-3.5})$. Very recently,~\citet{Thekumparampil-2019-Efficient} have proposed a proximal dual implicit accelerated gradient (ProxDIAG) algorithm for nonconvex-concave problems and proved that the algorithm find an approximate stationary point of $\Phi(\cdot)$ with the rate of $O(\epsilon^{-3})$. This complexity result is also achieved by an inexact proximal point algorithm~\citep{Kong-2019-Accelerated}. All of these algorithms are, however, nested-loop algorithms and thus relatively complicated to implement.  One would like to know whether the nested-loop structure is necessary or whether GDA, a single-loop algorithm, can be guaranteed to converge in the nonconvex-(strongly)-concave setting. 

The most closest work is~\citet{Lu-2019-Hybrid} in which a single-loop HiBSA algorithm for nonconvex-(strongly)-concave problems is proposed with theoretical guarantees under a different notion of optimality. However, their analysis requires some restrictive assumptions; e.g., that $f(\cdot, \cdot)$ is lower bounded.  We only require that $\max_{\y \in \YCal} f(\cdot, \y)$ is lower bounded. An example which meets our conditions and not those of~\citet{Lu-2019-Hybrid} is $\min_{\x \in \br}\max_{\y \in [-1, 1]} \x^\top\y$. Our less-restrictive assumptions make the problem more challenging and our technique is accordingly fundamentally difference from theirs. 

\paragraph{Nonconvex-nonconcave setting.} During the past decade, the study of nonconvex-nonconcave minimax problems has become a central topic in machine learning, inspired in part by the advent of generative adversarial networks~\citep{Goodfellow-2014-Generative} and adversarial learning~\citep{Madry-2017-Towards, Namkoong-2016-Stochastic, Sinha-2018-Certifiable}. Most recent work aims at defining a notion of goodness or the development of new procedures for reducing oscillations~\citep{Daskalakis-2018-Limit, Adolphs-2018-Local, Mazumdar-2019-Finding} and speeding up the convergence of gradient dynamics~\citep{Heusel-2017-Gans, Balduzzi-2018-Mechanics, Mertikopoulos-2019-Optimistic, Lin-2018-Solving}. More specifically,~\citet{Daskalakis-2018-Limit} studied minimax optimization (or zero-sum games) and show that the stable limit points of GDA are not necessarily Nash equilibria. \citet{Adolphs-2018-Local} and~\citet{Mazumdar-2019-Finding} proposed Hessian-based algorithms whose stable fixed points are exactly Nash equilibria. On the other hand,~\citet{Balduzzi-2018-Mechanics} developed a new symplectic gradient adjustment (SGA) algorithm for finding stable fixed points in potential games and Hamiltonian games. \citet{Heusel-2017-Gans} proposed two-timescale GDA and show that Nash equilibria are stable fixed points of the continuous limit of two-timescale GDA under certain strong conditions. All of the existing convergence results are either local or asymptotic and can not be extended to cover our results in a nonconvex-concave setting. Very recently,~\citet{Mertikopoulos-2019-Optimistic} and~\citet{Lin-2018-Solving} provide nonasymptotic guarantees for a special class of nonconvex-nonconcave minimax problems under variational stability and the Minty condition. However, while both of these two conditions must hold in convex-concave setting, they do not necessarily hold in nonconvex-(strongly)-concave problem. 

\paragraph{Online learning setting.} From the online learning perspective, it is crucial to understand if the proposed algorithm achieves no-regret property. For example, the optimistic algorithm~\citep{Daskalakis-2018-Last} is a no-regret algorithm, while the extragradient algorithm~\citep{Mertikopoulos-2019-Optimistic} is not. In comparing limit behavior of zero-sum game dynamics,~\citet{Bailey-2018-Multiplicative} showed that the multiplicative weights update has similar property as GDA and specified the necessity of introducing the optimistic algorithms to study the last-iterate convergence.


\section{Preliminaries}\label{sec:prelim}
We recall basic definitions for smooth functions.
\begin{definition}
A function $f$ is $L$-Lipschitz if for $\forall \x, \x'$, we have $\left\| f(\x) -  f (\x')\right\| \leq L\left\|\x-\x'\right\|$.
\end{definition} 
\begin{definition}
A function $f$ is $\ell$-smooth if for $\forall \x, \x'$, we have $\left\|\grad f(\x) - \grad f (\x')\right\| \leq \ell\left\|\x-\x'\right\|$.
\end{definition}

Recall that the minimax problem~\eqref{prob:main} is equivalent to minimizing a function $\Phi(\cdot) = \max_{\y \in \YCal} f(\cdot, \y)$. For nonconvex-concave minimax problems in which $f(\x, \cdot)$ is concave for each $\x \in \br^m$, the maximization problem $\max_{\y \in \YCal} f(\x, \y)$ can be solved efficiently and provides useful information about $\Phi$. However, it is still NP hard to find the global minimum of $\Phi$ in general since $\Phi$ is nonconvex. 

\paragraph{Objectives in this paper.} We start by defining local surrogate for the global minimum of $\Phi$. A common surrogate in nonconvex optimization is the notion of stationarity, which is appropriate if $\Phi$ is differentiable.
\begin{definition}\label{def:nsc-stationary}
A point $\x$ is an $\epsilon$-\emph{stationary point} ($\epsilon \geq 0$) of a differentiable function $\Phi$ if $\|\grad\Phi(\x)\| \leq \epsilon$. If $\epsilon = 0$, then $\x$ is a stationary point. 
\end{definition}
Definition~\ref{def:nsc-stationary} is sufficient for nonconvex-strongly-concave minimax problem since $\Phi(\cdot) = \max_{\y \in \YCal} f(\cdot, \y)$ is differentiable in that setting. In contrast, a function $\Phi$ is not necessarily differentiable for general nonconvex-concave minimax problem even if $f$ is Lipschitz and smooth. A weaker condition that we make use of is the following.
\begin{definition}\label{def:nc-weak-convex}
A function $\Phi$ is $\ell$-\emph{weakly convex} if a function $\Phi(\cdot) + (\ell/2)\|\cdot\|^2$ is convex.
\end{definition}
For a $\ell$-weakly convex function $\Phi$, the subdifferential $\partial\Phi$ is uniquely determined by the subdifferential of $\Phi + (\ell/2)\|\cdot\|^2$. Thus, a naive measure of approximate stationarity can be defined as a point $\x \in \br^m$ such that at least one subgradient is small: $\min_{\xi \in \partial \Phi(\x)} \|\xi\| \leq \epsilon$. However, this notion of stationarity can be very restrictive when optimizing nonsmooth functions. For example, when $\Phi(\cdot) = |\cdot|$ is a one-dimensional function, an $\epsilon$-stationary point is zero for all $\epsilon \in [0, 1)$. This means that finding an approximate stationary point under this notion is as difficult as solving the problem exactly. In respond to this issue,~\citet{Davis-2019-Stochastic} propose an alternative notion of stationarity based on the Moreau envelope.  This has become recognized as standard for optimizing a weakly convex function.
\begin{definition}
A function $\Phi_\lambda: \br^m \rightarrow \br$ is the Moreau envelope of $\Phi$ with a positive parameter $\lambda>0$ if $\Phi_\lambda(\x) = \min_\w \Phi(\w) + (1/2\lambda)\|\w - \x\|^2$ for each $\x \in \br^m$. 
\end{definition}
\begin{lemma}\label{Lemma:nc-moreau-envelope}
If $f$ is $\ell$-smooth and $\YCal$ is bounded, the Moreau envelope $\Phi_{1/2\ell}$ of $\Phi(\cdot) = \max_{\y \in \YCal} f(\cdot, \y)$ is differentiable with $\grad\Phi_{1/2\ell}(\cdot) = 2\ell(\cdot - \prox_{\Phi/2\ell}(\cdot))$.
\end{lemma}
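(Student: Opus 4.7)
The plan is to treat differentiability, $\ell$-smoothness, and $\ell$-strong convexity as consequences of the standard Moreau-envelope machinery applied to an $\ell$-weakly convex $\Phi$. Accordingly, my first step is to verify this weak convexity hypothesis. Since $f$ is $\ell$-smooth, for every fixed $\y \in \YCal$ the map $\x \mapsto f(\x,\y) + (\ell/2)\|\x\|^2$ has convex graph by the standard descent-lemma argument. Because $\YCal$ is bounded and $f$ is continuous, the pointwise supremum $\Phi(\cdot) + (\ell/2)\|\cdot\|^2 = \sup_{\y \in \YCal}\bigl\{f(\cdot,\y) + (\ell/2)\|\cdot\|^2\bigr\}$ is finite and, as a supremum of convex functions, convex; hence $\Phi$ is $\ell$-weakly convex in the sense of Definition~\ref{def:nc-weak-convex}.

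For differentiability, I would analyze the proximal subproblem
\[
\hat\w(\x) := \argmin_{\w}\; \Phi(\w) + \ell\|\w - \x\|^2,
\]
whose optimal value equals $\Phi_{1/2\ell}(\x)$. The quadratic penalty $\ell\|\w-\x\|^2$ is $2\ell$-strongly convex in $\w$, so after absorbing the $\ell$-weak convexity of $\Phi$ the inner objective is $\ell$-strongly convex in $\w$; therefore $\hat\w(\x)$ exists and is unique. A standard envelope/Danskin argument (applicable because the outer variable enters only through the smooth quadratic penalty) then yields that $\Phi_{1/2\ell}$ is differentiable with the familiar identity
\[
\nabla \Phi_{1/2\ell}(\x) = 2\ell\bigl(\x - \hat\w(\x)\bigr).
\]

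For smoothness and convexity, I would switch to a conjugate-duality representation. Define $h(\w) := \Phi(\w) + \ell\|\w\|^2 = \bigl[\Phi(\w) + (\ell/2)\|\w\|^2\bigr] + (\ell/2)\|\w\|^2$; the bracket is convex by Step 1 and the tail is $\ell$-strongly convex, so $h$ is $\ell$-strongly convex. Completing the square inside the Moreau definition gives the exact identity
\[
\Phi_{1/2\ell}(\x) = \ell\|\x\|^2 - h^{*}(2\ell\,\x),
\]
where $h^{*}$ is the Fenchel conjugate of $h$. Because $h$ is $\ell$-strongly convex, $h^{*}$ is convex with $(1/\ell)$-Lipschitz gradient. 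Differentiating the identity then yields both an upper bound (giving smoothness) and a lower bound (giving convexity/strong convexity) on the Hessian of $\Phi_{1/2\ell}$ in terms of the spectral bounds on $\nabla^{2} h^{*}$.

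The main obstacle will be pinning down the exact moduli asserted by the lemma. A brute-force use of the spectral bounds $0 \preceq \nabla^{2} h^{*} \preceq (1/\ell)\,I$ inside the conjugate identity produces smoothness and convexity constants of order $\ell$, but not the precise value $\ell$. To reach the sharp form, I would instead track the prox mapping $\hat\w$ directly: show that under $\lambda = 1/(2\ell)$ it is a contraction with an explicit Lipschitz rate derived from the $\ell$-strong convexity of the inner objective, then feed this back into the gradient formula $\nabla \Phi_{1/2\ell}(\x) = 2\ell(\x - \hat\w(\x))$. This constant-tracking step, and in particular the $\ell$-strong convexity assertion, is where I expect the argument to require the most care.
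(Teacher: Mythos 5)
Your opening step (weak convexity of $\Phi$ as a supremum of the convex functions $f(\cdot,\y)+(\ell/2)\|\cdot\|^2$), the well-posedness of the prox subproblem, and the conjugate identity $\Phi_{1/2\ell}(\x)=\ell\|\x\|^2-h^{*}(2\ell\x)$ with $h=\Phi+\ell\|\cdot\|^2$ are all correct, and together they do deliver differentiability, the gradient formula $\nabla\Phi_{1/2\ell}(\x)=2\ell(\x-\prox_{\Phi/2\ell}(\x))$, and $O(\ell)$-smoothness. This is a more self-contained route than the paper's, which establishes convexity of $\Phi+(\ell/2)\|\cdot\|^2$ via Danskin's theorem and then simply cites \citet[Lemma~2.2]{Davis-2019-Stochastic} for differentiability and smoothness of the envelope. (One small caution: Danskin-type arguments are for maxima over compact index sets, so for the inner \emph{minimization} over all of $\br^m$ you should get differentiability directly from the conjugate identity — $h^{*}$ is differentiable because $h$ is strongly convex — rather than from an envelope theorem.)

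The genuine gap is the $\ell$-strong convexity claim: $\Phi_{1/2\ell}$ is \emph{not} strongly convex in general, and no amount of constant-tracking in your final step can rescue this. Your own identity shows why. Since $h$ is only guaranteed $\ell$-strongly convex, the best available bound is
\begin{equation*}
\nabla^2\Phi_{1/2\ell}(\x) \;=\; 2\ell I - 4\ell^2\,\nabla^2 h^{*}(2\ell\x) \;\succeq\; 2\ell I - 4\ell I \;=\; -2\ell I,
\end{equation*}
which is weak convexity, not convexity, and this is tight: for $f\equiv 0$ (which is $\ell$-smooth with $\YCal$ any bounded set) one gets $\Phi_{1/2\ell}\equiv 0$, and for $f(\x,\y)=-(\ell/2)\|\x\|^2$ one computes $\Phi_{1/2\ell}(\x)=-\ell\|\x\|^2$, which is strictly concave. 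The lemma statement is misworded on this point: the $\ell$-strong convexity belongs to the inner proximal objective $\w\mapsto\Phi(\w)+\ell\|\w-\x\|^2$ (which is what makes $\prox_{\Phi/2\ell}$ single-valued and the envelope well-defined), not to the envelope itself. The paper's own expanded proof (Lemma~\ref{Lemma:nc-moreau-envelope-complete}) proves exactly that and never asserts strong convexity of $\Phi_{1/2\ell}$; it replaces that clause by the quadratic upper bound $\Phi_{1/2\ell}(\x')-\Phi_{1/2\ell}(\x)-(\x'-\x)^\top\nabla\Phi_{1/2\ell}(\x)\le(\ell/2)\|\x'-\x\|^2$. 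You should redirect your final step to prove strong convexity of the prox subproblem (which you have already done in your differentiability step) and drop the attempt to lower-bound $\nabla^2\Phi_{1/2\ell}$ by $\ell I$.
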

An alternative measure of approximate stationarity of $\Phi(\cdot) = \max_{\y \in \YCal} f(\cdot, \y)$ can be defined as a point $\x$ such that the norm of the gradient of Moreau envelope is small: $\|\grad\Phi_{1/2\ell}\| \leq \epsilon$. That being said, 
\begin{definition}\label{def:nc-stationary}
A point $\x$ is an $\epsilon$-stationary point of a $\ell$-weakly convex function $\Phi$ if $\|\grad\Phi_{1/2\ell}(\x)\| \leq \epsilon$. If $\epsilon = 0$, then $\x$ is a stationary point.  
\end{definition}
Although Definition~\ref{def:nc-stationary} is based on the Moreau envelopes, it also connects to the function $\Phi$ as follows. 
\begin{lemma}\label{Lemma:nc-stationary}
If $\x$ is an $\epsilon$-stationary point of a $\ell$-weakly convex function $\Phi$ (Definition \ref{def:nc-stationary}), there exists $\hat{\x} \in \br^m$ such that $\min_{\xi \in \partial \Phi(\hat{\x})} \|\xi\| \leq \epsilon$ and $\|\x - \hat{\x}\| \leq \epsilon/2\ell$.  
\end{lemma}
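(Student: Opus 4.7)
The plan is to read off everything from the variational problem that defines the Moreau envelope. Take $\lambda = 1/(2\ell)$ and introduce the proximal point
\[
\hat{\x} \;:=\; \argmin_{\w \in \br^m}\Bigl\{\Phi(\w) + \ell\,\|\w - \x\|^2\Bigr\}.
\]
This minimizer exists and is unique because $\Phi + \ell\|\cdot\|^2$ is convex (by $\ell$-weak convexity of $\Phi$) and the extra quadratic $\ell\|\w-\x\|^2$ makes the whole objective $\ell$-strongly convex in $\w$. In particular $\Phi_{1/(2\ell)}(\x)$ equals the value of this minimum.

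The first step is to write down the first-order optimality condition for $\hat{\x}$. Since the objective is a sum of the subdifferentiable function $\Phi$ and a smooth quadratic, we obtain
\[
0 \;\in\; \partial\Phi(\hat{\x}) + 2\ell(\hat{\x} - \x),
\qquad\text{i.e.}\qquad
2\ell(\x - \hat{\x}) \;\in\; \partial\Phi(\hat{\x}).
\]
The second step is to identify this vector with the gradient of the Moreau envelope at $\x$. This is the standard Moreau-envelope calculus that already underlies Lemma~\ref{Lemma:nc-moreau-envelope}: differentiating the value function $\Phi_{1/(2\ell)}$ using the envelope theorem (applicable because the strongly convex inner problem has a unique solution) gives
\[
\grad \Phi_{1/(2\ell)}(\x) \;=\; 2\ell(\x - \hat{\x}).
\]

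Combining the two identities yields the lemma immediately. On the one hand, $2\ell(\x-\hat{\x})$ is a subgradient of $\Phi$ at $\hat{\x}$ whose norm equals $\|\grad\Phi_{1/(2\ell)}(\x)\| \le \epsilon$, so $\min_{\xi\in\partial\Phi(\hat{\x})}\|\xi\| \le \epsilon$. On the other hand,
\[
\|\x - \hat{\x}\| \;=\; \frac{1}{2\ell}\,\bigl\|\grad\Phi_{1/(2\ell)}(\x)\bigr\| \;\le\; \frac{\epsilon}{2\ell}.
\]

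The only non-routine point is justifying the gradient formula for $\Phi_{1/(2\ell)}$ in the weakly (not fully) convex case; but strong convexity of the regularized objective $\Phi(\cdot)+\ell\|\cdot-\x\|^2$ makes the proximal map single-valued and Lipschitz, which is exactly what a Danskin/envelope-theorem argument needs, and this has already been handled in Lemma~\ref{Lemma:nc-moreau-envelope}. Everything else is a direct substitution, so I expect the proof to be essentially three lines.
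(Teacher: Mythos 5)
Your proof is correct and follows essentially the same route as the paper's: define $\hat{\x}$ as the proximal point $\prox_{\Phi/2\ell}(\x)$, use the optimality condition to get $2\ell(\x-\hat{\x})\in\partial\Phi(\hat{\x})$, and invoke the gradient formula $\grad\Phi_{1/2\ell}(\x)=2\ell(\x-\hat{\x})$ from Lemma~\ref{Lemma:nc-moreau-envelope}. The only differences are cosmetic (you spell out the strong convexity of the regularized objective, which the paper relegates to the proof of Lemma~\ref{Lemma:nc-moreau-envelope}).
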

Lemma~\ref{Lemma:nc-stationary} shows that an $\epsilon$-stationary point defined by Definition~\ref{def:nc-stationary} can be interpreted as the relaxation or surrogate for $\min_{\xi \in \partial \Phi(\x)} \|\xi\| \leq \epsilon$. In particular, if a point $\x$ is an $\epsilon$-stationary point of an $\ell$-weakly convex function $\Phi$, then $\x$ is close to a point $\hat{\x}$ which has at least one small subgradient. 
\begin{remark}
We remark that our notion of stationarity is natural in real scenarios. Indeed, many applications arising from adversarial learning can be formulated as the minimax problem~\eqref{prob:main}, and, in this setting, $\x$ is the classifier while $\y$ is the adversarial noise for the data. Practitioners are often interested in finding a robust classifier $\x$ instead of recovering the adversarial noise $\y$. Any stationary point of the function $\Phi(\cdot) = \max_{\y \in \YCal} f(\cdot, \y)$ corresponds precisely to a robust classifier that achieves better classification error. 
\end{remark}
\begin{remark}
There are also other notions of stationarity based on $\grad f$ are proposed for nonconvex-concave minimax problems in the literature~\citep{Lu-2019-Hybrid, Nouiehed-2019-Solving}. However, as pointed by~\citet{Thekumparampil-2019-Efficient},  these notions are weaker than that defined in Definition~\ref{def:nsc-stationary} and~\ref{def:nc-stationary}. For the sake of completeness, we specify the relationship between our notion of stationarity and other notions in Proposition~\ref{prop:criterion-nsc} and~\ref{prop:criterion-nc}. 
\end{remark}
\begin{algorithm}[!t]
\caption{Two-Timescale GDA}\label{Algorithm:GDA}
\begin{algorithmic}
\STATE \textbf{Input:} $(\x_0, \y_0)$, stepsizes $(\eta_\x, \eta_\y)$. 
\FOR{$t = 1, 2, \ldots, T$}
\STATE $\x_t \leftarrow \x_{t-1} - \eta_\x \gradx f(\x_{t-1}, \y_{t-1})$,
\STATE $\y_t \leftarrow \proj_\YCal\left(\y_{t-1} + \eta_\y \grady f(\x_{t-1}, \y_{t-1})\right)$. 
\ENDFOR
\STATE Randomly draw $\hat{\x}$ from $\{\x_t\}_{t=1}^T$ at uniform. 
\STATE \textbf{Return:} $\hat{\x}$. 
\end{algorithmic}
\end{algorithm}
\begin{algorithm}[!t]
\caption{Two-Timescale SGDA}\label{Algorithm:SGDA}
\begin{algorithmic}
\STATE \textbf{Input:} $(\x_0, \y_0)$, stepsizes $(\eta_\x, \eta_\y)$, batch size $M$.  
\FOR{$t = 1, 2, \ldots, T$}
\STATE Draw a collection of i.i.d. data samples $\{\xi_i\}_{i=1}^M$. 
\STATE $\x_t \leftarrow \x_{t-1} - \eta_\x\left(\frac{1}{M}\sum_{i=1}^{M} G_\x(\x_{t-1}, \y_{t-1}, \xi_i)\right)$. 
\STATE $\y_t \leftarrow \proj_\YCal\left(\y_{t-1} + \eta_\y(\frac{1}{M} \sum_{i=1}^{M} G_\y(\x_{t-1}, \y_{t-1}, \xi_i))\right)$. 
\ENDFOR
\STATE Randomly draw $\hat{\x}$ from $\{\x_t\}_{t=1}^T$ at uniform. 
\STATE \textbf{Return:} $\hat{\x}$. 
\end{algorithmic}
\end{algorithm}

\section{Main Results}\label{sec:results}
In this section, we present complexity results for two-timescale GDA and SGDA in the setting of nonconvex-strongly-concave and nonconvex-concave minimax problems.

The algorithmic schemes that we study are extremely simple and are presented in Algorithm~\ref{Algorithm:GDA} and~\ref{Algorithm:SGDA}. In particular, each iteration comprises one (stochastic) gradient descent step over $\x$ with the stepsize $\eta_\x > 0$ and one (stochastic) gradient ascent step over $\y$ with the stepsize $\eta_\y > 0$. The choice of stepsizes $\eta_\x$ and $\eta_\y$ is crucial for the algorithms in both theoretical and practical senses. In particular, classical GDA and SGDA assume that $\eta_\x = \eta_\y$, and the last iterate is only known convergent in strongly convex-concave problems~\citep{Liang-2018-Interaction}. Even in convex-concave settings (or bilinear settings as special cases), GDA requires the assistance of averaging or other strategy~\citep{Daskalakis-2018-Last} to converge, otherwise, with fixed stepsize, the last iterate will always diverge and hit the constraint boundary eventually~\citep{Daskalakis-2017-Training, Mertikopoulos-2018-Cycles, Daskalakis-2018-Last}. In contrast, two-timescale GDA and SGDA ($\eta_\x \neq \eta_\y$) were shown to be locally convergent and practical in training GANs~\citep{Heusel-2017-Gans}. 

One possible reason for this phenomenon is that the choice of $\eta_\x \neq \eta_\y$ reflects the nonsymmetric nature of nonconvex-(strongly)-concave problems. For sequential problems such as robust learning, where the natural order of min-max is important (i.e., min-max is not equal to max-min), practitioners often prefer faster convergence for the inner max problem. Therefore, it is reasonable for us to choose $\eta_\x \ll \eta_\y$ rather than $\eta_\x = \eta_\y$. 

Finally, we make the assumption that the oracle $G = (G_\x, G_\y)$ is unbiased and has bounded variance. 
\begin{assumption}\label{Assumption:stoc-oracle}
The stochastic oracle $G$ satisfies
\begin{equation*}
\EE[G(\x, \y, \xi) - \grad f(\x, \y] = 0, \qquad \EE[\|G(\x, \y, \xi) - \grad f(\x, \y)\|^2] \leq \sigma^2. 
\end{equation*}
\end{assumption}

\subsection{Nonconvex-strongly-concave minimax problems}\label{sec:results_sc}
We present the complexity results for two-time-scale GDA and SGDA in the setting of nonconvex-strongly-concave minimax problems. The following assumption is made throughout this subsection. 
\begin{assumption}\label{Assumption:nsc} 
The objective function and constraint set $\left(f: \br^{m+n} \rightarrow \br, \ \YCal \subseteq \br^n\right)$ satisfy
\begin{enumerate}
\item $f$ is $\ell$-smooth and $f(\x, \cdot)$ is $\mu$-strongly concave. 
\item $\YCal$ is a convex and bounded set with a diameter $D \geq 0$.
\end{enumerate}
\end{assumption}
Let $\kappa =\ell/\mu$ denote the condition number and define 
\begin{equation*}
\Phi(\cdot) = \max_{\y \in \YCal} f(\cdot, \y), \quad \y^\star(\cdot) = \argmax_{\y \in \YCal} f(\cdot, \y). 
\end{equation*}
We present a lemma on the structure of the function $\Phi$ in the nonconvex-strongly-concave setting. 
\begin{lemma}\label{Lemma:nsc-structure} 
Under Assumption~\ref{Assumption:nsc}, $\Phi(\cdot)$ is $(\ell + \kappa\ell)$-smooth with $\grad\Phi(\cdot) = \gradx f(\cdot, \y^\star(\cdot))$. Also, $\y^\star(\cdot)$ is $\kappa$-Lipschitz. 
\end{lemma}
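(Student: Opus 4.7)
The plan is to prove the three claims in the natural order: first Lipschitzness of $\y^\star(\cdot)$, then the Danskin-type identity $\grad\Phi(\x) = \gradx f(\x,\y^\star(\x))$, and finally smoothness of $\Phi$, which falls out by combining the first two with $\ell$-smoothness of $f$. Note that $\y^\star(\x)$ is well-defined and unique for every $\x$ because $f(\x,\cdot)$ is $\mu$-strongly concave on the nonempty convex bounded set $\YCal$.

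First I would establish that $\y^\star(\cdot)$ is $\kappa$-Lipschitz. The standard route is to write the first-order optimality condition at $\y^\star(\x_1)$ and $\y^\star(\x_2)$: for any $\y\in\YCal$,
\begin{equation*}
\langle \grady f(\x_i,\y^\star(\x_i)),\,\y-\y^\star(\x_i)\rangle \le 0, \qquad i=1,2.
\end{equation*}
Instantiating with $\y=\y^\star(\x_2)$ in the first and $\y=\y^\star(\x_1)$ in the second and adding yields
$\langle \grady f(\x_2,\y^\star(\x_2))-\grady f(\x_1,\y^\star(\x_1)),\,\y^\star(\x_2)-\y^\star(\x_1)\rangle\ge 0$.
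Adding and subtracting $\grady f(\x_1,\y^\star(\x_2))$, and invoking $\mu$-strong concavity of $f(\x_1,\cdot)$ to bound the inner-product of the two $\y^\star$-terms from above by $-\mu\|\y^\star(\x_2)-\y^\star(\x_1)\|^2$, yields
\begin{equation*}
\mu\|\y^\star(\x_2)-\y^\star(\x_1)\|^2 \le \langle \grady f(\x_2,\y^\star(\x_2))-\grady f(\x_1,\y^\star(\x_2)),\,\y^\star(\x_2)-\y^\star(\x_1)\rangle.
\end{equation*}
An application of Cauchy--Schwarz and $\ell$-smoothness of $f$ on the right hand side gives $\mu\|\y^\star(\x_2)-\y^\star(\x_1)\|\le\ell\|\x_2-\x_1\|$, i.e., the Lipschitz constant $\kappa=\ell/\mu$.

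Next I would establish the Danskin-type identity $\grad\Phi(\x)=\gradx f(\x,\y^\star(\x))$. Because the maximizer $\y^\star(\x)$ is unique and varies continuously (by the Lipschitz bound just derived) and $f$ is $C^1$, this is a textbook consequence of Danskin's theorem; alternatively, one can argue directly by showing that $\gradx f(\x,\y^\star(\x))$ is a valid supergradient and subgradient of $\Phi$ at $\x$ using the envelope inequalities $\Phi(\x')\ge f(\x',\y^\star(\x))$ and $\Phi(\x')\le f(\x',\y^\star(\x'))$ together with a first-order expansion of $f(\cdot,\y^\star(\x))$. Either route is routine once uniqueness of $\y^\star(\x)$ is in hand.

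Finally, smoothness of $\Phi$ follows by a triangle-inequality argument on the explicit formula: for any $\x_1,\x_2$,
\begin{equation*}
\|\grad\Phi(\x_1)-\grad\Phi(\x_2)\| \le \|\gradx f(\x_1,\y^\star(\x_1))-\gradx f(\x_2,\y^\star(\x_1))\| + \|\gradx f(\x_2,\y^\star(\x_1))-\gradx f(\x_2,\y^\star(\x_2))\|,
\end{equation*}
which by $\ell$-smoothness of $f$ and the $\kappa$-Lipschitzness of $\y^\star$ is at most $\ell\|\x_1-\x_2\|+\ell\kappa\|\x_1-\x_2\|=(\ell+\kappa\ell)\|\x_1-\x_2\|$. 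The only real subtlety in the whole argument is the Danskin identity, where one must be careful that $\y^\star(\x)$ is unique (which we already have from strong concavity) so that the $\max$ is differentiable in $\x$; everything else is a direct combination of strong concavity, smoothness, and the triangle inequality.
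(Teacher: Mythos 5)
Your proposal is correct and follows essentially the same route as the paper's own proof: the identical variational-inequality/monotonicity argument for the $\kappa$-Lipschitzness of $\y^\star(\cdot)$, Danskin's theorem for the identity $\grad\Phi(\x)=\gradx f(\x,\y^\star(\x))$, and the same triangle-inequality combination of $\ell$-smoothness with the Lipschitz bound to get the $(\ell+\kappa\ell)$ constant. No gaps.
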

Since $\Phi$ is differentiable, the notion of stationarity in Definition~\ref{def:nsc-stationary} is our target given only access to the (stochastic) gradient of $f$. Denote $\Delta_\Phi = \Phi(\x_0) - \min_\x \Phi(\x)$, we proceed to provide theoretical guarantees for two-timescale GDA and SGDA algorithms. 
\begin{theorem}[GDA]\label{Theorem:nsc-GDA-complexity-bound}
Under Assumption~\ref{Assumption:nsc} and letting the stepsizes be chosen as $\eta_\x = \Theta(1/\kappa^2\ell)$ and $\eta_\y = \Theta(1/\ell)$, the iteration complexity (also the gradient complexity) of Algorithm \ref{Algorithm:GDA} to return an $\epsilon$-stationary point is bounded by 
\begin{equation*}
O\left(\frac{\kappa^2\ell\Delta_\Phi + \kappa\ell^2 D^2}{\epsilon^2}\right).
\end{equation*}
\end{theorem}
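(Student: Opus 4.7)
The plan is to run a Lyapunov-style argument that combines the objective $\Phi(\x_t) = \max_{\y\in\YCal} f(\x_t,\y)$ with the squared tracking error $\delta_t := \|\y_t - \y^\star(\x_t)\|^2$, and then telescope. By Lemma~\ref{Lemma:nsc-structure}, $\Phi$ is $L_\Phi$-smooth with $L_\Phi = \ell(1+\kappa)$ and $\grad\Phi(\x) = \gradx f(\x,\y^\star(\x))$, so the only discrepancy between pure descent on $\Phi$ and what the algorithm actually does is that it moves along $\gradx f(\x_t,\y_t)$ instead of $\gradx f(\x_t,\y^\star(\x_t))$; the $\ell$-smoothness of $f$ bounds this discrepancy by $\ell\sqrt{\delta_t}$. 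Controlling $\delta_t$ on average is therefore the crux, and the requirement $\eta_\x = \Theta(1/(\kappa^2\ell))$, rather than the smoothness-driven $\Theta(1/(\kappa\ell))$, will be exactly what is needed so that the strongly concave $\y$-subproblem can track its moving maximizer.

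For the first step, I would apply the $L_\Phi$-smoothness descent inequality to $\Phi$ along $\x_{t+1} = \x_t - \eta_\x \gradx f(\x_t,\y_t)$, split $\gradx f(\x_t,\y_t) = \grad\Phi(\x_t) + (\gradx f(\x_t,\y_t)-\grad\Phi(\x_t))$, and use Young's inequality to obtain
$$\Phi(\x_{t+1}) - \Phi(\x_t) \;\leq\; -\tfrac{\eta_\x}{2}\|\grad\Phi(\x_t)\|^2 + \tfrac{\eta_\x\ell^2}{2}\delta_t + \tfrac{L_\Phi\eta_\x^2}{2}\|\gradx f(\x_t,\y_t)\|^2.$$
Using $\|\gradx f(\x_t,\y_t)\|^2 \leq 2\|\grad\Phi(\x_t)\|^2 + 2\ell^2\delta_t$ together with $\eta_\x \leq 1/(4L_\Phi)$ absorbs the $\|\grad\Phi\|^2$ cross term and leaves the clean one-step descent $\Phi(\x_{t+1}) - \Phi(\x_t) \leq -\tfrac{\eta_\x}{4}\|\grad\Phi(\x_t)\|^2 + \eta_\x\ell^2\delta_t$, which after telescoping reduces the whole problem to an average bound on $\delta_t$.

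The main obstacle is Step~2: bounding $\delta_t$ even though the target $\y^\star(\x_t)$ drifts every iteration. I would analyze the $\y$-update as projected gradient ascent on the \emph{fixed} $\mu$-strongly concave, $\ell$-smooth function $f(\x_{t-1},\cdot)$; with $\eta_\y = \Theta(1/\ell)$ this gives the standard contraction $\|\y_t - \y^\star(\x_{t-1})\|^2 \leq (1-\eta_\y\mu)\|\y_{t-1} - \y^\star(\x_{t-1})\|^2$. Combining this with the $\kappa$-Lipschitzness of $\y^\star$ from Lemma~\ref{Lemma:nsc-structure} via the triangle inequality and Young's inequality with parameter $\beta = \Theta(\eta_\y\mu)$ yields the one-step recursion
$$\delta_t \;\leq\; \bigl(1-\tfrac{\eta_\y\mu}{2}\bigr)\,\delta_{t-1} + \frac{C\kappa^2\eta_\x^2}{\eta_\y\mu}\,\|\gradx f(\x_{t-1},\y_{t-1})\|^2.$$
Summing this geometric recursion and using $\delta_0 \leq D^2$ gives $\sum_{t=0}^{T-1}\delta_t \leq 2D^2/(\eta_\y\mu) + 2C\kappa^2\eta_\x^2/(\eta_\y\mu)^2\sum_{t=0}^{T-1}\|\gradx f(\x_t,\y_t)\|^2$.

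Finally, plugging $\eta_\y\mu=\Theta(1/\kappa)$ and $\eta_\x^2\kappa^4 = \Theta(1/\ell^2)$ into the $\delta_t$-sum, applying $\|\gradx f\|^2 \leq 2\|\grad\Phi\|^2 + 2\ell^2\delta_t$ once more to close the loop and absorb $\sum\delta_t$ on the left, and substituting the resulting bound $\sum\delta_t = O(\kappa D^2) + O(\ell^{-2})\sum\|\grad\Phi\|^2$ into the telescoped descent inequality yields, after shrinking the implicit constants in $\eta_\x$ enough to absorb the remaining $\|\grad\Phi\|^2$ cross term,
$$\frac{1}{T}\sum_{t=0}^{T-1}\|\grad\Phi(\x_t)\|^2 \;\leq\; O\!\left(\frac{\Delta_\Phi}{T\eta_\x} + \frac{\kappa\ell^2 D^2}{T}\right).$$
Since $\hat\x$ is drawn uniformly from $\{\x_t\}_{t=1}^T$, this average equals $\E\|\grad\Phi(\hat\x)\|^2$; requiring both summands on the right to be at most $\epsilon^2$ produces $T = O\!\left((\kappa^2\ell\Delta_\Phi + \kappa\ell^2 D^2)/\epsilon^2\right)$, matching the claim. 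Each iteration uses $O(1)$ gradient evaluations, so iteration and gradient complexity coincide.
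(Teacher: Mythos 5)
Your proposal is correct and follows essentially the same route as the paper's proof: a smoothness-based descent inequality on $\Phi$ with the error controlled by $\delta_t = \|\y_t - \y^\star(\x_t)\|^2$, a linear contraction recursion for $\delta_t$ driven by the $\kappa$-Lipschitzness of $\y^\star(\cdot)$ and the slow movement of $\x_t$, and a summation of the resulting geometric series with $\eta_\x = \Theta(1/\kappa^2\ell)$ chosen small enough to absorb the $\sum_t \delta_t$ and $\sum_t\|\grad\Phi(\x_t)\|^2$ feedback terms. The only differences from the paper (summing the recursion directly rather than unrolling it, and converting $\|\gradx f\|^2$ to $\|\grad\Phi\|^2 + \ell^2\delta_t$ at the summed level rather than per iteration) are cosmetic.
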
 
\begin{theorem}[SGDA]\label{Theorem:nsc-SGDA-complexity-bound}
Under Assumption~\ref{Assumption:stoc-oracle} and~\ref{Assumption:nsc} and letting the stepsizes $\eta_\x, \eta_\y$ be chosen as the same in Theorem~\ref{Theorem:nsc-GDA-complexity-bound} with the batch size $M = \Theta(\max\{1, \kappa\sigma^2\epsilon^{-2}\})$, the iteration complexity of Algorithm \ref{Algorithm:SGDA} to return an $\epsilon$-stationary point is bounded by 
\begin{equation*}
O\left(\frac{\kappa^2\ell\Delta_\Phi + \kappa\ell^2 D^2}{\epsilon^2}\right), 
\end{equation*}
which gives the total stochastic gradient complexity: 
\begin{equation*}
O\left(\frac{\kappa^2\ell\Delta_\Phi + \kappa\ell^2 D^2}{\epsilon^2}\max\left\{1, \ \frac{\kappa\sigma^2}{\epsilon^2}\right\}\right). 
\end{equation*}
\end{theorem}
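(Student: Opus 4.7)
The plan is to reduce the stochastic analysis to the deterministic GDA analysis (Theorem~\ref{Theorem:nsc-GDA-complexity-bound}) by tracking conditional expectations and using the mini-batch size $M$ to knock down the variance contributions to the level $O(\epsilon^2)$. The central objects are the envelope value $\Phi(\x_t)$ and the squared tracking error $\delta_t = \|\y_t - \y^\star(\x_t)\|^2$. I will combine them into a Lyapunov-type potential and show that in expectation it decreases by an amount proportional to $\E\|\grad\Phi(\x_t)\|^2$ per step, up to a variance remainder that is controlled by $M$.

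First, I would apply the $L$-smoothness of $\Phi$ from Lemma~\ref{Lemma:nsc-structure} (with $L=(1+\kappa)\ell$) to the update $\x_{t+1} = \x_t - \eta_\x \hat{g}^\x_t$, where $\hat{g}^\x_t = \tfrac{1}{M}\sum_i G_\x(\x_t,\y_t,\xi_i)$. Conditioning on $\FCal_t$ and using Assumption~\ref{Assumption:stoc-oracle}, the cross term becomes $-\eta_\x\langle \grad\Phi(\x_t), \gradx f(\x_t,\y_t)\rangle$ and the quadratic term is bounded by $\tfrac{L\eta_\x^2}{2}\bigl(\|\gradx f(\x_t,\y_t)\|^2 + \sigma^2/M\bigr)$. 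Writing $\grad\Phi(\x_t) = \gradx f(\x_t,\y^\star(\x_t))$, applying Young's inequality, and using the $\ell$-Lipschitzness of $\gradx f(\x_t,\cdot)$, I expect to obtain
$$\E[\Phi(\x_{t+1})\mid\FCal_t] \leq \Phi(\x_t) - \tfrac{\eta_\x}{4}\|\grad\Phi(\x_t)\|^2 + c_1 \eta_\x \ell^2 \delta_t + c_2 L\eta_\x^2 \sigma^2/M.$$
Second, for the $\y$-update I would use the non-expansiveness of $\proj_\YCal$ together with the one-step contraction of stochastic projected gradient ascent on the $\mu$-strongly concave map $f(\x_t,\cdot)$ at stepsize $\eta_\y=\Theta(1/\ell)$, giving $\E[\|\y_{t+1}-\y^\star(\x_t)\|^2\mid\FCal_t] \leq (1-\eta_\y\mu/2)\|\y_t-\y^\star(\x_t)\|^2 + c_3\eta_\y^2\sigma^2/M$. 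Then, by Young's inequality and the $\kappa$-Lipschitzness of $\y^\star(\cdot)$ (Lemma~\ref{Lemma:nsc-structure}),
$$\E[\delta_{t+1}\mid\FCal_t] \leq (1 - c_4 \eta_\y\mu)\delta_t + c_5 \kappa^2 \eta_\x^2\bigl(\|\gradx f(\x_t,\y_t)\|^2 + \sigma^2/M\bigr) + c_3\eta_\y^2 \sigma^2/M.$$

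Third, I would form the potential $V_t = \Phi(\x_t) + \lambda \delta_t$ with a $\lambda$ of order $\ell/\eta_\y\mu$ so that the $\ell^2\delta_t$ penalty from the $\Phi$-descent is dominated by the $-\lambda\eta_\y\mu\delta_t$ contraction, and the $\kappa^2\eta_\x^2$ term from the $\delta$-recursion is swallowed by the $\|\grad\Phi(\x_t)\|^2$ term thanks to the scale separation $\eta_\x=\Theta(1/\kappa^2\ell)=\Theta(\eta_\y/\kappa^2)$. Telescoping over $t=0,\ldots,T-1$, taking total expectation, and using $\delta_0 \leq D^2$, $V_T \geq \min_\x \Phi(\x)$ yields
$$\frac{1}{T}\sum_{t=0}^{T-1}\E\|\grad\Phi(\x_t)\|^2 \lesssim \frac{\Delta_\Phi + \lambda D^2}{T\eta_\x} + \eta_\x L \sigma^2/M + \lambda \eta_\y^2 \sigma^2/(\eta_\x M).$$
Plugging in $\eta_\x=\Theta(1/\kappa^2\ell)$, $\eta_\y=\Theta(1/\ell)$, $\lambda=\Theta(\kappa\ell)$, and choosing $M=\Theta(\max\{1,\kappa\sigma^2/\epsilon^2\})$ makes each variance term $O(\epsilon^2)$, and the first term is $O(\epsilon^2)$ once $T=\Omega((\kappa^2\ell\Delta_\Phi+\kappa\ell^2 D^2)/\epsilon^2)$. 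Since the output $\hat\x$ is uniform over the iterates, this yields the iteration bound, and multiplying by $M$ gives the stated stochastic gradient complexity.

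The main obstacle is the simultaneous control of two coupled noise channels in the $\delta$-recursion: the variance from the $\y$-step enters directly as $\eta_\y^2\sigma^2/M$, while the variance from the $\x$-step enters through the shift $\|\y^\star(\x_{t+1})-\y^\star(\x_t)\|^2 \leq \kappa^2\|\x_{t+1}-\x_t\|^2$ and therefore carries a factor of $\kappa^2$. The strong-concavity contraction rate is only $\Theta(1/\kappa)$ per step, so both noise terms must be absorbed against this modest rate; this forces the two-time-scale ratio $\eta_\x=O(\eta_\y/\kappa^2)$ and the mini-batch size $M=\Omega(\kappa\sigma^2/\epsilon^2)$ whose joint tuning is the delicate part of the argument and the key departure from the deterministic analysis.
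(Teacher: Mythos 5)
Your overall architecture is the same as the paper's: a descent inequality for $\Phi(\x_t)$ with an error term $\ell^2\delta_{t-1}$, a contraction recursion for $\delta_t=\EE[\|\y_t-\y^\star(\x_t)\|^2]$ at rate $1-\Theta(1/\kappa)$, and the observation that $\eta_\x=\Theta(\eta_\y/\kappa^2)$ plus $M=\Theta(\kappa\sigma^2\epsilon^{-2})$ closes the loop. (The paper unrolls the $\delta_t$ recursion as a geometric sum and swaps the order of summation rather than forming a potential $V_t=\Phi(\x_t)+\lambda\delta_t$, but these are equivalent bookkeeping devices.) However, your choice of the Lyapunov weight $\lambda$ is wrong by a factor of $\kappa^2$, and with the value you state the argument does not go through. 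The constraint from absorbing the $\ell^2\delta_t$ penalty is $\lambda\,\eta_\y\mu\gtrsim\eta_\x\ell^2$, i.e.\ $\lambda\gtrsim\eta_\x\ell^2/(\eta_\y\mu)=\Theta(\ell/\kappa)$ — not $\ell/(\eta_\y\mu)=\Theta(\kappa\ell)$ as you write. There is also a matching \emph{upper} constraint you do not check: the $\delta$-recursion feeds $+\lambda\cdot\Theta(\kappa^3\eta_\x^2)\|\grad\Phi(\x_t)\|^2$ back into the potential (note the exponent is $\kappa^3$, not your $\kappa^2$: preserving the contraction $(1+\alpha)(1-1/\kappa)\le 1-1/2\kappa$ forces $\alpha=\Theta(1/\kappa)$ in Young's inequality, so the shift term $\kappa^2\|\x_t-\x_{t-1}\|^2$ picks up an extra weight $1+1/\alpha=\Theta(\kappa)$; this is exactly the paper's Lemma~\ref{Lemma:nsc-key-neighborhood}). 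Absorbing that into $-\Theta(\eta_\x)\|\grad\Phi(\x_t)\|^2$ requires $\lambda\lesssim 1/(\kappa^3\eta_\x)=\Theta(\ell/\kappa)$. With your $\lambda=\Theta(\kappa\ell)$ the feedback term $\lambda\kappa^3\eta_\x^2=\Theta(1/\ell)$ exceeds $\eta_\x=\Theta(1/\kappa^2\ell)$ for $\kappa\ge 2$, so the coefficient of $\sum_t\EE\|\grad\Phi(\x_t)\|^2$ in the telescoped inequality is positive and the bound is vacuous.

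Even setting that aside, $\lambda=\Theta(\kappa\ell)$ ruins the stated constants in two further places: the initialization term becomes $\lambda D^2/(T\eta_\x)=\Theta(\kappa^3\ell^2D^2/T)$ rather than the claimed $\kappa\ell^2D^2/T$, and your third variance channel $\lambda\eta_\y^2\sigma^2/(\eta_\x M)=\Theta(\kappa^3\sigma^2/M)$ requires $M=\Omega(\kappa^3\sigma^2\epsilon^{-2})$ to reach $O(\epsilon^2)$, contradicting your own assertion that $M=\Theta(\kappa\sigma^2\epsilon^{-2})$ suffices. The fix is simply to take $\lambda=\Theta(\ell/\kappa)$, which simultaneously satisfies both the lower and upper constraints and reproduces the paper's bounds exactly (the $\y$-step noise then contributes $\lambda\sigma^2/(\ell^2M)\cdot\Theta(1/\eta_\x)=\Theta(\kappa\sigma^2/M)$, matching the paper's $13\kappa\sigma^2/M$ remainder). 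Your closing paragraph correctly identifies the two coupled noise channels and the $\Theta(1/\kappa)$ contraction rate as the crux, but the "delicate joint tuning" is precisely where the stated parameters fail.
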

\begin{remark}
First, two-timescale GDA and SGDA are guaranteed to find an $\epsilon$-stationary point of $\Phi(\cdot)$ within $O(\kappa^2\epsilon^{-2})$ gradient evaluations and $O(\kappa^3\epsilon^{-4})$ stochastic gradient evaluations, respectively. The ratio of stepsizes $\eta_\y/\eta_\x$ is required to be $\Theta(\kappa^2)$ due to the nonsymmetric nature of our problem (min-max is not equal to max-min). The quantity $O(\kappa^2)$ reflects an efficiency trade-off in the algorithm.

Furthermore, both of the algorithms are only guaranteed to visit an $\epsilon$-stationary point within a certain number of iterations and return $\hat{\x}$ which is drawn from $\{\x_t\}_{t=1}^T$ at uniform. This does not mean that the last iterate $\x_T$ is the $\epsilon$-stationary point. Such a scheme and convergence result are standard in nonconvex optimization for GD or SGD to find stationary points. In practice, one usually returns the iterate when the learning curve stops changing significantly. 

Finally, the minibatch size $M = \Theta(\epsilon^{-2})$ is necessary for the convergence property of two-timescale SGDA. Even though our proof technique can be extended to the purely stochastic setting ($M=1$), the complexity result becomes worse, i.e., $O(\kappa^3\epsilon^{-5})$. It remains open whether this gap can be closed or not and we leave it as  future work. 
\end{remark}

\subsection{Nonconvex-concave minimax problems}\label{sec:results_general}
We present the complexity results for two-timescale GDA and SGDA in the nonconvex-concave minimax setting. The following assumption is made throughout this subsection. 
\begin{assumption}\label{Assumption:nc}
The objective function and constraint set, $\left(f: \br^{m+n} \rightarrow \br, \ \YCal \subset \br^n\right)$ satisfy
\begin{enumerate}
\item $f$ is $\ell$-smooth and $f(\cdot, \y)$ is $L$-Lipschitz for each $\y \in \YCal$ and $f(\x, \cdot)$ is concave for each $\x \in \br^m$.
\item $\YCal$ is a convex and bounded set with a diameter $D \geq 0$. 
\end{enumerate}
\end{assumption}  
Since $f(\x, \cdot)$ is concave for each $\x \in \br^m$, the function $\Phi(\cdot) = \max_{\y \in \YCal} f(\cdot, \y)$ is possibly not differentiable. Fortunately, the following structural lemma shows that $\Phi$ is $\ell$-weakly convex and $L$-Lipschitz. 
\begin{lemma}\label{Lemma:nc-structure}
Under Assumption~\ref{Assumption:nc}, $\Phi(\cdot)$ is $\ell$-weakly convex and $L$-Lipschitz with $\gradx f(\cdot, \y^\star(\cdot)) \in \partial\Phi(\cdot)$ where $\y^\star(\cdot) \in \argmax_{\y \in \YCal} f(\cdot, \y)$. 
\end{lemma}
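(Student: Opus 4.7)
The lemma has three claims: (a) $\Phi$ is $L$-Lipschitz, (b) $\Phi$ is $\ell$-weakly convex, and (c) $\gradx f(\x,\y^\star(\x)) \in \partial\Phi(\x)$ for any $\y^\star(\x)\in\argmax_{\y\in\YCal}f(\x,\y)$. The plan is to treat each in turn and then tie the subgradient statement to weak convexity through Danskin-type reasoning applied to a convexified auxiliary function.

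For (a), I would fix $\x_1,\x_2$ and pick a maximizer $\y^\star(\x_1)\in\argmax_{\y\in\YCal}f(\x_1,\y)$, which exists because $\YCal$ is nonempty, convex, and bounded and $f(\x_1,\cdot)$ is concave (hence upper semicontinuous). Writing
\begin{equation*}
\Phi(\x_1)-\Phi(\x_2) \ \le \ f(\x_1,\y^\star(\x_1))-f(\x_2,\y^\star(\x_1)) \ \le \ L\|\x_1-\x_2\|,
\end{equation*}
and symmetrizing the roles of $\x_1,\x_2$ yields $|\Phi(\x_1)-\Phi(\x_2)|\le L\|\x_1-\x_2\|$.

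For (b), the key observation is that $\ell$-smoothness of $f$ implies that for every fixed $\y$ the partial gradient $\gradx f(\cdot,\y)$ is $\ell$-Lipschitz in $\x$. A standard consequence is that $\x\mapsto f(\x,\y)+(\ell/2)\|\x\|^2$ is convex for every $\y\in\YCal$. Taking the pointwise supremum over $\y\in\YCal$ preserves convexity, and since that supremum equals $\Phi(\x)+(\ell/2)\|\x\|^2$, the function $\Phi+(\ell/2)\|\cdot\|^2$ is convex, i.e.\ $\Phi$ is $\ell$-weakly convex in the sense of Definition~\ref{def:nc-weak-convex}.

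For (c), I would define the auxiliary convex function $g(\x):=\Phi(\x)+(\ell/2)\|\x\|^2=\max_{\y\in\YCal} h(\x,\y)$ where $h(\x,\y):=f(\x,\y)+(\ell/2)\|\x\|^2$. Each $h(\cdot,\y)$ is convex and differentiable in $\x$ with $\gradx h(\x,\y)=\gradx f(\x,\y)+\ell\x$. Danskin's theorem for convex maximum functions gives
\begin{equation*}
\partial g(\x) \ = \ \conv\bigl\{\gradx h(\x,\y):\y\in\argmax_{\y'\in\YCal}h(\x,\y')\bigr\},
\end{equation*}
and since the set of maximizers of $h(\x,\cdot)$ coincides with that of $f(\x,\cdot)$, for any $\y^\star(\x)\in\argmax_{\y\in\YCal}f(\x,\y)$ we obtain $\gradx f(\x,\y^\star(\x))+\ell\x \in \partial g(\x)$. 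Because $\partial\Phi$ is defined via $\partial g$ by subtracting $\ell\x$ (as noted after Definition~\ref{def:nc-weak-convex}), this gives the desired $\gradx f(\x,\y^\star(\x))\in\partial\Phi(\x)$.

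The main obstacle is step (c): one must be careful that the relevant notion of subdifferential in Definition~\ref{def:nc-weak-convex} matches the one for which Danskin's theorem applies. The cleanest route is to convexify first by adding $(\ell/2)\|\x\|^2$, invoke the convex version of Danskin (which requires only continuity of $\gradx h(\x,\y)$ jointly in $(\x,\y)$, compactness of $\YCal$, and concavity of $h(\x,\cdot)$ so that the argmax is a nonempty compact convex set), and then translate back. Parts (a) and (b) are essentially direct; the subgradient identification is where the topological assumptions on $\YCal$ and joint continuity of $\gradx f$ earn their keep.
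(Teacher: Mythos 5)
Your proof is correct and follows essentially the same route as the paper: convexify by adding $(\ell/2)\|\x\|^2$, observe the supremum of convex functions is convex, apply Danskin's theorem to identify $\gradx f(\x,\y^\star(\x))+\ell\x$ as a subgradient of the convexified function, and translate back. You also supply the short argument for $L$-Lipschitzness of $\Phi$, which the paper's proof leaves implicit.
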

Since $\Phi$ is $\ell$-weakly convex, the notion of stationarity in Definition~\ref{def:nc-stationary} is our target given only access to the (stochastic) gradient of $f$. Denote $\widehat{\Delta}_\Phi = \Phi_{1/2\ell}(\x_0) - \min_\x \Phi_{1/2\ell}(\x)$ and $\widehat{\Delta}_0 = \Phi(\x_0) - f(\x_0, \y_0)$, we present complexity results for two-timescale GDA and SGDA algorithms. 
\begin{theorem}[GDA]\label{Theorem:nc-GDA-complexity-bound}
Under Assumption~\ref{Assumption:nc} and letting the step sizes be chosen as $\eta_\x = \Theta(\epsilon^4/(\ell^3 L^2 D^2))$ and $\eta_\y = \Theta(1/\ell)$, the iteration complexity (also the gradient complexity) of Algorithm \ref{Algorithm:GDA} to return an $\epsilon$-stationary point is bounded by
\begin{equation*}
O\left(\frac{\ell^3 L^2 D^2\widehat{\Delta}_\Phi}{\epsilon^6} + \frac{\ell^3 D^2\widehat{\Delta}_0}{\epsilon^4}\right).
\end{equation*}
\end{theorem}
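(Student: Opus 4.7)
The plan is to show that Algorithm~\ref{Algorithm:GDA} performs approximate descent on the Moreau envelope $\Phi_{1/2\ell}$, with an extra error term driven by the primal gap $\delta_t := \Phi(\x_t) - f(\x_t, \y_t)$, and then to bound $\sum_t \delta_t$ via a sliding-window tracking argument for the slowly-changing concave inner problem. Following the standard weakly-convex analysis, I introduce the proximal point $\hat\x_t := \argmin_{\w} \Phi(\w) + \ell\|\w - \x_t\|^2$, so that $\grad\Phi_{1/2\ell}(\x_t) = 2\ell(\x_t - \hat\x_t)$ and $\Phi_{1/2\ell}(\x_t) = \Phi(\hat\x_t) + \ell\|\hat\x_t - \x_t\|^2$ by Lemma~\ref{Lemma:nc-moreau-envelope}.

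The first step is a one-step descent on $\Phi_{1/2\ell}$. Since $f(\cdot, \y_t)$ is $\ell$-weakly convex and $\Phi(\hat\x_t) \geq f(\hat\x_t, \y_t)$, combining a quadratic upper bound with the defining inequality $\Phi(\hat\x_t) - \Phi(\x_t) \leq -\ell\|\hat\x_t - \x_t\|^2$ yields the key inner-product bound
\[
\langle \gradx f(\x_t, \y_t), \hat\x_t - \x_t\rangle \leq -\tfrac{\ell}{2}\|\hat\x_t - \x_t\|^2 + \delta_t.
\]
Expanding $\|\hat\x_t - \x_{t+1}\|^2$ via the $\x$-update, using $\|\gradx f\| \leq L$, and applying $\Phi_{1/2\ell}(\x_{t+1}) \leq \Phi(\hat\x_t) + \ell\|\hat\x_t - \x_{t+1}\|^2$ give
\[
\Phi_{1/2\ell}(\x_{t+1}) \leq \Phi_{1/2\ell}(\x_t) - \tfrac{\eta_\x}{4}\|\grad \Phi_{1/2\ell}(\x_t)\|^2 + 2\eta_\x \ell\, \delta_t + \eta_\x^2 \ell L^2.
\]
Telescoping then bounds the averaged squared gradient by $O(\widehat\Delta_\Phi/(T\eta_\x) + \ell\bar\delta + \eta_\x \ell L^2)$, where $\bar\delta := T^{-1}\sum_{t=0}^{T-1} \delta_t$.

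The crux is bounding $\bar\delta$, which is the ``concave optimization with slowly changing objective'' analysis advertised in the introduction. I would partition the iterations into consecutive blocks of length $S$; within a block $B$ starting at time $t_B$, the drift satisfies $\|\x_s - \x_{t_B}\| \leq SL\eta_\x$, so $\y^\star(\x_{t_B})$ serves as a common good reference. The standard smooth-concave projected gradient ascent regret bound (valid for $\eta_\y \leq 1/\ell$) gives $\sum_{s \in B}(f(\x_s, \y^\star(\x_{t_B})) - f(\x_s, \y_{s+1})) \leq D^2/(2\eta_\y)$, while the $L$-Lipschitzness in $\x$ controls the bias $\Phi(\x_s) - f(\x_s, \y^\star(\x_{t_B})) \leq 2L\|\x_s - \x_{t_B}\| = O(SL^2\eta_\x)$. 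Summing across the block, then over $T/S$ blocks, and balancing $S \asymp D/(L\sqrt{\eta_\x \eta_\y})$ yields $\sum_s (\Phi(\x_s) - f(\x_s, \y_{s+1})) \leq O(TDL\sqrt{\eta_\x/\eta_\y})$. A separate global telescoping identity, $\sum_s(f(\x_s, \y_{s+1}) - f(\x_s, \y_s)) \leq \widehat\Delta_0 + O(TL^2\eta_\x)$, converts $\y_{s+1}$ to $\y_s$ and is precisely where $\widehat\Delta_0$ enters.

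Assembling the three estimates with $\eta_\y = \Theta(1/\ell)$ and $\eta_\x = \Theta(\epsilon^4/(\ell^3 L^2 D^2))$, each error term on the right-hand side of the averaged-gradient bound is driven to $O(\epsilon^2)$ at the claimed iteration count. The main obstacle is the block argument for $\bar\delta$: unlike the nonconvex-strongly-concave setting of Theorem~\ref{Theorem:nsc-GDA-complexity-bound}, in the merely concave regime $\y^\star(\cdot)$ is neither unique nor Lipschitz, so classical dynamic-regret bounds against a moving optimum do not apply. The sliding-window trick circumvents this by exploiting the fact that within one block a single reference $\y^\star(\x_{t_B})$ is $O(SL^2\eta_\x)$-close to optimal for every $f(\x_s, \cdot)$ in that block, and the accumulated across-block bias is then controlled by taking $\eta_\x$ at the stated order relative to $\epsilon$.
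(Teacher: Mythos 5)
Your proposal is correct and follows essentially the same route as the paper: the identical Moreau-envelope descent inequality with error term $\Delta_t = \Phi(\x_t) - f(\x_t,\y_t)$ (Lemma~\ref{Lemma:nc-key-descent}), the same block partition with a fixed per-block reference $\y^\star(\x_{jB})$ combined with the projected-ascent telescoping and the Lipschitz drift bound (Lemmas~\ref{Lemma:nc-key-neighborhood} and~\ref{Lemma:nc-key-objective}), and the same global telescoping of $f(\x_t,\y_t)$ that introduces $\widehat{\Delta}_0$, followed by balancing $B \asymp \frac{D}{L}\sqrt{\ell/\eta_\x}$. The only difference is organizational (you convert $\y_{s+1}$ to $\y_s$ via a separate global sum rather than folding it into the per-step bound), which does not change the argument.
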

\begin{theorem}[SGDA]\label{Theorem:nc-SGDA-complexity-bound}
Under Assumption~\ref{Assumption:stoc-oracle} and~\ref{Assumption:nc} and letting the step sizes be chosen as $\eta_\x =  \Theta(\epsilon^4/(\ell^3D^2(L^2 + \sigma^2)))$ and $\eta_\y = \Theta(\epsilon^2/\ell\sigma^2)$ with the batchsize $M=1$, the iteration complexity (also the stochastic gradient complexity) of Algorithm \ref{Algorithm:SGDA} to return an $\epsilon$-stationary point is bounded by 
\begin{equation*}
\OCal\left(\left(\frac{\ell^3 (L^2 + \sigma^2)D^2\widehat{\Delta}_\Phi}{\epsilon^6} + \frac{\ell^3 D^2\widehat{\Delta}_0}{\epsilon^4}\right)\max\left\{1, \ \frac{\sigma^2}{\epsilon^2}\right\}\right).   
\end{equation*}
\end{theorem}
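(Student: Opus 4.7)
The plan is to parallel the deterministic proof of Theorem~\ref{Theorem:nc-GDA-complexity-bound} while carefully propagating the variance $\sigma^2$ from Assumption~\ref{Assumption:stoc-oracle} through each step. The governing potential will be the Moreau envelope $\Phi_{1/2\ell}$, which by Lemma~\ref{Lemma:nc-moreau-envelope} is $\ell$-smooth and $\ell$-strongly convex; the quantity $\|\grad\Phi_{1/2\ell}(\x_t)\|$ is exactly the stationarity measure in Definition~\ref{def:nc-stationary}. I will denote by $\FCal_t$ the natural filtration and by $\g_\x^t, \g_\y^t$ the stochastic oracle calls at step $t$, with $\E[\g_\x^t\mid\FCal_t]=\gradx f(\x_t,\y_t)$ and $\E[\|\g_\x^t\|^2\mid\FCal_t]\leq L^2+\sigma^2$.

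\textbf{Step 1 (Moreau envelope descent).} Introduce the proximal point $\tilde\x_t := \argmin_\w\{\Phi(\w)+\ell\|\w-\x_t\|^2\}$, so that $\grad\Phi_{1/2\ell}(\x_t)=2\ell(\x_t-\tilde\x_t)$. Using $\Phi_{1/2\ell}(\x_{t+1})\leq \Phi(\tilde\x_t)+\ell\|\tilde\x_t-\x_{t+1}\|^2$, expanding the square with $\x_{t+1}=\x_t-\eta_\x\g_\x^t$, and taking conditional expectation, I would combine the $\ell$-strong convexity of $\Phi+\ell\|\cdot-\x_t\|^2$ with the fact that $\gradx f(\x_t,\y^\star(\x_t))\in\partial\Phi(\x_t)$ (Lemma~\ref{Lemma:nc-structure}) and bound the mismatch $\|\gradx f(\x_t,\y_t)-\gradx f(\x_t,\y^\star(\x_t))\|$ by the dual gap via the $\ell$-smoothness of $f$. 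This yields
\begin{equation*}
\E\bigl[\Phi_{1/2\ell}(\x_{t+1})\mid\FCal_t\bigr]\leq \Phi_{1/2\ell}(\x_t)-\tfrac{\eta_\x}{4}\|\grad\Phi_{1/2\ell}(\x_t)\|^2+c_1\ell\eta_\x\bigl[\Phi(\x_t)-f(\x_t,\y_t)\bigr]+c_2\ell\eta_\x^2(L^2+\sigma^2).
\end{equation*}

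\textbf{Step 2 (Concave ascent with a drifting objective).} The crux is controlling the cumulative dual gap $\sum_t\E[\Phi(\x_t)-f(\x_t,\y_t)]$ where $\{\y_t\}$ is generated by stochastic projected gradient ascent on $f(\x_t,\cdot)$, which is concave but not strongly concave. Starting from the standard one-step inequality $\|\y_{t+1}-\y^\star(\x_t)\|^2\leq\|\y_t-\y^\star(\x_t)\|^2-2\eta_\y\langle\g_\y^t,\y^\star(\x_t)-\y_t\rangle+\eta_\y^2\|\g_\y^t\|^2$, applying concavity of $f(\x_t,\cdot)$, telescoping, and absorbing the drift term $\Phi(\x_{t+1})-\Phi(\x_t)$ using the $L$-Lipschitz continuity of $\Phi$ together with $\|\x_{t+1}-\x_t\|\leq\eta_\x\|\g_\x^t\|$, I would derive
\begin{equation*}
\sum_{t=1}^T\E\bigl[\Phi(\x_t)-f(\x_t,\y_t)\bigr]\leq \frac{D^2}{2\eta_\y}+\widehat\Delta_0+\frac{\eta_\y\sigma^2 T}{2}+L\eta_\x\sum_{t=1}^T\E\|\g_\x^t\|.
\end{equation*}

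\textbf{Step 3 (Assemble and tune).} Substituting the dual-gap bound into the Step~1 recursion, telescoping $\Phi_{1/2\ell}(\x_t)$ from $t=1$ to $T$, using $\E\|\g_\x^t\|\leq\sqrt{L^2+\sigma^2}$, and dividing by $T$ gives
\begin{equation*}
\tfrac{1}{T}\sum_{t=1}^T\E\|\grad\Phi_{1/2\ell}(\x_t)\|^2\lesssim \frac{\widehat\Delta_\Phi}{\eta_\x T}+\frac{\ell^2 D^2}{\eta_\y T}+\frac{\ell^2\widehat\Delta_0}{T}+\ell^2\eta_\y\sigma^2+\ell\eta_\x(L^2+\sigma^2)+\ell^2 L\eta_\x\sqrt{L^2+\sigma^2}.
\end{equation*}
Plugging $\eta_\x=\Theta(\epsilon^4/(\ell^3 D^2(L^2+\sigma^2)))$ and $\eta_\y=\Theta(\epsilon^2/(\ell\sigma^2))$ balances every term at $\Theta(\epsilon^2)$ once $T$ is chosen as in the theorem; since $\hat\x$ is drawn uniformly from $\{\x_t\}_{t=1}^T$, the output is $\epsilon$-stationary in expectation, and $M=1$ makes iteration and stochastic-gradient complexities coincide.

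The principal obstacle is Step~2: analyzing stochastic concave ascent against a drifting target without any contraction on $\|\y_t-\y^\star(\x_t)\|$. All control of the dual gap must come from averaging, and the variance $\sigma^2$ enters multiplicatively with $\eta_\y T$. This prevents the choice $\eta_\y=\Theta(1/\ell)$ used in Theorem~\ref{Theorem:nc-GDA-complexity-bound} and forces a delicate three-way balance between the noise term $\eta_\y\sigma^2$, the diameter term $D^2/(\eta_\y T)$, and the drift term $L\eta_\x\sqrt{L^2+\sigma^2}$. Resolving this balance is exactly what pins $\eta_\y=\Theta(\epsilon^2/(\ell\sigma^2))$ and produces the $\max\{1,\sigma^2/\epsilon^2\}$ factor responsible for the final $\epsilon^{-8}$ complexity.
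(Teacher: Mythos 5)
Your Step 1 matches the paper's Lemma~\ref{Lemma:nc-key-descent} and your Step 3 assembly is routine, but Step 2 contains a genuine gap that invalidates the argument. Your one-step inequality uses the drifting reference point $\y^\star(\x_t)$, and you then claim the distance terms telescope to a single $D^2/(2\eta_\y)$, with the drift absorbed via the $L$-Lipschitzness of $\Phi$. The telescoping does not go through: summing $\|\y_t-\y^\star(\x_t)\|^2-\|\y_{t+1}-\y^\star(\x_t)\|^2$ over $t$ requires comparing $\|\y_{t+1}-\y^\star(\x_t)\|^2$ with $\|\y_{t+1}-\y^\star(\x_{t+1})\|^2$, and in the merely concave setting $\y^\star(\cdot)$ is neither unique nor continuous, so this mismatch can be of order $D^2$ at \emph{every} step. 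Lipschitz continuity of $\Phi$ controls the drift of the optimal \emph{value} $f(\x_t,\y^\star(\x_t))$, not the drift of the argmax location that enters the squared-distance terms, so it cannot rescue the telescoping. This is exactly the obstruction the paper flags, and its resolution is the blocking (localization) device of Lemma~\ref{Lemma:nc-key-objective}: fix the reference $\y^\star(\x_{jB})$ over a block of $B$ iterations, telescope only within the block, pay a function-value drift $O(\eta_\x L\sqrt{L^2+\sigma^2}\,B)$ per term via Lemma~\ref{Lemma:nc-key-neighborhood}, and incur $D^2/(2\eta_\y)$ once per block, giving the averaged bound $\eta_\x L\sqrt{L^2+\sigma^2}(B+1)+D^2/(2B\eta_\y)+\eta_\y\sigma^2/2+\widehat{\Delta}_0/(T+1)$, after which $B$ is optimized.

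The missing $B$-tradeoff is not cosmetic: optimizing $B$ produces the term $O\bigl(\ell D\sqrt{\eta_\x L\sqrt{L^2+\sigma^2}/\eta_\y}\bigr)$, and it is precisely forcing this term below $\epsilon^2$ with $\eta_\y=\Theta(\epsilon^2/\ell\sigma^2)$ that pins $\eta_\x=\Theta\bigl(\epsilon^6/(\ell^3D^2\sigma^2 L\sqrt{L^2+\sigma^2})\bigr)$ in the noisy regime and yields the $\max\{1,\sigma^2/\epsilon^2\}$ factor and the overall $O(\epsilon^{-8})$ complexity. Your Step 2 bound, if it were true, would let $\eta_\x$ be taken a factor $\sigma^2/\epsilon^2$ larger and would deliver an $O(\epsilon^{-6})$ stochastic complexity --- strictly better than the theorem claims --- which is a strong signal that the claimed dual-gap bound is not attainable by the sketched argument. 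To repair the proof you need to introduce the block decomposition and carry the block size $B$ through the final tuning.
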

We make several additional remarks. First of all, two-timescale GDA and SGDA are guaranteed to find an $\epsilon$-stationary point  in terms of Moreau envelopes within $O(\epsilon^{-6})$ gradient evaluations and $O(\epsilon^{-8})$ stochastic gradient evaluations, respectively. The ratio of stepsizes $\eta_\y/\eta_\x$ is required to be $\Theta(1/\epsilon^4)$ and this quantity reflects an efficiency trade-off in the algorithm. Furthermore, similar arguments as in Section \ref{sec:results_sc} hold for the output of the algorithms here. Finally, the minibatch size $M = 1$ is allowed in Theorem~\ref{Theorem:nc-SGDA-complexity-bound}, which is different from the result in Theorem~\ref{Theorem:nsc-SGDA-complexity-bound}. 

\subsection{Relationship between the stationarity notions}
We provide additional technical results on the relationship between our notions of stationarity and other notions based on $\grad f$ in the literature~\citep{Lu-2019-Hybrid, Nouiehed-2019-Solving}. In particular, we show that two notions can be translated in both directions with extra computational cost.  
\begin{definition}\label{def:stationary}
A pair of points $(\x, \y)$ is an $\epsilon$-stationary point ($\epsilon \geq 0$) of a differentiable function $\Phi$ if, for $\y^+ = \proj_\YCal(\y+(1/\ell)\grady f(\x, \y))$, we have
\begin{equation*}
\|\gradx f(\x, \y^+)\| \leq \epsilon, \quad \|\y^+ - \y\| \leq \epsilon/\ell. 
\end{equation*}
\end{definition}
We present our results in the following two propositions. 
\begin{proposition}\label{prop:criterion-nsc} 
Under Assumption \ref{Assumption:nsc}, if a point $\hat{\x}$ is an $\epsilon$-stationary point in terms of Definition~\ref{def:nsc-stationary}, an $O(\epsilon)$-stationary point $(\x', \y')$ in terms of Definition~\ref{def:stationary} can be obtained using additional $O(\kappa\log(1/\epsilon))$ gradients or $O(\epsilon^{-2})$ stochastic gradients. Conversely, if a point $(\hat{\x}, \hat{\y})$ is an $\epsilon/\kappa$-stationary point in terms of Definition~\ref{def:stationary}, a point $\hat{\x}$ is an $O(\epsilon)$-stationary point in terms of Definition~\ref{def:nsc-stationary}. 
\end{proposition}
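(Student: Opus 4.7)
The plan is to prove the two directions separately, using the inner subproblem $\max_{\y \in \YCal} f(\hat{\x}, \y)$ as the bridge between the two notions. The forward direction reduces to running a few steps of projected gradient ascent on a strongly concave problem, while the reverse direction amounts to a gradient-mapping lemma: smallness of the projected-gradient residual forces closeness to $\y^\star(\hat{\x})$ whenever the inner problem is $\mu$-strongly concave.

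For the forward direction, I would set $\x' = \hat{\x}$ and obtain $\y'$ by running projected gradient ascent on $\max_{\y \in \YCal} f(\hat{\x}, \y)$ with stepsize $1/\ell$ from an arbitrary starting point in $\YCal$. Since $f(\hat{\x},\cdot)$ is $\mu$-strongly concave and $\ell$-smooth, the iterates satisfy $\|\y_t - \y^\star(\hat{\x})\|^2 \leq D^2(1-1/\kappa)^t$, so $t = O(\kappa\log(1/\epsilon))$ iterations suffice to guarantee $\|\y' - \y^\star(\hat{\x})\| \leq \epsilon/\ell$; the stochastic variant needs $O(\epsilon^{-2})$ samples by the standard rate of SGD on strongly concave objectives. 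Then the first condition of Definition~\ref{def:stationary} follows from
\[
\|\gradx f(\hat{\x},\y')\| \;\leq\; \|\gradx f(\hat{\x},\y^\star(\hat{\x}))\| + \ell\|\y' - \y^\star(\hat{\x})\| \;\leq\; \epsilon + \epsilon \;=\; 2\epsilon,
\]
using Lemma~\ref{Lemma:nsc-structure} to identify $\grad\Phi(\hat{\x}) = \gradx f(\hat{\x},\y^\star(\hat{\x}))$. For the second condition, since $\y^\star(\hat{\x})$ is a fixed point of the map $\y \mapsto \proj_\YCal(\y+(1/\ell)\grady f(\hat{\x},\y))$, the triangle inequality together with nonexpansiveness of $\proj_\YCal$ and $\ell$-smoothness of $\grady f$ yields $\|\proj_\YCal(\y'+(1/\ell)\grady f(\hat{\x},\y')) - \y'\| \leq 3\|\y' - \y^\star(\hat{\x})\| \leq 3\epsilon/\ell$.

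For the reverse direction, the core step is to bound $\|\hat{\y} - \y^\star(\hat{\x})\|$ by $O(\kappa)$ times the projected-gradient residual $\|\hat{\y}^+ - \hat{\y}\|$, where $\hat{\y}^+ := \proj_\YCal(\hat{\y}+(1/\ell)\grady f(\hat{\x},\hat{\y}))$. Combining the three-point inequality
\[
\langle \grady f(\hat{\x},\hat{\y}), \y^\star(\hat{\x}) - \hat{\y}^+ \rangle \;\geq\; \ell\langle \hat{\y}^+ - \hat{\y},\, \hat{\y}^+ - \y^\star(\hat{\x})\rangle
\]
with the $\mu$-strong concavity and $\ell$-smoothness of $f(\hat{\x},\cdot)$ and the optimality $f(\hat{\x},\hat{\y}^+) \leq f(\hat{\x},\y^\star(\hat{\x}))$, Cauchy--Schwarz on $\langle \hat{\y}^+ - \hat{\y}, \hat{\y}^+ - \y^\star(\hat{\x})\rangle$ followed by solving the resulting quadratic in $\|\hat{\y} - \y^\star(\hat{\x})\|$ gives $\|\hat{\y} - \y^\star(\hat{\x})\| \leq C\kappa\|\hat{\y}^+ - \hat{\y}\|$ for an absolute constant $C$. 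Substituting the hypothesis $\|\hat{\y}^+ - \hat{\y}\| \leq \epsilon/(\kappa\ell)$ gives $\|\hat{\y} - \y^\star(\hat{\x})\| \leq C\epsilon/\ell$, and finally
\[
\|\grad\Phi(\hat{\x})\| = \|\gradx f(\hat{\x},\y^\star(\hat{\x}))\| \leq \|\gradx f(\hat{\x},\hat{\y})\| + \ell\|\hat{\y} - \y^\star(\hat{\x})\| \leq \epsilon/\kappa + C\epsilon = O(\epsilon).
\]

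The main obstacle is the gradient-mapping estimate in the reverse direction: the bound $\|\hat{\y}-\y^\star(\hat{\x})\| = O(\kappa)\|\hat{\y}^+-\hat{\y}\|$ is what forces us to assume an $\epsilon/\kappa$-stationary pair (rather than just $\epsilon$), and deriving it cleanly from the three-point inequality, strong concavity, and smoothness requires a little care to keep constants explicit. Everything else is a routine application of Lemma~\ref{Lemma:nsc-structure}, nonexpansiveness of projection, and the textbook convergence rates for (stochastic) projected gradient ascent on a strongly concave smooth objective.
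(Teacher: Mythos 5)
Your proposal is correct and follows essentially the same route as the paper: the forward direction runs (stochastic) projected gradient ascent on the strongly concave inner problem and transfers the error via $\ell$-smoothness, and the reverse direction controls $\|\hat{\y}-\y^\star(\hat{\x})\|$ by $\kappa$ times the projected-gradient residual before applying $\|\grad\Phi(\hat{\x})\| \leq \|\gradx f(\hat{\x},\hat{\y})\| + \ell\|\hat{\y}-\y^\star(\hat{\x})\|$. The only difference is that the paper simply cites the global error bound $\mu\|\hat{\y}-\y^\star(\hat{\x})\| \leq \ell\,\|\proj_\YCal(\hat{\y}+(1/\ell)\grady f(\hat{\x},\hat{\y}))-\hat{\y}\|$ from the literature, whereas you rederive it from the projection optimality condition and strong concavity (watch the sign in your stated three-point inequality, which as written points the wrong way, though the standard derivation goes through).
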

\begin{proposition}\label{prop:criterion-nc} 
Under Assumption \ref{Assumption:nc}, if a point $\hat{\x}$ is an $\epsilon$-stationary point in terms of Definition~\ref{def:nc-stationary}, an $O(\epsilon)$-stationary point $(\x', \y')$ in terms of Definition~\ref{def:stationary} can be obtained using additional $O(\epsilon^{-2})$ gradients or $O(\epsilon^{-4})$ stochastic gradients. Conversely, if a point $(\hat{\x}, \hat{\y})$ is an $\epsilon^2/\ell D$-stationary point in terms of Definition~\ref{def:stationary}, a point $\hat{\x}$ is an $O(\epsilon)$-stationary point in terms of Definition~\ref{def:nsc-stationary}. 
\end{proposition}
To translate the notion of stationarity based on $\grad f$ to our notion of stationarity, we need to pay an additional factor of $O(\kappa\log(1/\epsilon))$ or $O(\epsilon^{-2})$ in the two settings. In this sense, our notion of stationarity is stronger than the notion based on $\grad f$ in the literature; see~\citet{Lu-2019-Hybrid, Nouiehed-2019-Solving}. We defer the proofs of these propositions to Appendix~\ref{sec:opt}. 

\subsection{Discussions}
Note that the focus of this paper is to provide basic nonasymptotic guarantees for the simple, and widely-used, two-timescale GDA and SGDA algorithms in the nonconvex-(strongly)-concave settings. We do not wish to imply that these algorithms are optimal in any sense, nor that acceleration should necessarily be achieved by incorporating momentum into the update for the variable $\y$. In fact, the optimal rate for optimizing a nonconvex-(strongly)-concave function remains open. The best known complexity bound has been presented by~\citet{Thekumparampil-2019-Efficient} and \citet{Kong-2019-Accelerated}. Both of the analyses only require $\tilde{O}(\epsilon^{-3})$ gradient computations for solving nonconvex-concave problems but suffer from rather complicated algorithmic schemes.  The general question of the construction of optimal algorithms in nonconvex-concave problems is beyond the scope of this paper. 

Second, our complexity results are also valid in the convex-concave setting and this does not contradict results showing the divergence of GDA with fixed stepsize. We note a few distinctions: (1) our results guarantee that GDA will visit $\epsilon$-stationary points at some iterates, which are not necessarily the last iterates; (2) our results only guarantee stationarity in terms of $\x_t$, not $(\x_t, \y_t)$. In fact, our proof permits the possibility of significant changes in $\y_t$ even when $\x_t$ is already close to stationarity. This together with our choice $\eta_\x \ll \eta_\y$, makes our results valid. To this end, we highlight that our algorithms can be used to achieve an approximate Nash equilibrium for convex-concave functions (i.e., optimality for both $\x$ and $\y$). Instead of averaging, we run two passes of two-timescale GDA or SGDA for min-max problem and max-min problem separately. That is, in the first pass we use $\eta_\x \ll \eta_\y$ while in the second pass we use $\eta_\x \gg \eta_\y$. Either pass will return an approximate stationary point for each players, which jointly forms an approximate Nash equilibrium.

\section{Overview of Proofs}\label{Sec:Overview}
In this section, we sketch the complexity analysis for two-timescale GDA (Theorems~\ref{Theorem:nsc-GDA-complexity-bound} and~\ref{Theorem:nc-GDA-complexity-bound}). 

\subsection{Nonconvex-strongly-concave minimax problems}
In the nonconvex-strongly-concave setting, our proof involves setting a pair of stepsizes, $(\eta_\x, \eta_\y)$, which force $\{\x_t\}_{t \geq 1}$ to move much more slowly than $\{\y_t\}_{t \geq 1}$. Recall Lemma~\ref{Lemma:nsc-structure}, which guarantees that $\y^\star(\cdot)$ is $\kappa$-Lipschitz: 
\begin{equation*}
\|\y^\star(\x_1) - \y^\star(\x_2)\| \leq \kappa \|\x_1 - \x_2\|. 
\end{equation*}
If $\{\x_t\}_{t \geq 1}$ moves slowly, then $\{\y^\star(\x_t)\}_{t \geq 1}$ also moves slowly. This allows us to perform gradient ascent on a slowly changing strongly-concave function $f(\x_t, \cdot)$, guaranteeing that $\|\y_t - \y^\star(\x_t)\|$ is small in an amortized sense. More precisely, letting the error be $\delta_t = \|\y^\star(\x_t) - \y_t\|^2$, the standard analysis of inexact nonconvex gradient descent implies a descent inequality in which the sum of $\delta_t$ provides control: 
\begin{equation*}
\Phi(\x_{T+1}) - \Phi(\x_0) \leq -\Omega(\eta_{\x})\left(\sum_{t=0}^T \|\grad \Phi(\x_t)\|^2\right) + O(\eta_\x\ell^2)\left(\sum_{t=0}^T \delta_t\right).
\end{equation*}
The remaining step is to show that the second term is always small compared to the first term on the right-hand side. This can be done via a recursion for $\delta_t$ as follows: 
\begin{equation*}
\delta_t \leq \gamma\delta_{t-1} + \beta\|\grad\Phi(\x_{t-1})\|^2,
\end{equation*}
where $\gamma<1$ and $\beta$ is small. Thus, $\delta_t$ exhibits a linear contraction and $\sum_{t=0}^T \delta_t$ can be controlled by the term $\sum_{t=0}^T \|\grad \Phi(\x_t)\|^2$.  

\subsection{Nonconvex-concave minimax problems}
In this setting, the main idea is again to set a pair of learning rates $(\eta_\x, \eta_\y)$ which force $\{\x_t\}_{t \geq 1}$ to move more slowly than $\{\y_t\}_{t \geq 1}$. However, $f(\x, \cdot)$ is concave and $\y^\star(\cdot)$ is not unique. This means that, even if $\x_1, \x_2$ are extremely close, $\y^\star(\x_1)$ can be dramatically different from $\y^\star(\x_2)$. Thus, $\|\y_t - \y^\star(\x_t)\|$ is no longer a viable error to control.

Fortunately, Lemma~\ref{Lemma:nc-structure} implies that $\Phi$ is Lipschitz. That is to say, when the stepsize $\eta_\x$ is very small, $\{\Phi(\x_t)\}_{t \geq 1}$ moves slowly: 
\begin{equation*}
|\Phi(\x_t) - \Phi(\x_{t-1})| \leq L\|\x_t - \x_{t-1}\| \leq \eta_\x L^2. 
\end{equation*}
Again, this allows us to perform gradient ascent on a slowly changing concave function $f(\x_t, \cdot)$, and guarantees that $\Delta_t = f(\x_t, \z) - f(\x_t, \y_t)$ is small in an amortized sense where $\z \in \y^\star(\x_t)$. The analysis of inexact nonconvex subgradient descent~\citep{Davis-2019-Stochastic} implies that $\Delta_t$ comes into the following descent inequality: 
\begin{equation*}
\Phi_{1/2\ell}(\x_{T+1}) - \Phi_{1/2\ell}(\x_0) \leq O(\eta_\x\ell)\left(\sum_{t=0}^T \Delta_t\right) + O(\eta_\x^2 \ell L^2(T+1)) - O(\eta_{\x})\left(\sum_{t=0}^T \|\grad \Phi_{1/2\ell}(\x_t)\|^2\right) , 
\end{equation*}
where the first term on the right-hand side is the error term. The remaining step is again to show the error term is small compared to the sum of the first two terms on the right-hand side. To bound the term $\sum_{t=0}^T \Delta_t$, we recall the following inequalities and use a telescoping argument (where the optimal point $\y^\star$ does not change):
\begin{equation} \label{eq:convexcontrol}
\Delta_t \leq \tfrac{\|\y_t - \y^\star\|^2 - \|\y_{t+1} - \y^\star\|^2}{\eta_\y}.
\end{equation}
The major challenge here is that the optimal solution $\y^\star(\x_t)$ can change dramatically and the telescoping argument does not go through. An important observation is, however, that~\eqref{eq:convexcontrol} can be proved if we replace the $\y^\star$ by any $\y \in \YCal$, while paying an additional cost that depends on the difference in function value between $\y^\star$ and $\y$. More specifically, we pick a block of size $B = O(\epsilon^2/\eta_\x)$ and show that the following statement holds for any $s \leq \forall t < s + B$,
\begin{equation*}
\Delta_{t-1} \leq O(\ell)\left(\|\y_t - \y^\star(\x_s)\|^2 - \|\y_{t+1} - \y^\star(\x_s)\|^2\right) + O(\eta_\x L^2)(t-1-s). 
\end{equation*}
We perform an analysis on the blocks where the concave problems are similar so the telescoping argument can now work. By carefully choosing $\eta_\x$, the term $\sum_{t=0}^T \Delta_t$ can also be well controlled. 

\begin{figure*}[!t]
\centering
\subfloat[MNIST]{%
\includegraphics[width=0.33\textwidth]{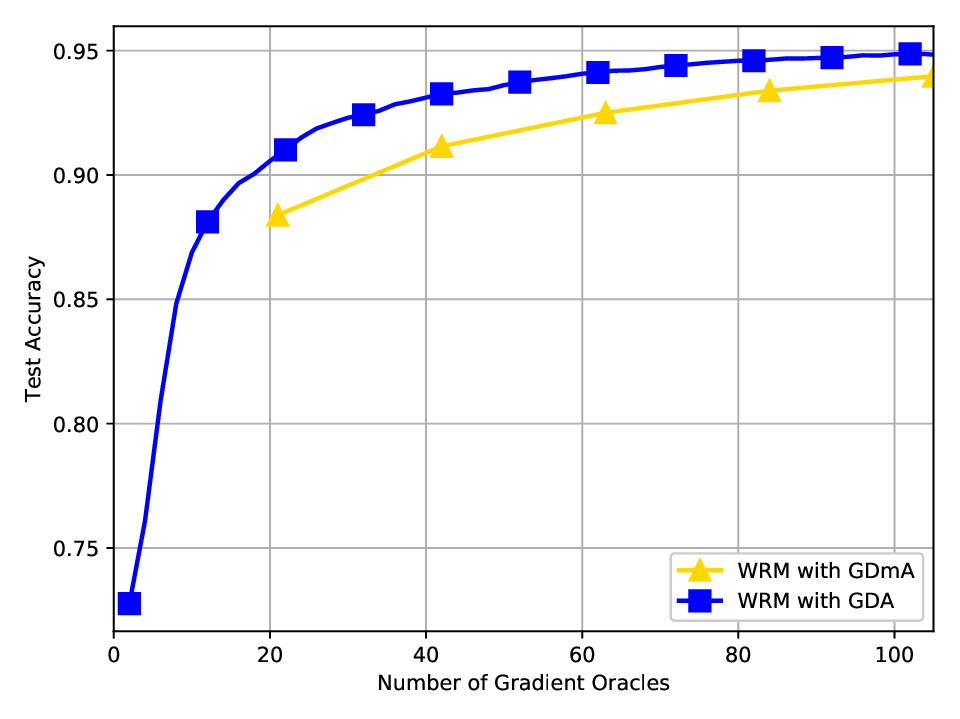}
}
\subfloat[Fashion-MNIST]{%
\includegraphics[width=0.33\textwidth]{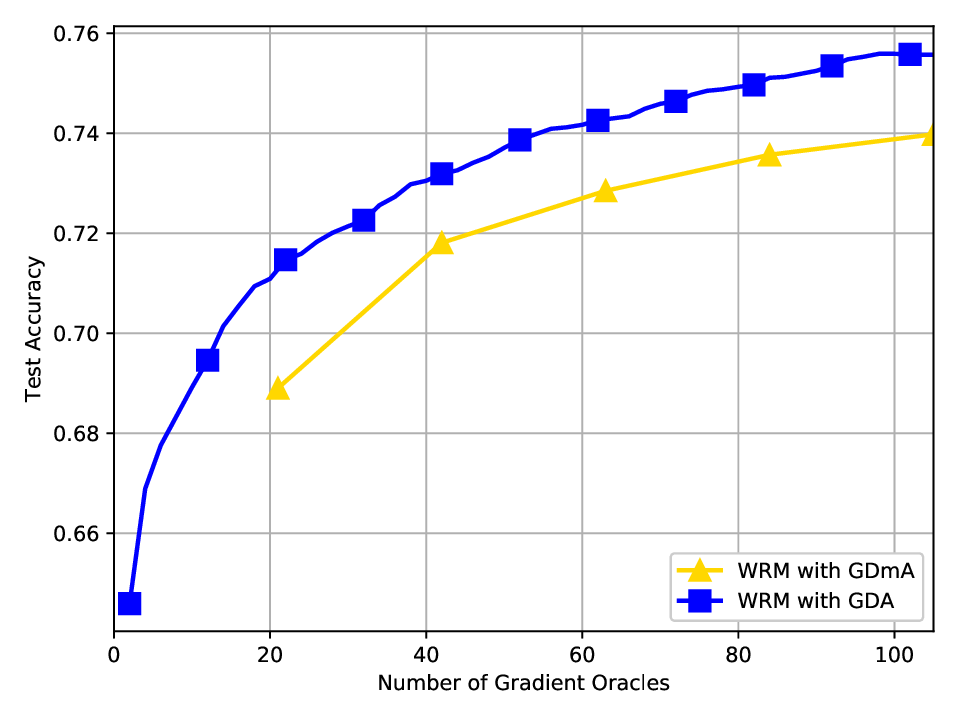}
}
\subfloat[CIFAR-10]{%
\includegraphics[width=0.33\textwidth]{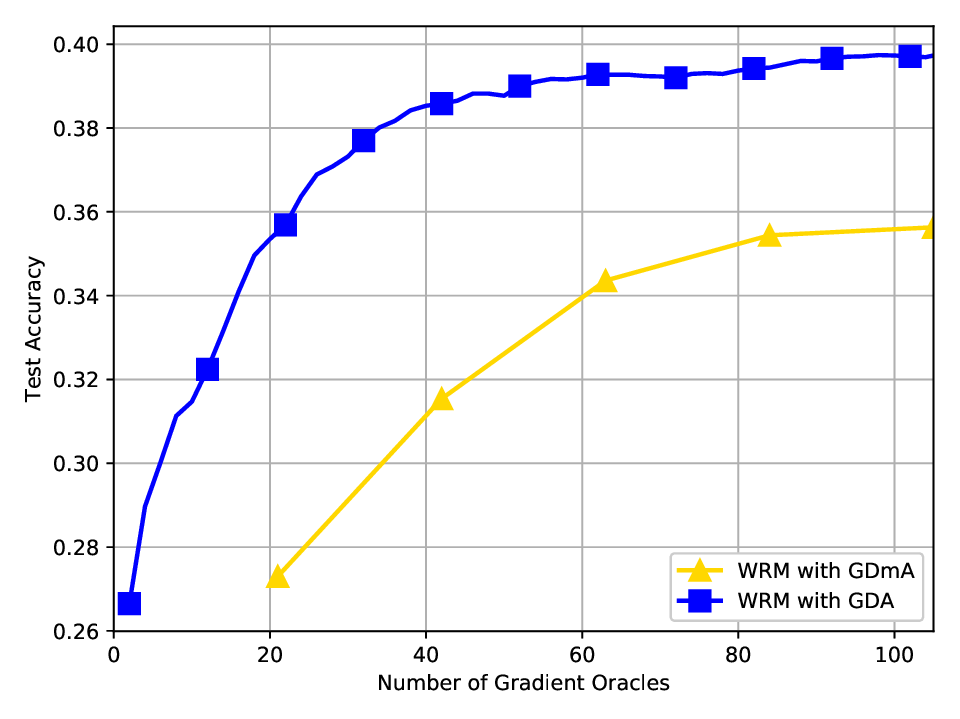}
}
\caption{Performance of WRM with GDmax (i.e., GDmA) and GDA on MNIST, Fashion-MNIST and CIFAR-10 datasets. We demonstrate test classification accuracy vs. time for different WRM models with GDmax and GDA. Note that $\gamma = 0.4$.}\label{Fig:Large-Data}\vspace*{-1em}
\end{figure*}
\begin{figure*}[!t]
\centering
\subfloat[MNIST]{%
\includegraphics[width=0.33\textwidth]{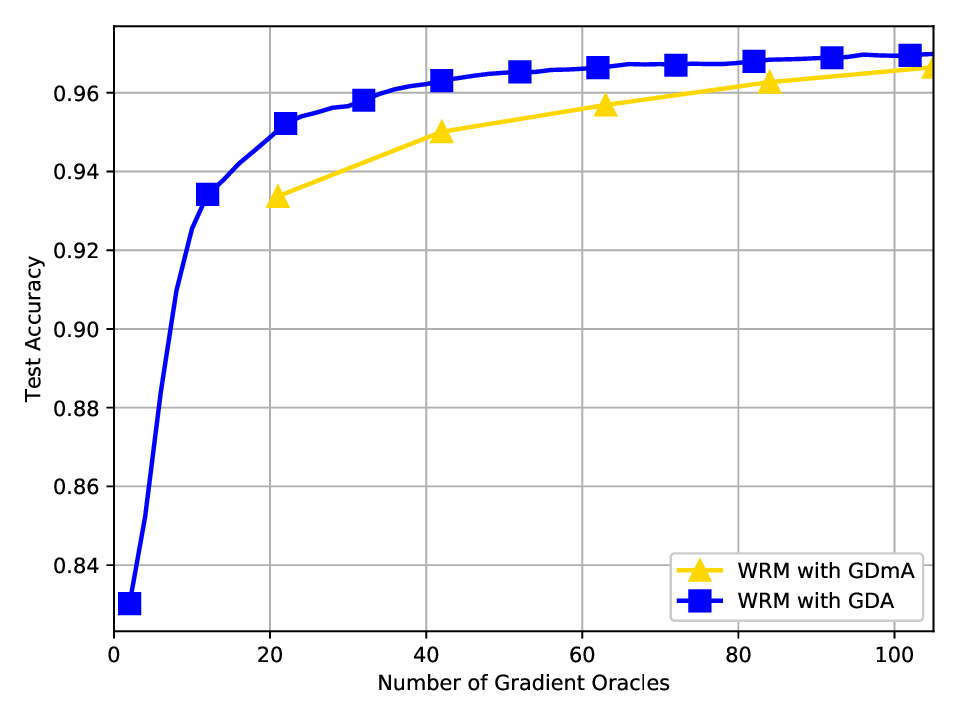}
}
\subfloat[Fashion-MNIST]{%
\includegraphics[width=0.33\textwidth]{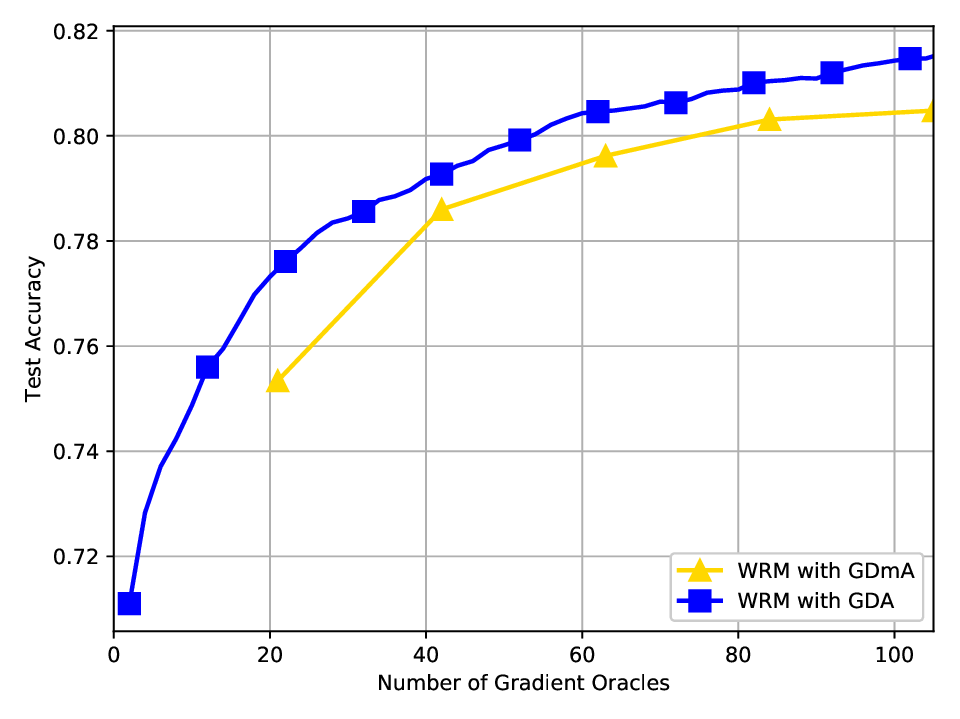}
}
\subfloat[CIFAR-10]{%
\includegraphics[width=0.33\textwidth]{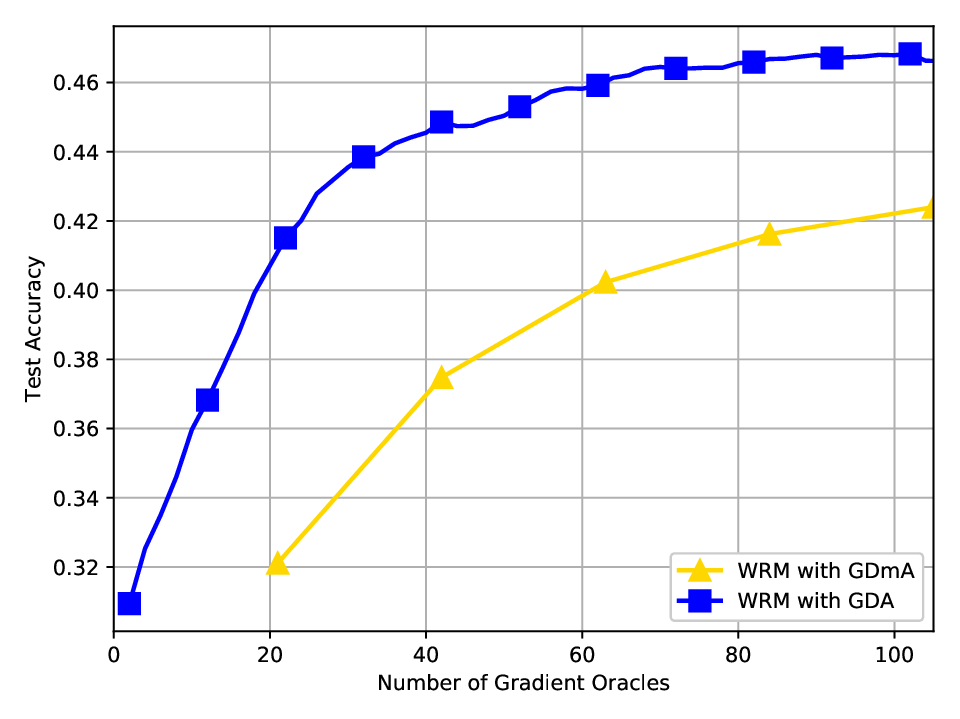}
}
\caption{Performance of WRM with GDmax (i.e., GDmA) and GDA on MNIST, Fashion-MNIST and CIFAR-10 datasets. We demonstrate test classification accuracy vs. time for different WRM models with GDmax and GDA. Note that $\gamma = 1.3$.}\label{Fig:Small-Data}\vspace*{-1em}
\end{figure*}
\section{Experiments}
In this section, we present several empirical results to show that two-timescale GDA outperforms GDmax. The task is to train the empirical Wasserstein robustness model (WRM)~\citep{Sinha-2018-Certifiable} over a collection of data samples $\{\xi_i\}_{i=1}^N$ with $\ell_2$-norm attack and a penalty parameter $\gamma>0$. Formally, we have 
\begin{equation}\label{prob:WRM-surrogate}
\min_\x \max_{\{\y_i\}_{i=1}^N \subseteq \YCal} \frac{1}{N}\left[\sum_{i=1}^N \left(\ell(\x, \y_i) - \gamma\|\y_i - \xi_i\|^2\right)\right]. 
\end{equation}
As shown in~\citet{Sinha-2018-Certifiable}, we often choose $\gamma > 0$ sufficiently large such that $\ell(\x, \y_i) - \gamma\|\y_i - \xi_i\|^2$ is strongly concave. To this end, problem~\eqref{prob:WRM-surrogate} is a nonconvex-strongly-concave minimax problem.  

We mainly follow the setting of~\citet{Sinha-2018-Certifiable} and consider training a neural network classifier on three datasets\footnote{https://keras.io/datasets/}: MNIST, Fashion-MNIST, and CIFAR-10, with the default cross validation. The architecture consists of $8 \times 8$, $6 \times 6$ and $5 \times 5$ convolutional filter layers with ELU activations followed by a fully connected layer and softmax output. Small and large adversarial perturbation is set with $\gamma \in \{0.4, 1.3\}$ as the same as~\citet{Sinha-2018-Certifiable}. The GDmax approach is denoted as \textsf{GDmA} in which $\eta_\x = \eta_\y = 10^{-3}$ and each inner loop contains $20$ gradient ascent. Two-timescale GDA is denoted as \textsf{GDA} in which $\eta_\x = 5 \times 10^{-5}$ and $\eta_\y = 10^{-3}$. Figure~\ref{Fig:Large-Data} and~\ref{Fig:Small-Data} show that GDA consistently outperforms GDmax on all datasets. Compared to MNIST and Fashion-MNIST, the improvement on CIFAR-10 is more significant which is worthy further exploration in the future. 

\section{Conclusion}\label{sec:conclusion}
In this paper, we show that two-time-scale GDA and SGDA return an $\epsilon$-stationary point in $O(\kappa^2\epsilon^{-2})$ gradient evaluations and $O(\kappa^3\epsilon^{-4})$ stochastic gradient evaluations in the nonconvex-strongly-concave case, and $O(\epsilon^{-6})$ gradient evaluations and $O(\epsilon^{-8})$ stochastic gradient evaluations in the nonconvex-concave case. Thus, these two algorithms are provably efficient in these settings. Future work aim to derive a lower bound for the complexity first-order algorithms in nonconvex-concave minimax problems.

\section*{Acknowledgments}
We would like to thank three anonymous referees for constructive suggestions that improve the quality of this paper. This work was supported in part by the Mathematical Data Science program of the Office of Naval Research under grant number N00014-18-1-2764.

\bibliographystyle{plainnat}
\bibliography{ref}

\begin{thebibliography}{55}
\providecommand{\natexlab}[1]{#1}
\providecommand{\url}[1]{\texttt{#1}}
\expandafter\ifx\csname urlstyle\endcsname\relax
  \providecommand{\doi}[1]{doi: #1}\else
  \providecommand{\doi}{doi: \begingroup \urlstyle{rm}\Url}\fi

\bibitem[Abadeh et~al.(2015)Abadeh, Esfahani, and
  Kuhn]{Abadeh-2015-Distributionally}
S.~S. Abadeh, P.~M.~M. Esfahani, and D.~Kuhn.
\newblock Distributionally robust logistic regression.
\newblock In \emph{NeurIPS}, pages 1576--1584, 2015.

\bibitem[Adolphs et~al.(2018)Adolphs, Daneshmand, Lucchi, and
  Hofmann]{Adolphs-2018-Local}
L.~Adolphs, H.~Daneshmand, A.~Lucchi, and T.~Hofmann.
\newblock Local saddle point optimization: A curvature exploitation approach.
\newblock \emph{ArXiv Preprint: 1805.05751}, 2018.

\bibitem[Azizian et~al.(2019)Azizian, Mitliagkas, Lacoste-Julien, and
  Gidel]{Azizian-2019-Tight}
W.~Azizian, I.~Mitliagkas, S.~Lacoste-Julien, and G.~Gidel.
\newblock A tight and unified analysis of extragradient for a whole spectrum of
  differentiable games.
\newblock \emph{ArXiv Preprint: 1906.05945}, 2019.

\bibitem[Bailey and Piliouras(2018)]{Bailey-2018-Multiplicative}
J.~P. Bailey and G.~Piliouras.
\newblock Multiplicative weights update in zero-sum games.
\newblock In \emph{EC}, pages 321--338, 2018.

\bibitem[Balduzzi et~al.(2018)Balduzzi, Racaniere, Martens, Foerster, Tuyls,
  and Graepel]{Balduzzi-2018-Mechanics}
D.~Balduzzi, S.~Racaniere, J.~Martens, J.~Foerster, K.~Tuyls, and T.~Graepel.
\newblock The mechanics of n-player differentiable games.
\newblock \emph{ArXiv Preprint: 1802.05642}, 2018.

\bibitem[Basar and Olsder(1999)]{Basar-1999-Dynamic}
T.~Basar and G.~J. Olsder.
\newblock \emph{Dynamic Noncooperative Game Theory}, volume~23.
\newblock SIAM, 1999.

\bibitem[Bena{\i}m and Hirsch(1999)]{Benaim-1999-Mixed}
M.~Bena{\i}m and M.~W. Hirsch.
\newblock Mixed equilibria and dynamical systems arising from fictitious play
  in perturbed games.
\newblock \emph{Games and Economic Behavior}, 29\penalty0 (1-2):\penalty0
  36--72, 1999.

\bibitem[Cesa-Bianchi and Lugosi(2006)]{Cesa-2006-Prediction}
N.~Cesa-Bianchi and G.~Lugosi.
\newblock \emph{Prediction, Learning, and Games}.
\newblock Cambridge University Press, 2006.

\bibitem[Chen and Rockafellar(1997)]{Chen-1997-Convergence}
G.~H.~G. Chen and R.~T. Rockafellar.
\newblock Convergence rates in forward--backward splitting.
\newblock \emph{SIAM Journal on Optimization}, 7\penalty0 (2):\penalty0
  421--444, 1997.

\bibitem[Cherukuri et~al.(2017)Cherukuri, Gharesifard, and
  Cortes]{Cherukuri-2017-Saddle}
A.~Cherukuri, B.~Gharesifard, and J.~Cortes.
\newblock Saddle-point dynamics: conditions for asymptotic stability of saddle
  points.
\newblock \emph{SIAM Journal on Control and Optimization}, 55\penalty0
  (1):\penalty0 486--511, 2017.

\bibitem[Daskalakis and Panageas(2018{\natexlab{a}})]{Daskalakis-2018-Last}
C.~Daskalakis and I.~Panageas.
\newblock Last-iterate convergence: Zero-sum games and constrained min-max
  optimization.
\newblock \emph{ArXiv Preprint: 1807.04252}, 2018{\natexlab{a}}.

\bibitem[Daskalakis and Panageas(2018{\natexlab{b}})]{Daskalakis-2018-Limit}
C.~Daskalakis and I.~Panageas.
\newblock The limit points of (optimistic) gradient descent in min-max
  optimization.
\newblock In \emph{NeurIPS}, pages 9236--9246, 2018{\natexlab{b}}.

\bibitem[Daskalakis et~al.(2017)Daskalakis, Ilyas, Syrgkanis, and
  Zeng]{Daskalakis-2017-Training}
C.~Daskalakis, A.~Ilyas, V.~Syrgkanis, and H.~Zeng.
\newblock Training gans with optimism.
\newblock \emph{ArXiv Preprint: 1711.00141}, 2017.

\bibitem[Davis and Drusvyatskiy(2019)]{Davis-2019-Stochastic}
D.~Davis and D.~Drusvyatskiy.
\newblock Stochastic model-based minimization of weakly convex functions.
\newblock \emph{SIAM Journal on Optimization}, 29\penalty0 (1):\penalty0
  207--239, 2019.

\bibitem[Drusvyatskiy and Lewis(2018)]{Drusvyatskiy-2018-Error}
D.~Drusvyatskiy and A.~S. Lewis.
\newblock Error bounds, quadratic growth, and linear convergence of proximal
  methods.
\newblock \emph{Mathematics of Operations Research}, 43\penalty0 (3):\penalty0
  919--948, 2018.

\bibitem[Du and Hu(2018)]{Du-2018-Linear}
S.~S. Du and W.~Hu.
\newblock Linear convergence of the primal-dual gradient method for
  convex-concave saddle point problems without strong convexity.
\newblock \emph{ArXiv Preprint: 1802.01504}, 2018.

\bibitem[Golshtein(1974)]{Golshtein-1974-Generalized}
E.~G. Golshtein.
\newblock Generalized gradient method for finding saddle points.
\newblock \emph{Matekon}, 10\penalty0 (3):\penalty0 36--52, 1974.

\bibitem[Goodfellow et~al.(2014)Goodfellow, Pouget-Abadie, Mirza, Xu,
  Warde-Farley, Ozair, Courville, and Bengio]{Goodfellow-2014-Generative}
I.~Goodfellow, J.~Pouget-Abadie, M.~Mirza, B.~Xu, D.~Warde-Farley, S.~Ozair,
  A.~Courville, and Y.~Bengio.
\newblock Generative adversarial nets.
\newblock In \emph{NeurIPS}, pages 2672--2680, 2014.

\bibitem[Grnarova et~al.(2018)Grnarova, Levy, Lucchi, Hofmann, and
  Krause]{Grnarova-2018-An}
P.~Grnarova, K.~Y. Levy, A.~Lucchi, T.~Hofmann, and A.~Krause.
\newblock An online learning approach to generative adversarial networks.
\newblock In \emph{ICLR}, 2018.

\bibitem[Heusel et~al.(2017)Heusel, Ramsauer, Unterthiner, Nessler, and
  Hochreiter]{Heusel-2017-Gans}
M.~Heusel, H.~Ramsauer, T.~Unterthiner, B.~Nessler, and S.~Hochreiter.
\newblock {GANs} trained by a two time-scale update rule converge to a local
  nash equilibrium.
\newblock In \emph{NeurIPS}, pages 6626--6637, 2017.

\bibitem[Hommes and Ochea(2012)]{Hommes-2012-Multiple}
C.~H. Hommes and M.~I. Ochea.
\newblock Multiple equilibria and limit cycles in evolutionary games with logit
  dynamics.
\newblock \emph{Games and Economic Behavior}, 74\penalty0 (1):\penalty0
  434--441, 2012.

\bibitem[Jin et~al.(2019)Jin, Netrapalli, and Jordan]{Jin-2019-Minmax}
C.~Jin, P.~Netrapalli, and M.~I. Jordan.
\newblock Minmax optimization: Stable limit points of gradient descent ascent
  are locally optimal.
\newblock \emph{ArXiv Preprint: 1902.00618}, 2019.

\bibitem[Jordan(2018)]{Jordan-2018-Artificial}
M.~I. Jordan.
\newblock Artificial intelligence--the revolution hasn’t happened yet.
\newblock \emph{Medium. Vgl. Ders.(2018): Perspectives and Challenges.
  Presentation SysML}, 2018.

\bibitem[Juditsky et~al.(2011)Juditsky, Nemirovski, and
  Tauvel]{Juditsky-2011-Solving}
A.~Juditsky, A.~Nemirovski, and C.~Tauvel.
\newblock Solving variational inequalities with stochastic mirror-prox
  algorithm.
\newblock \emph{Stochastic Systems}, 1\penalty0 (1):\penalty0 17--58, 2011.

\bibitem[Kong and Monteiro(2019)]{Kong-2019-Accelerated}
W.~Kong and R.~D.~C. Monteiro.
\newblock An accelerated inexact proximal point method for solving
  nonconvex-concave min-max problems.
\newblock \emph{ArXiv Preprint:1905.13433}, 2019.

\bibitem[Korpelevich(1976)]{Korpelevich-1976-Extragradient}
G.~M. Korpelevich.
\newblock The extragradient method for finding saddle points and other
  problems.
\newblock \emph{Matecon}, 12:\penalty0 747--756, 1976.

\bibitem[Kose(1956)]{Kose-1956-Solutions}
T.~Kose.
\newblock Solutions of saddle value problems by differential equations.
\newblock \emph{Econometrica, Journal of the Econometric Society}, pages
  59--70, 1956.

\bibitem[Liang and Stokes(2018)]{Liang-2018-Interaction}
T.~Liang and J.~Stokes.
\newblock Interaction matters: A note on non-asymptotic local convergence of
  generative adversarial networks.
\newblock \emph{ArXiv Preprint: 1802.06132}, 2018.

\bibitem[Lin et~al.(2018)Lin, Liu, Rafique, and Yang]{Lin-2018-Solving}
Q.~Lin, M.~Liu, H.~Rafique, and T.~Yang.
\newblock Solving weakly-convex-weakly-concave saddle-point problems as
  weakly-monotone variational inequality.
\newblock \emph{ArXiv Preprint: 1810.10207}, 2018.

\bibitem[Lu et~al.(2019)Lu, Tsaknakis, Hong, and Chen]{Lu-2019-Hybrid}
S.~Lu, I.~Tsaknakis, M.~Hong, and Y.~Chen.
\newblock Hybrid block successive approximation for one-sided non-convex
  min-max problems: Algorithms and applications.
\newblock \emph{ArXiv Preprint: 1902.08294}, 2019.

\bibitem[Madry et~al.(2017)Madry, Makelov, Schmidt, Tsipras, and
  Vladu]{Madry-2017-Towards}
A.~Madry, A.~Makelov, L.~Schmidt, D.~Tsipras, and A.~Vladu.
\newblock Towards deep learning models resistant to adversarial attacks.
\newblock \emph{ArXiv Preprint: 1706.06083}, 2017.

\bibitem[Mateos et~al.(2010)Mateos, Bazerque, and
  Giannakis]{Mateos-2010-Distributed}
G.~Mateos, J.~A. Bazerque, and G.~B. Giannakis.
\newblock Distributed sparse linear regression.
\newblock \emph{IEEE Transactions on Signal Processing}, 58\penalty0
  (10):\penalty0 5262--5276, 2010.

\bibitem[Mazumdar et~al.(2019)Mazumdar, Jordan, and
  Sastry]{Mazumdar-2019-Finding}
E.~V. Mazumdar, M.~I. Jordan, and S.~S. Sastry.
\newblock On finding local nash equilibria (and only local nash equilibria) in
  zero-sum games.
\newblock \emph{ArXiv Preprint: 1901.00838}, 2019.

\bibitem[Mertikopoulos et~al.(2018)Mertikopoulos, Papadimitriou, and
  Piliouras]{Mertikopoulos-2018-Cycles}
P.~Mertikopoulos, C.~Papadimitriou, and G.~Piliouras.
\newblock Cycles in adversarial regularized learning.
\newblock In \emph{SODA}, pages 2703--2717. SIAM, 2018.

\bibitem[Mertikopoulos et~al.(2019)Mertikopoulos, Lecouat, Zenati, Foo,
  Chandrasekhar, and Piliouras]{Mertikopoulos-2019-Optimistic}
P.~Mertikopoulos, B.~Lecouat, H.~Zenati, C-S Foo, V.~Chandrasekhar, and
  G.~Piliouras.
\newblock Optimistic mirror descent in saddle-point problems: Going the
  extra(-gradient) mile.
\newblock In \emph{ICLR}, 2019.

\bibitem[Mokhtari et~al.(2019{\natexlab{a}})Mokhtari, Ozdaglar, and
  Pattathil]{Mokhtari-2019-Proximal}
A.~Mokhtari, A.~Ozdaglar, and S.~Pattathil.
\newblock Proximal point approximations achieving a convergence rate of
  $o(1/k)$ for smooth convex-concave saddle point problems: Optimistic gradient
  and extra-gradient methods.
\newblock \emph{ArXiv Preprint: 1906.01115}, 2019{\natexlab{a}}.

\bibitem[Mokhtari et~al.(2019{\natexlab{b}})Mokhtari, Ozdaglar, and
  Pattathil]{Mokhtari-2019-Unified}
A.~Mokhtari, A.~Ozdaglar, and S.~Pattathil.
\newblock A unified analysis of extra-gradient and optimistic gradient methods
  for saddle point problems: Proximal point approach.
\newblock \emph{ArXiv Preprint: 1901.08511}, 2019{\natexlab{b}}.

\bibitem[Namkoong and Duchi(2016)]{Namkoong-2016-Stochastic}
H.~Namkoong and J.~C. Duchi.
\newblock Stochastic gradient methods for distributionally robust optimization
  with f-divergences.
\newblock In \emph{NIPS}, pages 2208--2216, 2016.

\bibitem[Nedi{\'c} and Ozdaglar(2009)]{Nedic-2009-Subgradient}
A.~Nedi{\'c} and A.~Ozdaglar.
\newblock Subgradient methods for saddle-point problems.
\newblock \emph{Journal of Optimization Theory and Applications}, 142\penalty0
  (1):\penalty0 205--228, 2009.

\bibitem[Nemirovski(2004)]{Nemirovski-2004-Prox}
A.~Nemirovski.
\newblock Prox-method with rate of convergence o (1/t) for variational
  inequalities with lipschitz continuous monotone operators and smooth
  convex-concave saddle point problems.
\newblock \emph{SIAM Journal on Optimization}, 15\penalty0 (1):\penalty0
  229--251, 2004.

\bibitem[Nesterov(2013)]{Nesterov-2013-Introductory}
Y.~Nesterov.
\newblock \emph{Introductory Lectures on Convex Optimization: A Basic Course},
  volume~87.
\newblock Springer Science \& Business Media, 2013.

\bibitem[Neumann(1928)]{Neumann-1928-Theorie}
J.~V. Neumann.
\newblock Zur theorie der gesellschaftsspiele.
\newblock \emph{Mathematische Annalen}, 100\penalty0 (1):\penalty0 295--320,
  1928.

\bibitem[Nisan et~al.(2007)Nisan, Roughgarden, Tardos, and
  Vazirani]{Nisan-2007-Algorithmic}
N.~Nisan, T.~Roughgarden, E.~Tardos, and V.~V. Vazirani.
\newblock \emph{Algorithmic Game Theory}.
\newblock Cambridge University Press, 2007.

\bibitem[Nouiehed et~al.(2019)Nouiehed, Sanjabi, Huang, Lee, and
  Razaviyayn]{Nouiehed-2019-Solving}
M.~Nouiehed, M.~Sanjabi, T.~Huang, J.~D. Lee, and M.~Razaviyayn.
\newblock Solving a class of non-convex min-max games using iterative first
  order methods.
\newblock In \emph{NeurIPS}, pages 14905--14916, 2019.

\bibitem[Rafique et~al.(2018)Rafique, Liu, Lin, and Yang]{Rafique-2018-Non}
H.~Rafique, M.~Liu, Q.~Lin, and T.~Yang.
\newblock Non-convex min-max optimization: Provable algorithms and applications
  in machine learning.
\newblock \emph{ArXiv Preprint: 1810.02060}, 2018.

\bibitem[Robinson(1951)]{Robinson-1951-Iterative}
J.~Robinson.
\newblock An iterative method of solving a game.
\newblock \emph{Annals of Mathematics}, pages 296--301, 1951.

\bibitem[Rockafellar(2015)]{Rockafellar-2015-Convex}
R.~T. Rockafellar.
\newblock \emph{Convex Analysis}.
\newblock Princeton University Press, 2015.

\bibitem[Sanjabi et~al.(2018)Sanjabi, Ba, Razaviyayn, and
  Lee]{Sanjabi-2018-Convergence}
M.~Sanjabi, J.~Ba, M.~Razaviyayn, and J.~D. Lee.
\newblock On the convergence and robustness of training gans with regularized
  optimal transport.
\newblock In \emph{NeurIPS}, pages 7091--7101, 2018.

\bibitem[Shamma(2008)]{Shamma-2008-Cooperative}
J.~Shamma.
\newblock \emph{Cooperative Control of Distributed Multi-agent Systems}.
\newblock John Wiley \& Sons, 2008.

\bibitem[Sinha et~al.(2018)Sinha, Namkoong, and Duchi]{Sinha-2018-Certifiable}
A.~Sinha, H.~Namkoong, and J.~Duchi.
\newblock Certifiable distributional robustness with principled adversarial
  training.
\newblock In \emph{ICLR}, 2018.

\bibitem[Sion(1958)]{Sion-1958-General}
M.~Sion.
\newblock On general minimax theorems.
\newblock \emph{Pacific Journal of Mathematics}, 8\penalty0 (1):\penalty0
  171--176, 1958.

\bibitem[Thekumparampil et~al.(2019)Thekumparampil, Jain, Netrapalli, and
  Oh]{Thekumparampil-2019-Efficient}
K.~K. Thekumparampil, P.~Jain, P.~Netrapalli, and S.~Oh.
\newblock Efficient algorithms for smooth minimax optimization.
\newblock In \emph{NeurIPS}, pages 12659--12670, 2019.

\bibitem[Uzawa(1958)]{Uzawa-1958-Iterative}
H.~Uzawa.
\newblock Iterative methods for concave programming.
\newblock \emph{Studies in Linear and Nonlinear Programming}, 6:\penalty0
  154--165, 1958.

\bibitem[Von~Neumann and Morgenstern(2007)]{Von-2007-Theory}
J.~Von~Neumann and O.~Morgenstern.
\newblock \emph{Theory of Games and Economic Behavior (Commemorative Edition)}.
\newblock Princeton University Press, 2007.

\bibitem[Xu et~al.(2009)Xu, Caramanis, and Mannor]{Xu-2009-Robustness}
H.~Xu, C.~Caramanis, and S.~Mannor.
\newblock Robustness and regularization of support vector machines.
\newblock \emph{Journal of Machine Learning Research}, 10\penalty0
  (Jul):\penalty0 1485--1510, 2009.

\end{thebibliography}

\newpage

\appendix \onecolumn
\section{Proof of Technical Lemmas}
We provide the complete proofs for the lemmas in Section~\ref{sec:prelim} and Section~\ref{sec:results}. 
\subsection{Proof of Lemma~\ref{Lemma:nc-moreau-envelope}}
We provide a proof for an expanded version of Lemma~\ref{Lemma:nc-moreau-envelope}. 
\begin{lemma}\label{Lemma:nc-moreau-envelope-complete}
If $f$ is $\ell$-smooth and $\YCal$ is bounded, we have
\begin{enumerate}
\item $\Phi_{1/2\ell}(\x)$ and $\prox_{\Phi/2\ell}(\x)$ are well-defined for $\forall\x \in \br^m$. 
\item $\Phi(\prox_{\Phi/2\ell}(\x)) \leq \Phi(\x)$ for any $\x \in \br^m$. 
\item $\Phi_{1/2\ell}$ is differentiable with $\grad\Phi_{1/2\ell}(\x) = 2\ell(\x - \prox_{\Phi/2\ell}(\x))$. 
\item $\Phi_{1/2\ell}(\x') - \Phi_{1/2\ell}(\x) - (\x' - \x)^\top\nabla\Phi_{1/2\ell}(\x) \leq (\ell/2)\|\x' - \x\|^2$ for any $\x', \x \in \br^m$. 
\end{enumerate} 
\end{lemma}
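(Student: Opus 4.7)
The plan is to derive all four claims from the single observation that $\Phi$ is $\ell$-weakly convex, after which parts (1)--(2) follow from a direct variational argument and parts (3)--(4) from Fenchel duality. To establish weak convexity of $\Phi$, note that $\ell$-smoothness of $f$ implies each slice $f(\cdot,\y)$ is $\ell$-weakly convex, i.e., $f(\cdot,\y) + (\ell/2)\|\cdot\|^2$ is convex; taking the pointwise supremum over $\y \in \YCal$ preserves convexity, so $\Phi(\cdot) + (\ell/2)\|\cdot\|^2$ is convex (and finite-valued since $\YCal$ is bounded and $f(\x,\cdot)$ is continuous), i.e., $\Phi$ is $\ell$-weakly convex.

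For parts (1) and (2), I would rewrite the proximal objective as
\begin{equation*}
\Phi(\w) + \ell\|\w - \x\|^2 = \bigl[\Phi(\w) + (\ell/2)\|\w\|^2\bigr] + \bigl[(\ell/2)\|\w\|^2 - 2\ell\,\w^\top\x + \ell\|\x\|^2\bigr],
\end{equation*}
the sum of a convex function and an $\ell$-strongly convex quadratic in $\w$, hence $\ell$-strongly convex (and coercive). This yields a unique minimizer $\w^\star := \prox_{\Phi/2\ell}(\x)$ and a finite value $\Phi_{1/2\ell}(\x)$, proving (1); plugging the suboptimal candidate $\w=\x$ gives $\Phi(\w^\star) + \ell\|\w^\star - \x\|^2 \leq \Phi(\x)$ and hence $\Phi(\w^\star) \leq \Phi(\x)$, proving (2).

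For parts (3) and (4), I would set $\tilde{g}(\w) := \Phi(\w) + \ell\|\w\|^2$, which is $\ell$-strongly convex by the weak convexity of $\Phi$, and expand
\begin{equation*}
\Phi_{1/2\ell}(\x) \;=\; \min_\w \bigl[\tilde{g}(\w) - 2\ell\,\x^\top\w\bigr] + \ell\|\x\|^2 \;=\; -\tilde{g}^*(2\ell\x) + \ell\|\x\|^2,
\end{equation*}
where $\tilde{g}^*$ is the Fenchel conjugate. By the textbook duality between strong convexity and Lipschitz-smoothness, $\tilde{g}^*$ is differentiable with $(1/\ell)$-Lipschitz gradient, and this gradient is identified with the unique maximizer, $\nabla\tilde{g}^*(2\ell\x) = \prox_{\Phi/2\ell}(\x)$. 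The chain rule then gives
\begin{equation*}
\nabla\Phi_{1/2\ell}(\x) \;=\; -2\ell\,\nabla\tilde{g}^*(2\ell\x) + 2\ell\x \;=\; 2\ell\bigl(\x - \prox_{\Phi/2\ell}(\x)\bigr),
\end{equation*}
which is the formula in (3); the Lipschitz bound on $\nabla\tilde{g}^*$ together with the linear-in-$\x$ piece bounds the Lipschitz constant of $\nabla\Phi_{1/2\ell}$ by a multiple of $\ell$, from which (4) follows as the standard descent inequality consequence.

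The main obstacle I expect is step 3: establishing differentiability of the Moreau envelope of the nonsmooth weakly convex $\Phi$ and pinning down its exact gradient. The Fenchel-conjugate reformulation above finesses this cleanly by reducing to well-known properties of conjugates of strongly convex functions. An alternative elementary route would be to derive the sandwich
\begin{equation*}
2\ell(\x' - \w^\star(\x'))^\top(\x' - \x) - \ell\|\x-\x'\|^2 \;\leq\; \Phi_{1/2\ell}(\x') - \Phi_{1/2\ell}(\x) \;\leq\; 2\ell(\x - \w^\star(\x))^\top(\x' - \x) + \ell\|\x-\x'\|^2
\end{equation*}
by plugging $\w^\star(\x)$ (resp.\ $\w^\star(\x')$) as a suboptimal candidate in the definition of $\Phi_{1/2\ell}(\x')$ (resp.\ $\Phi_{1/2\ell}(\x)$), and then invoking Lipschitz continuity of $\w^\star = \prox_{\Phi/2\ell}$, which follows from monotonicity of the subdifferential of $\Phi + (\ell/2)\|\cdot\|^2$ applied to the optimality condition $2\ell(\x - \w^\star(\x)) \in \partial\Phi(\w^\star(\x))$.
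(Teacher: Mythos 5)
Your argument is correct in substance but reaches parts (3)--(4) by a genuinely different and more self-contained route than the paper. For parts (1)--(2) the two proofs essentially coincide: both reduce to the convexity of $\Phi(\cdot)+(\ell/2)\|\cdot\|^2$ plus the strong convexity of the quadratic, though your justification of weak convexity (pointwise supremum of the convex slices $f(\cdot,\y)+(\ell/2)\|\cdot\|^2$) is arguably cleaner than the paper's, which invokes Danskin's theorem where only ``sup of convex functions is convex'' is needed. For parts (3)--(4) the paper simply cites \citet[Lemma~2.2]{Davis-2019-Stochastic} for differentiability and the gradient formula, and \citet[Theorem~2.1.5]{Nesterov-2013-Introductory} for the descent inequality; you instead derive everything from the identity $\Phi_{1/2\ell}(\x)=-\tilde{g}^*(2\ell\x)+\ell\|\x\|^2$ with $\tilde{g}=\Phi+\ell\|\cdot\|^2$ being $\ell$-strongly convex, so that the textbook smoothness of conjugates of strongly convex functions yields differentiability, the gradient identity $\grad\Phi_{1/2\ell}(\x)=2\ell(\x-\prox_{\Phi/2\ell}(\x))$, and a Lipschitz bound. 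What your approach buys is a proof that does not outsource the key analytic fact to an external lemma; what the citation buys is brevity and sharper bookkeeping of constants.

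One caveat on the constants, which is really a defect of the lemma's statement rather than of your argument: the naive triangle inequality on $\x\mapsto 2\ell\x-2\ell\grad\tilde{g}^*(2\ell\x)$ gives a Lipschitz constant of $6\ell$; using cocoercivity of $\grad\tilde{g}^*$ improves this to $2\ell$, and $2\ell$ is in fact sharp (take $\Phi=-(\ell/2)\|\cdot\|^2$, for which $\Phi_{1/2\ell}=-\ell\|\cdot\|^2$; or $\Phi=|\cdot|$ in one dimension, for which $\Phi_{1/2\ell}(\x)=\ell\x^2$ near the origin and the one-sided bound in item~4 holds with constant $\ell$ but not $\ell/2$). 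So neither your route nor any other can deliver the constants $\ell$ and $\ell/2$ claimed in items~3 and~4; the correct constants are $2\ell$ and $\ell$. This discrepancy is harmless downstream, since the paper only uses the gradient identity, the relation $\|\x-\prox_{\Phi/2\ell}(\x)\|=\|\grad\Phi_{1/2\ell}(\x)\|/2\ell$, and order-of-magnitude smoothness, but you should state your version of items~3--4 with the constants your argument actually produces rather than matching the lemma verbatim.
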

\begin{proof}
By the definition of $\Phi$, we have 
\begin{equation*}
\Psi(\x) \doteq \Phi(\x) + \tfrac{\ell\|\x\|^2}{2} = \max_{\y \in \YCal} \left\{f(\x, \y) + \tfrac{\ell\|\x\|^2}{2}\right\}. 
\end{equation*}
Since $f$ is $\ell$-smooth, $f(\x, \y) + (\ell/2)\|\x\|^2$ is convex in $\x$ for any $\y \in \YCal$. Since $\YCal$ is convex and bounded, the Danskin's theorem~\citep{Rockafellar-2015-Convex} implies that $\Psi(\x)$ is convex. Putting these pieces yields that $\Phi(\w) + \ell\left\|\w - \x\right\|^2$ is $(\ell/2)$-strongly convex. This implies that $\Phi_{1/2\ell}(\x)$ and $\prox_{\Phi/2\ell}(\x)$ are well-defined. Furthermore, by the definition of $\prox_{\Phi/2\ell}(\x)$, we have
\begin{equation*}
\Phi(\prox_{\Phi/2\ell}(\x)) \ \leq \ \Phi_{1/2\ell}(\prox_{\Phi/2\ell}(\x)) \ \leq \ \Phi(\x), \quad \forall\x \in \br^m.  
\end{equation*}
Moreover,~\citet[Lemma~2.2]{Davis-2019-Stochastic} implies that $\Phi_{1/2\ell}$ is differentiable with 
\begin{equation*}
\grad\Phi_{1/2\ell}(\x) = 2\ell(\x - \prox_{\Phi/2\ell}(\x)). 
\end{equation*}
Finally, it follows from~\citet[Theorem~2.1.5]{Nesterov-2013-Introductory} that $\Phi_{1/2\ell}$ satisfies the last inequality. 
\end{proof}

\subsection{Proof of Lemma~\ref{Lemma:nc-stationary}}
Denote $\hat{\x} := \prox_{\Phi/2\ell}(\x)$, we have $\grad\Phi_{1/2\ell}(\x) = 2\ell(\x - \hat{\x})$ (cf. Lemma~\ref{Lemma:nc-moreau-envelope}) and hence $\|\hat{\x} - \x\| = \|\grad\Phi_{1/2\ell}(\x)\|/2\ell$. Furthermore, the optimality condition for $\prox_{\Phi/2\ell}(\x)$ implies that $2\ell(\x - \hat{\x}) \in \partial \Phi(\hat{\x})$. Putting these pieces together yields that $\min_{\xi \in \subg\Phi(\hat{\x})} \|\xi\| \leq \|\grad\Phi_{1/2\ell}(\x)\|$.

\subsection{Proof of Lemma~\ref{Lemma:nsc-structure}}
Since $f(\x, \y)$ is strongly concave in $\y$ for each $\x \in \br^m$, a function $\y^\star(\cdot)$ is unique and well-defined. Then we claim that $\y^\star(\cdot)$ is $\kappa$-Lipschitz. Indeed, let $\x_1, \x_2 \in \br^m$, the optimality of $\y^\star(\x_1)$ and $\y^\star(\x_2)$ implies that 
\begin{align}
(\y-\y^\star(\x_1))^\top\grady f(\x_1, \y^\star(\x_1)) & \leq 0, \qquad \forall \y \in \YCal, \label{inequality-Lipschitz-first} \\
(\y-\y^\star(\x_2))^\top\grady f(\x_2, \y^\star(\x_2)) & \leq 0, \qquad \forall \y \in \YCal. \label{inequality-Lipschitz-second}
\end{align}
Letting $\y=\y^\star(\x_2)$ in~\eqref{inequality-Lipschitz-first} and $\y=\y^\star(\x_1)$ in~\eqref{inequality-Lipschitz-second} and summing the resulting two inequalities yields
\begin{equation}\label{inequality-Lipschitz-third}
(\y^\star(\x_2)-\y^\star(\x_1))^\top(\grady f(\x_1, \y^\star(\x_1)) - \grady f(\x_2, \y^\star(\x_2))) \leq 0. 
\end{equation} 
Recall that $f(\x_1, \cdot)$ is $\mu$-strongly concave, we have
\begin{equation}\label{inequality-Lipschitz-fourth}
\left(\y^\star(\x_2) - \y^\star(\x_1)\right)^\top\left(\grady f(\x_1, \y^\star(\x_2)) - \grady f(\x_1, \y^\star(\x_1))\right) + \mu \left\|\y^\star(\x_2) - \y^\star(\x_1)\right\|^2 \leq 0. 
\end{equation}
Then we conclude the desired result by combining~\eqref{inequality-Lipschitz-third} and~\eqref{inequality-Lipschitz-fourth} with $\ell$-smoothness of $f$, i.e., 
\begin{eqnarray*}
\mu \|\y^\star(\x_2) - \y^\star(\x_1)\|^2 & \leq & (\y^\star(\x_2) - \y^\star(\x_1))^\top(\grady f(\x_2, \y^\star(\x_2)) - \grady f(\x_1, \y^\star(\x_2))) \\
& \leq & \ell\|\y^\star(\x_2) - \y^\star(\x_1)\|\|\x_2 - \x_1\|. 
\end{eqnarray*}
Since $\y^\star(\x)$ is unique and $\YCal$ is convex and bounded, we conclude from Danskin's theorem~\citep{Rockafellar-2015-Convex} that $\Phi$ is differentiable with $\grad \Phi(\x) = \gradx f\left(\x, \y^\star(\x)\right)$. Since $\grad \Phi(\x) = \gradx f\left(\x, \y^\star(\x)\right)$, we have
\begin{equation*}
\|\grad \Phi(\x) - \grad \Phi(\x')\| = \|\gradx f(\x, \y^\star(\x)) - \gradx f(\x', \y^\star(\x'))\| \leq \ell(\|\x - \x'\| + \|\y^\star(\x) - \y^\star(\x')\|). 
\end{equation*}
Since $\y^\star(\cdot)$ is $\kappa$-Lipschitz, we conclude the desired result by plugging $\|\y^\star(\x) - \y^\star(\x')\| \leq \kappa$. Since $\kappa \geq 1$, $\Phi$ is $2\kappa\ell$-smooth. The last inequality follows from~\citet[Theorem~2.1.5]{Nesterov-2013-Introductory}. 

\subsection{Proof of Lemma~\ref{Lemma:nc-structure}}
We have $\Phi$ is $\ell$-weakly convex and $\partial\Phi(\x) = \partial\Psi(\x) - \ell\x$ where $\Psi(\x) = \max_{\y \in \YCal} \{f(\x, \y) + (\ell/2)\|\x\|^2\}$. Since $f(\x, \y) + (\ell/2)\|\x\|^2$ is convex in $\x$ for each $\y \in \YCal$ and $\YCal$ is bounded, Danskin's theorem implies that $\gradx f(\x, \y^\star(\x)) + \ell\x \in \partial\Psi(\x)$. Putting these pieces together yields that $\gradx f(\x, \y^\star(\x)) \in \partial\Phi(\x)$. 

\subsection{Proof of Lemma on Stochastic Gradient}
The following lemma establishes some properties of the stochastic gradients sampled at each iteration. 
\begin{lemma}\label{Lemma:SG-unbiased}
$\frac{1}{M}\sum_{i=1}^M G_\x(\x_t, \y_t, \xi_i)$ and $\frac{1}{M}\sum_{i=1}^M G_\y(\x_t, \y_t, \xi_i)$ are unbiased and have bounded variance,
\begin{equation*}
\begin{array}{ll}
\EE\left[\tfrac{1}{M}\sum_{i=1}^M G_\x(\x_t, \y_t, \xi_i)\right] = \gradx f(\x_t, \y_t), & \ \EE\left[\left\|\tfrac{1}{M}\sum_{i=1}^M G_\x(\x_t, \y_t, \xi_i)\right\|^2\right] \leq \|\gradx f(\x_t, \y_t)\|^2 + \tfrac{\sigma^2}{M}, \\
\EE\left[\tfrac{1}{M}\sum_{i=1}^M G_\y(\x_t, \y_t, \xi_i)\right] = \grady f(\x_t, \y_t), & \ \EE\left[\left\|\tfrac{1}{M}\sum_{i=1}^M G_\y(\x_t, \y_t, \xi_i)\right\|^2\right] \leq \|\grady f(\x_t, \y_t)\|^2 + \tfrac{\sigma^2}{M}. 
\end{array}
\end{equation*}
\end{lemma}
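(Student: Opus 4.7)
The plan is to reduce both claims to the single-sample statements in Assumption~\ref{Assumption:stoc-oracle} by invoking linearity of expectation together with the independence of the samples $\{\xi_i\}_{i=1}^M$. The $\x$-block and the $\y$-block are completely parallel, so I would write up only the former and remark that the latter is identical.

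First I would observe that the noise bound in Assumption~\ref{Assumption:stoc-oracle} is stated for the joint gradient $G = (G_\x, G_\y)$, so via the trivial inequality $\|G_\x - \gradx f\|^2 \leq \|G - \grad f\|^2$ it transfers to the partial gradient: $\EE[\|G_\x(\x_t, \y_t, \xi_i) - \gradx f(\x_t, \y_t)\|^2] \leq \sigma^2$ and the analogous bound for $G_\y$. Unbiasedness of the mini-batch average is then just linearity of expectation applied to $M$ i.i.d.\ unbiased summands, which immediately yields the first identity in each line of the lemma.

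The only real computation is the second-moment bound. Setting $V_i := G_\x(\x_t, \y_t, \xi_i) - \gradx f(\x_t, \y_t)$ for brevity, I would use the standard mean-variance split
\begin{equation*}
\EE\left[\left\|\frac{1}{M}\sum_{i=1}^M G_\x(\x_t, \y_t, \xi_i)\right\|^2\right] \ = \ \|\gradx f(\x_t, \y_t)\|^2 + \EE\left[\left\|\frac{1}{M}\sum_{i=1}^M V_i\right\|^2\right],
\end{equation*}
expand the squared norm on the right into diagonal and off-diagonal contributions, and use independence of the $\xi_i$ together with $\EE[V_i] = 0$ to kill every cross term $\EE[V_i^\top V_j]$ with $i \neq j$. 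What remains is $\frac{1}{M^2}\sum_{i=1}^M \EE[\|V_i\|^2] \leq \sigma^2/M$, and substitution into the display above yields the stated inequality.

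I do not foresee any real obstacle: this is a textbook mini-batch variance-reduction argument, and the only technical point worth flagging is the inheritance of the joint noise bound by the partial gradients. The $\y$-block is obtained by repeating the same three lines verbatim with $\grady f$ in place of $\gradx f$.
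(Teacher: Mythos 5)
Your proposal is correct and follows essentially the same route as the paper's proof: unbiasedness by linearity of expectation, and the second-moment bound via the mean-variance split in which independence and zero mean kill the cross terms, leaving $\frac{1}{M^2}\sum_i \EE[\|V_i\|^2] \leq \sigma^2/M$. Your explicit remark that the joint noise bound on $G$ is inherited by the partial gradients $G_\x, G_\y$ is a point the paper glosses over, but it is not a different argument.
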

\begin{proof}
Since $G = (G_\x, G_\y)$ is unbiased, we have 
\begin{equation*}
\EE\left[\tfrac{1}{M}\sum_{i=1}^M G_\x(\x_t, \y_t, \xi_i)\right] = \gradx f(\x_t, \y_t), \quad \EE\left[\tfrac{1}{M}\sum_{i=1}^M G_\y(\x_t, \y_t, \xi_i)\right] = \grady f(\x_t, \y_t). 
\end{equation*}
Furthermore, we have
\begin{eqnarray*}
\EE\left[\left\|\tfrac{1}{M}\sum_{i=1}^M G_\x(\x_t, \y_t, \xi_i) - \gradx f(\x_t, \y_t)\right\|^2\right] & = & \tfrac{\sum_{i=1}^M \EE[\|G_\x(\x_t, \y_t, \xi_i) - \gradx f(\x_t, \y_t)\|^2]}{M^2} \leq \tfrac{\sigma^2}{M}, \\
\EE\left[\left\|\tfrac{1}{M}\sum_{i=1}^M G_\y(\x_t, \y_t, \xi_i) - \grady f(\x_t, \y_t)\right\|^2\right] & = & \tfrac{\sum_{i=1}^M \EE[\|G_\y(\x_t, \y_t, \xi_i) - \grady f(\x_t, \y_t)\|^2]}{M^2} \leq \tfrac{\sigma^2}{M}. 
\end{eqnarray*}
Putting these pieces together yields the desired result. 
\end{proof}

\section{Proof for Propositions~\ref{prop:criterion-nsc} and~\ref{prop:criterion-nc}}
\label{sec:opt}
We provide the detailed proof of Propositions~\ref{prop:criterion-nsc} and~\ref{prop:criterion-nc}. 
\paragraph{Proof of Proposition~\ref{prop:criterion-nsc}:} Assume that a point $\hat{\x}$ satisfies that $\|\grad\Phi(\hat{\x})\| \leq \epsilon$, the optimization problem $\max_{\y \in \YCal} f(\hat{\x}, \y)$ is strongly concave (cf.\ Assumption \ref{Assumption:nsc}) and $\y^\star(\hat{\x})$ is uniquely defined. We apply gradient descent for solving such problem and obtain a point $\y' \in \YCal$ satisfying that 
\begin{equation*}
\y^+ = \proj_\YCal(\y'+(1/\ell)\grady f(\hat{\x}, \y')), \quad \|\y^+ - \y'\| \leq \tfrac{\epsilon}{\ell}, \quad \|\y^+ - \y^\star(\hat{\x})\| \leq \epsilon. 
\end{equation*}
If $\|\grad\Phi(\hat{\x})\| \leq \epsilon$, we have 
\begin{equation*}
\|\gradx f(\hat{\x}, \y^+)\| \leq \|\gradx f(\hat{\x}, \y^+) - \grad \Phi(\hat{\x})\| + \|\grad \Phi(\hat{\x})\| = \|\gradx f(\hat{\x}, \y^+) - \gradx f(\hat{\x}, \y^\star(\hat{\x}))\| + \epsilon. 
\end{equation*}
Since $f(\cdot, \cdot)$ is $\ell$-smooth, we have
\begin{equation*}
\|\gradx f(\hat{\x}, \y^+)\| \leq \ell\|\y^+ - \y^\star(\hat{\x})|\| + \epsilon = O(\epsilon). 
\end{equation*}
The required number of gradient evaluations is $\OCal(\kappa \log(1/\epsilon))$. This argument holds for applying stochastic gradient with proper stepsize and the required number of stochastic gradient evaluations is $\OCal(1/\epsilon^2)$. 

Conversely, if a point $(\hat{\x}, \hat{\y})$ satisfies that 
\begin{equation*}
\|\gradx f(\hat{\x}, \hat{\y}^+)\| \leq \tfrac{\epsilon}{\kappa}, \quad \|\hat{\y}^+ - \hat{\y}\| \leq \tfrac{\epsilon}{\kappa\ell}, 
\end{equation*}
where $\hat{\y}^+ = \proj_\YCal(\hat{\y} + (1/\ell) \grady f(\hat{\x}, \hat{\y}))$. Then, we have 
\begin{equation*}
\|\nabla \Phi(\hat{\x})\| \leq \|\nabla \Phi(\hat{\x}) - \nabla_\x f(\hat{\x}, \hat{\y}^+)\| + \|\nabla_\x f(\hat{\x}, \hat{\y}^+)\| \leq \ell\|\hat{\y}^+ - \y^\star(\hat{\x})\| + \tfrac{\epsilon}{\kappa}. 
\end{equation*}
Since $f(\hat{\x}, \cdot)$ is $\mu$-strongly-concave over $\YCal$, the error bound condition~\citep{Drusvyatskiy-2018-Error} holds here and we have
\begin{equation*}
\|\hat{\y}^+ - \y^\star(\hat{\x})\| \leq \|\hat{\y} - \y^\star(\hat{\x})\| \leq \kappa\|\proj_\YCal(\hat{\y} + (1/\ell) \grady f(\hat{\x}, \hat{\y})) - \hat{\y}\| \leq \tfrac{\epsilon}{\ell}.  
\end{equation*}
Therefore, we conclude that 
\begin{equation*}
\|\nabla \Phi(\hat{\x})\| \leq \epsilon + \tfrac{\epsilon}{\kappa} = O(\epsilon). 
\end{equation*}
This completes the proof. 

\subsection{Proof of Proposition~\ref{prop:criterion-nc}}
Suppose that $\hat{\x}$ satisfies that $\|\grad\Phi_{1/2\ell}(\hat{\x})\| \leq\epsilon$, the function $f(\x, \y) + \ell\|\x - \hat{\x}\|^2$ is strongly convex in $\x$ and concave in $\y$ (cf. Assumption \ref{Assumption:nc}) and $\x^\star(\hat{\x}) = \argmin_{\x \in \br^m} \Phi(\x) + \ell\|\x - \hat{\x}\|^2$ is uniquely defined. We apply extragradient algorithm for solving such problem and obtain a point $(\x', \y')$ satisfying that 
\begin{equation*}
\|\nabla_\x f(\x', \y^+) + 2\ell(\x' - \hat{\x})\| \leq \epsilon, \quad \|\y^+ - \y'\| \leq \tfrac{\epsilon}{\ell}, \quad \|\x' - \x^\star(\hat{\x})\| \leq \tfrac{\epsilon}{\ell}. 
\end{equation*}
where $\y^+ = \proj_\YCal(\y'+(1/\ell)\grady f(\x', \y'))$. Since $2\ell\|\x^\star(\hat{\x}) - \hat{\x}\| = \|\nabla \Phi_{1/2\ell}(\hat{\x})\| \leq \epsilon$, we have 
\begin{eqnarray*}
\|\nabla_\x f(\x', \y^+)\| & \leq & \|\nabla_\x f(\x', \y^+) + 2\ell(\x' - \hat{\x})\| + 2\ell\|\x' - \hat{\x}\| \leq  \epsilon + 2\ell\|\x' - \x^\star(\hat{\x})\| + 2\ell\|\x^\star(\hat{\x}) - \hat{\x}\| \\
& \leq & 3\epsilon + \epsilon = O(\epsilon). 
\end{eqnarray*}
The required number of gradient evaluations is $O(\epsilon^{-2})$~\citep{Mokhtari-2019-Proximal}. This argument holds for applying stochastic mirror-prox algorithm and the required number of stochastic gradient evaluations is $O(\epsilon^{-4})$~\citep{Juditsky-2011-Solving}. 

Conversely, we let $\hat{\y}^+ = \proj_\YCal(\hat{\y}+(1/\ell)\grady f(\hat{\x}, \hat{\y}))$ for simplicity. By definition, we have
\begin{equation*}
\|\grad \Phi_{1/2\ell}(\hat{\x})\|^2 = 4\ell^2\|\hat{\x} - \x^\star(\hat{\x})\|^2. 
\end{equation*}
Since $\Phi(\cdot) + \ell\|\cdot - \hat{\x}\|^2$ is $\ell/2$-strongly-convex, we have
\begin{eqnarray}\label{inequality:criterion-nc-first}
\lefteqn{\max_{\y \in \YCal} f(\hat{\x}, \y) - \max_{\y \in \YCal} f(\x^\star(\hat{\x}), \y) - \ell\|\hat{\x} - \x^\star(\hat{\x})\|^2} \\ 
& = & \Phi(\hat{\x}) - \Phi(\x^\star(\hat{\x})) - \ell\|\x^\star(\hat{\x}) - \hat{\x}\|^2 \geq \tfrac{\ell\|\hat{\x} - \x^\star(\hat{\x})\|^2}{4} = \tfrac{\|\grad \Phi_{1/2\ell}(\hat{\x})\|^2}{16\ell}. \nonumber
\end{eqnarray}
Furthermore, we have
\begin{eqnarray*}
\lefteqn{\max_{\y \in \YCal} f(\hat{\x}, \y) - \max_{\y \in \YCal} f(\x^\star(\hat{\x}), \y) - \ell\|\x^\star(\hat{\x}) - \hat{\x}\|^2} \\
& \leq & \max_{\y \in \YCal} f(\hat{\x}, \y) - f(\hat{\x}, \hat{\y}^+) + f(\hat{\x}, \hat{\y}^+) - \max_{\y \in \YCal} f(\x^\star(\hat{\x}), \y) - \ell\|\x^\star(\hat{\x}) - \hat{\x}\|^2 \\
& \leq & \max_{\y \in \YCal} f(\hat{\x}, \y) - f(\hat{\x}, \hat{\y}^+) + (f(\hat{\x}, \hat{\y}^+) - f(\x^\star(\hat{\x}), \hat{\y}^+) - \ell\|\x^\star(\hat{\x}) - \hat{\x}\|^2) \\ 
& \leq & \max_{\y \in \YCal} f(\hat{\x}, \y) - f(\hat{\x}, \hat{\y}^+) + (\|\hat{\x} - \x^\star(\hat{\x})\|\|\gradx f(\hat{\x}, \hat{\y}^+)\| - \ell\|\hat{\x} - \x^\star(\hat{\x})\|^2) \\
& \leq & \max_{\y \in \YCal} f(\hat{\x}, \y) - f(\hat{\x}, \hat{\y}^+) + \tfrac{\|\gradx f(\hat{\x}, \hat{\y}^+)\|^2}{4\ell}. 
\end{eqnarray*}
By the definition of $\hat{\y}^+$, we have
\begin{equation*}
(\y - \hat{\y}^+)^\top(\hat{\y}^+ - \hat{\y} - (1/\ell)\grady f(\hat{\x}, \hat{\y})) \geq 0, \quad \textnormal{for all } \y \in \YCal. 
\end{equation*}
Together with the $\ell$-smoothness of the function $f(\hat{\x}, \cdot)$ and the boundedness of $\YCal$, we have
\begin{equation*}
f(\hat{\x}, \y) - f(\hat{\x}, \hat{\y}^+) \leq \tfrac{\ell}{2}(\|\y - \hat{\y}\|^2 - \|\y - \hat{\y}^+\|^2) \leq \ell D\|\hat{\y}^+ - \hat{\y}\|, \quad \textnormal{for all } \y \in \YCal. 
\end{equation*}
Putting these pieces together yields that 
\begin{equation*}
\max_{\y \in \YCal} f(\hat{\x}, \y) - \max_{\y \in \YCal} f(\x^\star(\hat{\x}), \y) - \ell\|\x^\star(\hat{\x}) - \hat{\x}\|^2 \leq \ell D\|\hat{\y}^+ - \hat{\y}\| + \tfrac{\|\gradx f(\hat{\x}, \hat{\y}^+)\|^2}{4\ell}. 
\end{equation*}
Since a point $(\hat{\x}, \hat{\y})$ satisfies $\|\nabla_\x f(\hat{\x}, \hat{\y}^+)\| \leq \epsilon^2/(\ell D)$ and $\|\hat{\y}^+ - \hat{\y}\| \leq \epsilon^2/(\ell^2 D)$, we have
\begin{equation}\label{inequality:criterion-nc-second}
\max_{\y \in \YCal} f(\hat{\x}, \y) - \max_{\y \in \YCal} f(\x^\star(\hat{\x}), \y) - \ell\|\x^\star(\hat{\x}) - \hat{\x}\|^2 \leq \tfrac{\epsilon^2}{\ell} + \tfrac{\epsilon^4}{4\ell^3 D^2}. 
\end{equation}
Putting these pieces together yields that $\|\grad \Phi_{1/2\ell}(\hat{\x})\| = O(\epsilon)$. This completes the proof. 

\section{Proof of Theorems in Section~\ref{sec:results_sc}}
We first specify the choice of parameters in Theorem~\ref{Theorem:nsc-GDA-complexity-bound} and Theorem~\ref{Theorem:nsc-SGDA-complexity-bound}. Then, we present the proof of the main theorems in Section~\ref{sec:results_sc} with several technical lemmas. Note first that the case of $\ell D \lesssim \epsilon$ is trivial. Indeed, this means that the set $\YCal$ is sufficiently small such that a single gradient ascent step is enough for approaching the $\epsilon$-neighborhood of the optimal solution. In this case, the nonconvex-strongly-concave minimax problem reduces to a nonconvex smooth minimization problem, which has been studied extensively in the existing literature. 

\subsection{Choice of Parameters in Theorem~\ref{Theorem:nsc-GDA-complexity-bound} and~\ref{Theorem:nsc-SGDA-complexity-bound}}
In this subsection, we present the full version of Theorems~\ref{Theorem:nsc-GDA-complexity-bound} and~\ref{Theorem:nsc-SGDA-complexity-bound} with the detailed choice of $\eta_\x$, $\eta_\y$ and $M$ which are important to subsequent analysis.  
\begin{theorem}\label{Theorem:nsc-GDA-app}
Under Assumption~\ref{Assumption:nsc} and letting the step sizes $\eta_\x > 0$ and $\eta_\y > 0$ be chosen as $\eta_\x = 1/[16(\kappa + 1)^2\ell]$ and $\eta_\y = 1/\ell$, the iteration complexity of Algorithm \ref{Algorithm:GDA} to return an $\epsilon$-stationary point is bounded by 
\begin{equation*}
O\left(\frac{\kappa^2\ell\Delta_\Phi + \kappa\ell^2 D^2}{\epsilon^2}\right), 
\end{equation*}
which is also the total gradient complexity of the algorithm. 
\end{theorem}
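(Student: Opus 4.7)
The plan is to run two coupled recursions in parallel: a descent inequality on the outer potential $\Phi$, and a contraction inequality on the tracking error $\delta_t := \|\y_t - \y^\star(\x_t)\|^2$. The whole argument rests on the fact, established in Lemma~\ref{Lemma:nsc-structure}, that $\Phi$ is $(\kappa+1)\ell$-smooth and $\y^\star(\cdot)$ is $\kappa$-Lipschitz, together with the strong concavity of $f(\x_t,\cdot)$.

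First, I would write the outer descent step. Because $\Phi$ is $(\kappa+1)\ell$-smooth and $\x_{t+1}=\x_t-\eta_\x\gradx f(\x_t,\y_t)$, expanding and using $\grad\Phi(\x_t)=\gradx f(\x_t,\y^\star(\x_t))$ together with $\ell$-smoothness of $f$ to control $\|\gradx f(\x_t,\y_t)-\grad\Phi(\x_t)\|\le\ell\sqrt{\delta_t}$ yields
\begin{equation*}
\Phi(\x_{t+1})\le\Phi(\x_t)-\tfrac{\eta_\x}{2}\|\grad\Phi(\x_t)\|^2+O(\eta_\x\ell^2)\,\delta_t,
\end{equation*}
once $\eta_\x\lesssim 1/((\kappa+1)\ell)$, which the stated $\eta_\x=1/(16(\kappa+1)^2\ell)$ easily satisfies.

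Second, I would derive the contraction on $\delta_t$. With $\eta_\y=1/\ell$, the standard projected-gradient-ascent analysis on the $\mu$-strongly-concave, $\ell$-smooth map $f(\x_t,\cdot)$ gives $\|\y_{t+1}-\y^\star(\x_t)\|^2\le(1-1/\kappa)\|\y_t-\y^\star(\x_t)\|^2$. Shifting the reference point to $\y^\star(\x_{t+1})$ via the $\kappa$-Lipschitzness of $\y^\star$ and the Young-type split $(a+b)^2\le(1+\alpha)a^2+(1+1/\alpha)b^2$ with $\alpha=\Theta(1/\kappa)$ gives
\begin{equation*}
\delta_{t+1}\le\bigl(1-\tfrac{1}{2\kappa}\bigr)\delta_t+O(\kappa^3)\,\eta_\x^2\|\gradx f(\x_t,\y_t)\|^2.
\end{equation*}
Bounding $\|\gradx f(\x_t,\y_t)\|^2\le 2\|\grad\Phi(\x_t)\|^2+2\ell^2\delta_t$ and absorbing the new $\delta_t$ term (this is the role of $\eta_\x=\Theta(1/(\kappa^2\ell))$) produces a clean recursion $\delta_{t+1}\le(1-1/(4\kappa))\delta_t+O(\kappa^3\eta_\x^2)\|\grad\Phi(\x_t)\|^2$.

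Third, I would telescope. Summing the contraction gives $\sum_{t=0}^{T}\delta_t\le 4\kappa\,\delta_0+O(\kappa^4\eta_\x^2)\sum_{t=0}^{T}\|\grad\Phi(\x_t)\|^2$, and plugging this into the summed descent inequality yields
\begin{equation*}
\Phi(\x_{T+1})-\Phi(\x_0)\le-\Bigl(\tfrac{\eta_\x}{2}-O(\eta_\x^3\kappa^4\ell^2)\Bigr)\sum_{t=0}^{T}\|\grad\Phi(\x_t)\|^2+O(\eta_\x\kappa\ell^2)\delta_0.
\end{equation*}
With the stated $\eta_\x$ the bracketed coefficient is $\Omega(\eta_\x)$, so rearranging and noting $\delta_0\le D^2$ and $\Phi(\x_0)-\Phi(\x_{T+1})\le\Delta_\Phi$ gives $\frac{1}{T+1}\sum_t\|\grad\Phi(\x_t)\|^2=O((\kappa^2\ell\Delta_\Phi+\kappa\ell^2D^2)/(T+1))$. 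Since $\hat\x$ is drawn uniformly, setting the right-hand side to $\epsilon^2$ yields the claimed $T=O((\kappa^2\ell\Delta_\Phi+\kappa\ell^2D^2)/\epsilon^2)$.

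\paragraph{Main obstacle.} The delicate step is the second one: the tracking-error recursion only contracts at rate $1-\Theta(1/\kappa)$, while the new error injected per step scales like $\eta_\x^2\|\gradx f(\x_t,\y_t)\|^2$, which itself contains a $\delta_t$ contribution through the $\ell$-smoothness split. The recursion is therefore self-referential and only closes if $\eta_\x$ is chosen small enough (roughly $1/(\kappa^2\ell)$) to guarantee that the $\delta_t$-in-$\delta_{t+1}$ feedback is strictly less than the $1-1/(2\kappa)$ contraction slack. Calibrating the Young constant, the stepsize, and the absorption so that all $\delta_t$-terms can be bundled into a genuine linear contraction — rather than a merely stable, non-shrinking recursion — is where the $\Theta(\kappa^2)$ two-time-scale separation is forced, and it is the crux of the argument.
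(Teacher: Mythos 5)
Your proposal is correct and follows essentially the same route as the paper: the same descent inequality on $\Phi$ with error $O(\eta_\x\ell^2)\delta_t$, the same contraction $\delta_{t+1}\le(1-\Theta(1/\kappa))\delta_t+O(\kappa^3\eta_\x^2)\|\grad\Phi(\x_t)\|^2$ obtained by combining strongly-concave ascent, the $\kappa$-Lipschitzness of $\y^\star$, and absorption of the self-referential $\delta_t$ feedback via $\eta_\x=\Theta(1/\kappa^2\ell)$, and the same telescoping (the paper unrolls the recursion before summing, you sum it directly — an equivalent computation). You also correctly identify the crux, namely that the $\Theta(\kappa^2)$ stepsize separation is exactly what closes the self-referential recursion.
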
 
\begin{theorem}\label{Theorem:nsc-SGDA-app}
Under Assumptions~\ref{Assumption:stoc-oracle} and~\ref{Assumption:nsc} and letting the step sizes $\eta_\x > 0$ and $\eta_\y > 0$ be the same in Theorem~\ref{Theorem:nsc-GDA-complexity-bound} with the batch size $M = \max\{1, 48\kappa\sigma^2\epsilon^{-2}\}$, the number of iterations required by Algorithm \ref{Algorithm:SGDA} to return an $\epsilon$-stationary point is bounded by $O((\kappa^2\ell\Delta_\Phi + \kappa\ell^2 D^2)\epsilon^{-2})$ which gives the total gradient complexity of the algorithm: 
\begin{equation*}
O\left(\frac{\kappa^2\ell\Delta_\Phi + \kappa\ell^2 D^2}{\epsilon^2}\max\left\{1, \ \frac{\kappa\sigma^2}{\epsilon^2}\right\}\right). 
\end{equation*}
\end{theorem}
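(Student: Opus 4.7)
The plan is to execute the roadmap sketched in Section~\ref{Sec:Overview}, tracking the stochastic noise carefully. The argument has three ingredients: a descent inequality for $\Phi$, a one-step recursion for the tracking error $\delta_t = \E[\|\y_t - \y^\star(\x_t)\|^2]$, and a combination step where $\eta_\x, \eta_\y, M$ are tuned so that the coupling and noise terms are absorbed. The deterministic Theorem~\ref{Theorem:nsc-GDA-app} is recovered by dropping the $\sigma^2/M$ contributions.

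For the first ingredient I would use the $(2\kappa\ell)$-smoothness of $\Phi$ from Lemma~\ref{Lemma:nsc-structure} to write
$$\Phi(\x_{t+1}) \leq \Phi(\x_t) - \eta_\x \grad\Phi(\x_t)^\top \stocgx_t + \kappa\ell\eta_\x^2\|\stocgx_t\|^2,$$
then take expectations, apply Lemma~\ref{Lemma:SG-unbiased} (so $\E\|\stocgx_t\|^2 \leq \|\gradx f(\x_t,\y_t)\|^2 + \sigma^2/M$), and use the identity $\grad\Phi(\x_t) = \gradx f(\x_t,\y^\star(\x_t))$ together with the $\ell$-smoothness of $f$ and Young's inequality to split $\grad\Phi(\x_t)^\top\gradx f(\x_t,\y_t)$ into a leading $\|\grad\Phi(\x_t)\|^2$ term and an $\ell^2\delta_t$ error. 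With $\eta_\x = 1/[16(\kappa+1)^2\ell]$ chosen so that $L_\Phi\eta_\x \leq 1/16$, this collapses to
$$\E[\Phi(\x_{t+1})] - \E[\Phi(\x_t)] \leq -\frac{\eta_\x}{4}\E\|\grad\Phi(\x_t)\|^2 + C_1\eta_\x\ell^2\delta_t + \frac{C_2\kappa\ell\eta_\x^2\sigma^2}{M}.$$

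For the second ingredient, the strong concavity of $f(\x_t,\cdot)$ and the projected stochastic ascent step with $\eta_\y = 1/\ell$ give the standard contraction $\E[\|\y_{t+1} - \y^\star(\x_t)\|^2 \mid \x_t,\y_t] \leq (1 - \mu\eta_\y)\|\y_t - \y^\star(\x_t)\|^2 + \eta_\y^2\sigma^2/M$. To pass from $\y^\star(\x_t)$ to $\y^\star(\x_{t+1})$ I would insert the Lipschitz bound $\|\y^\star(\x_{t+1})-\y^\star(\x_t)\| \leq \kappa\|\x_{t+1}-\x_t\|$ from Lemma~\ref{Lemma:nsc-structure}, split $\|\x_{t+1}-\x_t\|^2$ into signal and variance via Lemma~\ref{Lemma:SG-unbiased}, and apply Young's inequality with weight $\Theta(1/\kappa)$ so the contraction factor stays at $1 - \Omega(1/\kappa)$. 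This yields
$$\delta_{t+1} \leq \left(1 - \frac{1}{2\kappa}\right)\delta_t + C_3\kappa^3\eta_\x^2\E\|\grad\Phi(\x_t)\|^2 + \frac{C_4(\eta_\y^2 + \kappa\eta_\x^2)\sigma^2}{M}.$$
Unrolling and summing (the geometric series contributes a factor of $O(\kappa)$) gives $\sum_{t=0}^T \delta_t \leq O(\kappa D^2) + O(\kappa^4\eta_\x^2)\sum_{t=0}^T\E\|\grad\Phi(\x_t)\|^2 + O(\kappa(\eta_\y^2+\kappa\eta_\x^2)\sigma^2 T/M)$ using $\delta_0 \leq D^2$.

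Combining the two inequalities and substituting $\eta_\x = \Theta(1/\kappa^2\ell)$ and $\eta_\y = \Theta(1/\ell)$, the coefficient $\ell^2\kappa^4\eta_\x^2$ on $\sum\E\|\grad\Phi(\x_t)\|^2$ is $O(1)$ and, with sufficiently small hidden constants, gets absorbed into the $\eta_\x/4$ descent term. Rearranging gives
$$\frac{1}{T+1}\sum_{t=0}^T\E\|\grad\Phi(\x_t)\|^2 \lesssim \frac{\kappa^2\ell\Delta_\Phi + \kappa\ell^2 D^2}{T} + \frac{\kappa\sigma^2}{M}.$$
Setting $T = \Theta((\kappa^2\ell\Delta_\Phi + \kappa\ell^2 D^2)/\epsilon^2)$ and $M = \max\{1,26\kappa\sigma^2/\epsilon^2\}$ drives both terms below $\epsilon^2$, and uniform sampling of $\hat\x$ yields $\E\|\grad\Phi(\hat\x)\|^2 \leq \epsilon^2$; the total stochastic gradient count is $T\cdot M$, matching the claim. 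The main obstacle I foresee is calibrating the $\delta_t$ recursion: the triangle-inequality jump from $\y^\star(\x_t)$ to $\y^\star(\x_{t+1})$ inflates the $\|\grad\Phi(\x_t)\|^2$ coefficient to $\kappa^3\eta_\x^2$, and the noise telescope amplifies $\sigma^2/M$ by a factor of $\kappa$. Balancing these two trade-offs simultaneously is precisely what pins down $\eta_\x = \Theta(1/\kappa^2\ell)$ and $M = \Omega(\kappa\sigma^2/\epsilon^2)$.
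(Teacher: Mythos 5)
Your proposal follows the same route as the paper's proof: a smoothness-based descent inequality for $\Phi$ with the bias controlled by $\ell^2\delta_t$ and an additive $\eta_\x^2\kappa\ell\sigma^2/M$ noise term (the paper's Lemmas~\ref{Lemma:nsc-key-descent} and~\ref{Lemma:nsc-key-objective}), a linear contraction for $\delta_t$ obtained from strong concavity plus the $\kappa$-Lipschitzness of $\y^\star(\cdot)$ via Young's inequality (Lemma~\ref{Lemma:nsc-key-neighborhood}), and then unrolling the geometric series (factor $O(\kappa)$) so the gradient-coupling term is absorbed into the descent term by the choice $\eta_\x=\Theta(1/\kappa^2\ell)$, leaving the $O(\kappa\sigma^2/M)$ floor that dictates $M=\Omega(\kappa\sigma^2/\epsilon^2)$. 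The only discrepancy is a harmless bookkeeping one: the noise contribution in your $\delta_t$ recursion should be $\Theta(\kappa^3\eta_\x^2\sigma^2/M)$ rather than $\Theta(\kappa\eta_\x^2\sigma^2/M)$, but since $\kappa^3\eta_\x^2\lesssim 1/\ell^2=\eta_\y^2$ under your stepsize choice it is dominated by the $\eta_\y^2\sigma^2/M$ term you already carry, and the final rate is unchanged.
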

\subsection{Proof of Technical Lemmas}
In this subsection, we present three key lemmas which are important for the subsequent analysis. 
\begin{lemma}\label{Lemma:nsc-key-descent}
For two-timescale GDA, the iterates $\{\x_t\}_{t \geq 1}$ satisfies the following inequality, 
\begin{equation*}
\Phi(\x_t) \leq \Phi(\x_{t-1}) - \left(\tfrac{\eta_{\x}}{2} - 2\eta_{\x}^2\kappa\ell\right) \|\grad \Phi(\x_{t-1})\|^2 + \left(\tfrac{\eta_{\x}}{2} + 2\eta_{\x}^2\kappa\ell\right)\|\grad\Phi(\x_{t-1}) - \gradx f(\x_{t-1}, \y_{t-1})\|^2. 
\end{equation*}
For two-timescale SGDA, the iterates $\{\x_t\}_{t \geq 1}$ satisfy the following inequality:
\begin{eqnarray*}
\EE[\Phi(\x_t)] & \leq & \EE[\Phi(\x_{t-1})] - \left(\tfrac{\eta_{\x}}{2} - 2\eta_{\x}^2\kappa\ell\right) \EE[\|\grad \Phi(\x_{t-1})\|^2] \\
& & + \left(\tfrac{\eta_{\x}}{2} + 2\eta_{\x}^2\kappa\ell\right)\EE[\|\grad\Phi(\x_{t-1}) - \gradx f(\x_{t-1}, \y_{t-1})\|^2] + \tfrac{\eta_\x^2\kappa\ell\sigma^2}{M}.
\end{eqnarray*}
\end{lemma}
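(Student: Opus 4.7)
The plan is to apply the descent lemma for the $L_\Phi$-smooth function $\Phi$ (where $L_\Phi \le 2\kappa\ell$ by Lemma~\ref{Lemma:nsc-structure}, using $\kappa\ge 1$), plug in the GDA update, and then manage the mismatch between the true gradient $\nabla\Phi(\x_{t-1})$ and the partial gradient $\nabla_\x f(\x_{t-1},\y_{t-1})$ that the algorithm actually uses. Let $a=\nabla\Phi(\x_{t-1})$, $b=\nabla_\x f(\x_{t-1},\y_{t-1})$, and $e=b-a$. The $\x$-update gives $\x_t-\x_{t-1}=-\eta_\x b$, so smoothness yields
\begin{equation*}
\Phi(\x_t) \;\le\; \Phi(\x_{t-1}) - \eta_\x\langle a,b\rangle + \tfrac{L_\Phi \eta_\x^2}{2}\|b\|^2.
\end{equation*}

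Next I would rewrite $-\eta_\x\langle a,b\rangle = -\tfrac{\eta_\x}{2}\|a\|^2 - \tfrac{\eta_\x}{2}\|b\|^2 + \tfrac{\eta_\x}{2}\|e\|^2$ using the polarization identity $2\langle a,b\rangle = \|a\|^2+\|b\|^2-\|a-b\|^2$, and apply $\|b\|^2\le 2\|a\|^2+2\|e\|^2$ to the remaining $\|b\|^2$ term that carries the quadratic factor $L_\Phi\eta_\x^2/2 \le \kappa\ell\eta_\x^2$. The negative $-\tfrac{\eta_\x}{2}\|b\|^2$ term can be dropped (it only helps), and collecting coefficients of $\|a\|^2$ and $\|e\|^2$ produces exactly the deterministic bound
\begin{equation*}
\Phi(\x_t) \;\le\; \Phi(\x_{t-1}) - \bigl(\tfrac{\eta_\x}{2}-2\eta_\x^2\kappa\ell\bigr)\|a\|^2 + \bigl(\tfrac{\eta_\x}{2}+2\eta_\x^2\kappa\ell\bigr)\|e\|^2,
\end{equation*}
which is the claimed inequality after substituting the definitions of $a$ and $e$.

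For the stochastic case I would run the identical chain of inequalities starting from smoothness, but now with the mini-batch estimator $\tilde g := \tfrac{1}{M}\sum_{i=1}^M G_\x(\x_{t-1},\y_{t-1},\xi_i)$ in place of $b$ on the right-hand side. Taking conditional expectation given $(\x_{t-1},\y_{t-1})$, Lemma~\ref{Lemma:SG-unbiased} gives $\E[\tilde g]=b$ and $\E[\|\tilde g\|^2]\le \|b\|^2+\sigma^2/M$; the linear term $\E[\langle a,\tilde g\rangle]=\langle a,b\rangle$ is unchanged, while the quadratic term contributes the extra variance piece $\tfrac{L_\Phi\eta_\x^2\sigma^2}{2M}\le \tfrac{\eta_\x^2\kappa\ell\sigma^2}{M}$. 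Applying the same polarization/Young manipulations as above and then taking total expectation yields the stochastic inequality in the lemma.

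Nothing in this argument is genuinely subtle; the only item that needs care is the bookkeeping of the two coefficients $(\tfrac{\eta_\x}{2}\pm 2\eta_\x^2\kappa\ell)$, which must come out symmetric around $\eta_\x/2$ with the correct $2\kappa\ell$ factor. The main (minor) obstacle is therefore ensuring the right Young-type split is used so that (i)~the $\|b\|^2$ term from smoothness is cleanly converted into $\|a\|^2$ and $\|e\|^2$ via $\|b\|^2\le 2\|a\|^2+2\|e\|^2$, and (ii)~the stochastic variance $\sigma^2/M$ enters only through the $\|\tilde g\|^2$ term and not through the cross term, which is why the unbiasedness in Lemma~\ref{Lemma:SG-unbiased} is essential.
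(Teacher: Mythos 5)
Your proposal is correct and follows essentially the same route as the paper's proof: descent lemma for the $2\kappa\ell$-smooth (in the paper, $(\ell+\kappa\ell)$-smooth) function $\Phi$, a split of the cross term $-\eta_\x\langle a,b\rangle$ into $-\tfrac{\eta_\x}{2}\|a\|^2+\tfrac{\eta_\x}{2}\|e\|^2$ (your polarization identity with the $-\tfrac{\eta_\x}{2}\|b\|^2$ term dropped is algebraically the same as the paper's Young-inequality step), the bound $\|b\|^2\le 2\|a\|^2+2\|e\|^2$ on the quadratic term, and unbiasedness plus the variance bound from Lemma~\ref{Lemma:SG-unbiased} to pick up the extra $\eta_\x^2\kappa\ell\sigma^2/M$ in the stochastic case.
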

\begin{proof}
We first consider the deterministic setting. Since $\Phi$ is $(\ell + \kappa\ell)$-smooth, we have
\begin{equation}\label{nsc-descent-first}
\Phi(\x_t) - \Phi(\x_{t-1}) - (\x_t - \x_{t-1})^\top\nabla\Phi(\x_{t-1}) \leq \kappa\ell\|\x_t - \x_{t-1}\|^2. 
\end{equation}
Plugging $\x_t - \x_{t-1} = - \eta_\x\gradx f(\x_{t-1}, \y_{t-1})$ into~\eqref{nsc-descent-first} yields that
\begin{eqnarray}\label{nsc-descent-second}
\Phi(\x_t) & \leq & \Phi(\x_{t-1})  - \eta_\x\|\nabla\Phi(\x_{t-1})\|^2 + \eta_\x^2\kappa\ell\|\gradx f(\x_{t-1}, \y_{t-1})\|^2 \\
& & + \eta_\x(\nabla\Phi(\x_{t-1}) - \gradx f(\x_{t-1}, \y_{t-1}))^\top \nabla\Phi(\x_{t-1}). \nonumber
\end{eqnarray}
Using the Young's inequality, we have
\begin{equation}\label{nsc-descent-third}
(\nabla\Phi(\x_{t-1}) - \gradx f(\x_{t-1}, \y_{t-1}))^\top \nabla\Phi(\x_{t-1}) \leq \tfrac{\|\nabla\Phi(\x_{t-1}) - \gradx f(\x_{t-1}, \y_{t-1})\|^2 + \|\nabla\Phi(\x_{t-1})\|^2}{2}. 
\end{equation}
By the Cauchy-Schwartz inequality, we have
\begin{equation}\label{nsc-descent-fourth}
\|\gradx f(\x_{t-1}, \y_{t-1})\|^2 \leq 2\left(\|\nabla\Phi(\x_{t-1}) - \gradx f(\x_{t-1}, \y_{t-1})\|^2 + \|\nabla\Phi(\x_{t-1})\|^2\right).
\end{equation}
Plugging~\eqref{nsc-descent-third} and~\eqref{nsc-descent-fourth} into~\eqref{nsc-descent-second} yields the first desired inequality. 

We proceed to consider the stochastic setting. Plugging $\x_t - \x_{t-1} = - \eta_\x\left(\frac{1}{M}\sum_{i=1}^{M} G_\x(\x_{t-1}, \y_{t-1}, \xi_i)\right)$ into~\eqref{nsc-descent-first} yields that
\begin{eqnarray*}
\Phi(\x_t) & \leq & \Phi(\x_{t-1})  - \eta_\x\|\nabla\Phi(\x_{t-1})\|^2 + \eta_\x^2\kappa\ell\left\|\tfrac{1}{M}\sum_{i=1}^{M} G_\x(\x_{t-1}, \y_{t-1}, \xi_i)\right\|^2 \\
& & + \eta_\x\left(\nabla\Phi(\x_{t-1}) - \left(\tfrac{1}{M}\sum_{i=1}^{M} G_\x(\x_{t-1}, \y_{t-1}, \xi_i)\right)\right)^\top \nabla\Phi(\x_t).
\end{eqnarray*}
Taking the expectation on both sides, conditioned on $(\x_{t-1}, \y_{t-1})$, yields that 
\begin{eqnarray}\label{nsc-descent-fifth}
\lefteqn{\EE[\Phi(\x_t) \mid \x_{t-1}, \y_{t-1}] \leq \Phi(\x_{t-1})  - \eta_\x \|\nabla\Phi(\x_{t-1})\|^2 + \eta_\x^2\kappa\ell \|\gradx f(\x_{t-1}, \y_{t-1})\|^2} \\ 
& & + \eta_\x (\nabla\Phi(\x_{t-1}) - \gradx f(\x_{t-1}, \y_{t-1}))^\top \nabla\Phi(\x_{t-1}) + \eta_\x^2\kappa\ell \|\gradx f(\x_{t-1}, \y_{t-1})\|^2 \nonumber \\
& & + \eta_\x^2\kappa\ell \cdot \EE\left[\left\|\tfrac{1}{M}\sum_{i=1}^{M} G_\x(\x_{t-1}, \y_{t-1}, \xi_i) - \gradx f(\x_{t-1}, \y_{t-1})\right\|^2 \mid \x_{t-1}, \y_{t-1}\right]. \nonumber
\end{eqnarray}
Plugging~\eqref{nsc-descent-third} and~\eqref{nsc-descent-fourth} into~\eqref{nsc-descent-fifth} and taking the expectation yields the second desired inequality.
\end{proof}
\begin{lemma}\label{Lemma:nsc-key-neighborhood}
For two-timescale GDA, let $\delta_t = \|\y^\star(\x_t) - \y_t\|^2$, the following statement holds true, 
\begin{equation*}
\delta_t \leq \left(1-\tfrac{1}{2\kappa} + 4\kappa^3\ell^2\eta_\x^2\right)\delta_{t-1} + 4\kappa^3\eta_\x^2 \|\grad\Phi(\x_{t-1})\|^2.  
\end{equation*}
For two-timescale SGDA, let $\delta_t = \EE[\|\y^\star(\x_t) - \y_t\|^2]$, the following statement holds true,
\begin{equation*}
\delta_t \leq \left(1-\tfrac{1}{2\kappa} + 8\kappa^3\ell^2\eta_\x^2\right)\delta_{t-1} + 8\kappa^3\eta_\x^2\EE[\|\grad\Phi(\x_{t-1})\|^2] + \tfrac{4\sigma^2\kappa^3\eta_\x^2}{M} + \tfrac{2\sigma^2}{\ell^2 M}.   
\end{equation*}
\end{lemma}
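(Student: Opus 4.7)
The plan is to bound $\delta_t = \|\y^\star(\x_t) - \y_t\|^2$ by decomposing the error into a ``gradient ascent'' contribution (how far $\y_t$ moves from the target $\y^\star(\x_{t-1})$ from the previous iteration) plus a ``moving target'' contribution (how far $\y^\star$ itself shifts as $\x$ is updated). Specifically, I would start from the split
\begin{equation*}
\delta_t \ \leq \ (1+\alpha)\,\|\y_t - \y^\star(\x_{t-1})\|^2 + \bigl(1 + \tfrac{1}{\alpha}\bigr)\,\|\y^\star(\x_{t-1}) - \y^\star(\x_t)\|^2,
\end{equation*}
obtained via Young's inequality, with a parameter $\alpha > 0$ to be tuned at the end. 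Because $f(\x_{t-1},\cdot)$ is $\mu$-strongly concave and $\ell$-smooth, the target $\y^\star(\x_{t-1})$ is a fixed point of the projected gradient ascent map, so using nonexpansiveness of $\proj_\YCal$ together with the strong concavity plus co-coercivity identity (and the fact that the stepsize $\eta_\y = 1/\ell$ makes the $\|\grady f\|^2$ term drop with a favorable sign) yields the contraction $\|\y_t - \y^\star(\x_{t-1})\|^2 \leq (1 - 1/\kappa)\,\delta_{t-1}$.

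For the second term I would invoke the $\kappa$-Lipschitz property of $\y^\star(\cdot)$ from Lemma~\ref{Lemma:nsc-structure}, giving $\|\y^\star(\x_{t-1}) - \y^\star(\x_t)\|^2 \leq \kappa^2 \eta_\x^2 \|\gradx f(\x_{t-1},\y_{t-1})\|^2$. To turn the right-hand side into the stated bound, I would then split the $\x$-gradient around $\y^\star(\x_{t-1})$ using Cauchy--Schwarz and the identity $\grad\Phi(\x_{t-1}) = \gradx f(\x_{t-1}, \y^\star(\x_{t-1}))$, together with the $\ell$-smoothness of $f$:
\begin{equation*}
\|\gradx f(\x_{t-1},\y_{t-1})\|^2 \ \leq \ 2\|\grad\Phi(\x_{t-1})\|^2 + 2\ell^2\,\delta_{t-1}.
\end{equation*}
Assembling these pieces and choosing $\alpha = 1/(2\kappa - 1)$ gives $(1+1/\alpha)\kappa^2 = 2\kappa^3$, so the coefficient in front of $\|\grad\Phi(\x_{t-1})\|^2$ becomes $4\kappa^3\eta_\x^2$ and the coefficient in front of $\delta_{t-1}$ picks up the $4\kappa^3\ell^2\eta_\x^2$ term. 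The residual contraction factor $(1+\alpha)(1 - 1/\kappa)$ needs to be shown to be at most $1 - 1/(2\kappa)$; this boils down to the elementary inequality $4\kappa^2(\kappa-1)\cdot 2\kappa \leq (2\kappa-1)^3$ after clearing denominators, which holds for $\kappa \geq 1$.

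For the stochastic version, the same decomposition applies after taking conditional expectations. In the contraction step, the only new contribution is an $\eta_\y^2$-scaled variance term which, with $\eta_\y = 1/\ell$ and batch size $M$, produces the additive $\sigma^2/(\ell^2 M)$. In the moving-target step, Lemma~\ref{Lemma:SG-unbiased} adds an $\eta_\x^2 \sigma^2/M$ inside the bound on $\EE[\|\x_t - \x_{t-1}\|^2]$, which after multiplication by $(1+1/\alpha)\kappa^2 = 2\kappa^3$ gives the stated $2\sigma^2\kappa^3\eta_\x^2/M$.

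The main obstacle I expect is the final coefficient bookkeeping: the deterministic case needs Young's parameter $\alpha$ chosen delicately so that the lossy step $(1+\alpha)(1-1/\kappa) \leq 1-1/(2\kappa)$ exactly cancels the factor $(1+1/\alpha)$ that blows up the moving-target term into $4\kappa^3\eta_\x^2$. All other steps (projected gradient contraction, Lipschitzness of $\y^\star$, Danskin-type formula for $\grad\Phi$, Cauchy--Schwarz) are standard and deterministic/stochastic versions differ only by adding the bounded-variance contributions from Lemma~\ref{Lemma:SG-unbiased}.
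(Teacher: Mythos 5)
Your proposal follows essentially the same route as the paper's proof: a Young's-inequality split around $\y^\star(\x_{t-1})$, the one-step contraction $\|\y_t-\y^\star(\x_{t-1})\|^2\leq(1-1/\kappa)\delta_{t-1}$ from strong concavity with $\eta_\y=1/\ell$, the $\kappa$-Lipschitzness of $\y^\star(\cdot)$, and the decomposition $\|\gradx f(\x_{t-1},\y_{t-1})\|^2\leq 2\|\grad\Phi(\x_{t-1})\|^2+2\ell^2\delta_{t-1}$, with the stochastic case adding exactly the two variance terms from Lemma~\ref{Lemma:SG-unbiased}; the only cosmetic difference is that the paper takes $\alpha=1/(2(\kappa-1))$, making $(1+\alpha)(1-1/\kappa)=1-1/(2\kappa)$ exact and $1+1/\alpha=2\kappa-1\leq 2\kappa$, while your $\alpha=1/(2\kappa-1)$ makes the other factor exact. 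One small slip in your write-up: the cleared-denominator inequality you quote, $4\kappa^2(\kappa-1)\cdot 2\kappa\leq(2\kappa-1)^3$, is false (try $\kappa=2$), but the inequality you actually need, $2\kappa\cdot 2(\kappa-1)\leq(2\kappa-1)^2$, i.e.\ $4\kappa^2-4\kappa\leq 4\kappa^2-4\kappa+1$, is immediate, so the argument goes through.
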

\begin{proof}
We prove the deterministic setting. Since $f(\x_t, \cdot)$ is $\mu$-strongly concave and $\eta_\y = 1/\ell$, we have
\begin{equation}\label{nsc-neighborhood-first}
\|\y^\star(\x_{t-1}) - \y_t\|^2 \leq \left(1 - \tfrac{1}{\kappa}\right)\delta_{t-1}. 
\end{equation}
Using the Young's inequality, we have
\begin{eqnarray*}
\delta_t & \leq & \left(1 + \tfrac{1}{2(\kappa - 1)}\right)\|\y^\star(\x_{t-1}) - \y_t\|^2 + (1 + 2(\kappa - 1))\|\y^\star(\x_t) - \y^\star(\x_{t-1})\|^2 \\
& \leq & \left(\tfrac{2\kappa-1}{2\kappa-2}\right)\|\y^\star(\x_{t-1}) - \y_t\|^2 + 2\kappa\|\y^\star(\x_t) - \y^\star(\x_{t-1})\|^2 \\ 
& \overset{~\eqref{nsc-neighborhood-first}}{\leq} & \left(1-\tfrac{1}{2\kappa}\right)\delta_{t-1} + 2\kappa\|\y^\star(\x_t) - \y^\star(\x_{t-1})\|^2.
\end{eqnarray*}
Since $\y^\star(\cdot)$ is $\kappa$-Lipschitz, $\|\y^\star(\x_t) - \y^\star(\x_{t-1})\| \leq \kappa \|\x_t - \x_{t-1}\|$. Furthermore, we have
\begin{equation*}
\|\x_t - \x_{t-1}\|^2 = \eta_\x^2\|\gradx f(\x_{t-1}, \y_{t-1})\|^2 \leq 2\eta_\x^2\ell^2 \delta_{t-1} + 2\eta_\x^2\|\grad \Phi(\x_{t-1})\|^2. 
\end{equation*}
Putting these pieces together yields the first desired inequality. 

We then prove the stochastic setting. Since $f(\x_t, \cdot)$ is $\mu$-strongly concave and $\eta_\y = 1/\ell$, we have
\begin{equation}\label{nsc-neighborhood-second}
\EE[\|\y^\star(\x_{t-1}) - \y_t\|^2] \leq \left(1 - \tfrac{1}{\kappa}\right)\delta_{t-1} + \tfrac{\sigma^2}{\ell^2 M}. 
\end{equation}
By the Young's inequality, we have
\begin{eqnarray*}
\delta_t & \leq & \left(1 + \tfrac{1}{2(\max\{\kappa, 2\} - 1)}\right)\EE[\| \y^\star(\x_{t-1}) - \y_t\|^2] + (1 + 2(\max\{\kappa, 2\} - 1))\EE[\|\y^\star(\x_t) - \y^\star(\x_{t-1})\|^2] \\
& \leq & \left(\tfrac{2\max\{\kappa, 2\} - 1}{2\max\{\kappa, 2\} - 2}\right)\EE[\|\y^\star(\x_{t-1}) - \y_t\|^2] + 4\kappa \EE[\|\y^\star(\x_t) - \y^\star(\x_{t-1})\|^2] \\ 
& \overset{~\eqref{nsc-neighborhood-second}}{\leq} & \left(1 - \tfrac{1}{2\kappa}\right)\delta_{t-1} + 4\kappa\EE[\|\y^\star(\x_t) - \y^\star(\x_{t-1})\|^2] + \tfrac{2\sigma^2}{\ell^2 M}.
\end{eqnarray*}
Since $\y^\star(\cdot)$ is $\kappa$-Lipschitz, $\|\y^\star(\x_t) - \y^\star(\x_{t-1})\| \leq \kappa \|\x_t - \x_{t-1}\|$. Furthermore, we have
\begin{equation*}
\EE[\|\x_t - \x_{t-1}\|^2] = \eta_\x^2 \EE\left[\left\|\tfrac{1}{M}\sum_{i=1}^M G_\x(\x_{t-1}, \y_{t-1}, \xi_i)\right\|^2\right] \leq 2\eta_\x^2\ell^2 \delta_{t-1} + 2\eta_\x^2\EE[\|\grad \Phi(\x_{t-1})\|^2] + \tfrac{\eta_\x^2\sigma^2}{M}.  
\end{equation*}
Putting these pieces together yields the second desired inequality. 
\end{proof}
\begin{lemma}\label{Lemma:nsc-key-objective}
For two-timescale GDA, let $\delta_t = \|\y^\star(\x_t) - \y_t\|^2$, the following statement holds true,
\begin{equation*}
\Phi(\x_t) \leq \Phi(\x_{t-1}) - \tfrac{7\eta_\x}{16} \left\|\grad \Phi(\x_{t-1})\right\|^2 + \tfrac{9\eta_{\x}\ell^2\delta_{t-1}}{16}. 
\end{equation*}
For two-timescale SGDA, let $\delta_t = \EE[\|\y^\star(\x_t) - \y_t\|^2]$, the following statement holds true,
\begin{equation*}
\EE[\Phi(\x_t)] \leq \EE[\Phi(\x_{t-1})] - \tfrac{7\eta_\x}{16} \EE[\|\grad \Phi(\x_{t-1})\|^2] + \tfrac{9\eta_\x\ell^2\delta_{t-1}}{16} + \tfrac{\eta_\x^2\kappa\ell\sigma^2}{M}. 
\end{equation*}
\end{lemma}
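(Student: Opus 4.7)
The plan is to start from Lemma~\ref{Lemma:nsc-key-descent} and convert the gradient-mismatch term into a bound in terms of $\delta_{t-1}$, then absorb the quadratic-in-$\eta_\x$ corrections into the linear-in-$\eta_\x$ coefficients by using the specified stepsize $\eta_\x = 1/[16(\kappa+1)^2\ell]$.

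First, by Lemma~\ref{Lemma:nsc-structure} we have $\grad\Phi(\x_{t-1}) = \gradx f(\x_{t-1},\y^\star(\x_{t-1}))$, so $\ell$-smoothness of $f$ gives
\[
\|\grad\Phi(\x_{t-1}) - \gradx f(\x_{t-1},\y_{t-1})\|^2 \ \leq \ \ell^2\,\|\y^\star(\x_{t-1})-\y_{t-1}\|^2 \ = \ \ell^2\,\delta_{t-1}.
\]
Plugging this into Lemma~\ref{Lemma:nsc-key-descent} yields
\[
\Phi(\x_t) \ \leq \ \Phi(\x_{t-1}) - \Bigl(\tfrac{\eta_\x}{2} - 2\eta_\x^2\kappa\ell\Bigr)\|\grad\Phi(\x_{t-1})\|^2 + \Bigl(\tfrac{\eta_\x}{2} + 2\eta_\x^2\kappa\ell\Bigr)\ell^2\,\delta_{t-1}.
\]
It then suffices to verify $2\eta_\x^2\kappa\ell \leq \eta_\x/16$, i.e.\ $\eta_\x \leq 1/(32\kappa\ell)$. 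With $\eta_\x = 1/[16(\kappa+1)^2\ell]$ this reduces to $2\kappa \leq (\kappa+1)^2$, which clearly holds for $\kappa \geq 1$. With this estimate the descent coefficient is at least $7\eta_\x/16$ and the error coefficient is at most $9\eta_\x/16$, yielding the first inequality.

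For the stochastic case, the same substitution turns the second bound in Lemma~\ref{Lemma:nsc-key-descent} into
\[
\EE[\Phi(\x_t)] \ \leq \ \EE[\Phi(\x_{t-1})] - \Bigl(\tfrac{\eta_\x}{2} - 2\eta_\x^2\kappa\ell\Bigr)\EE\|\grad\Phi(\x_{t-1})\|^2 + \Bigl(\tfrac{\eta_\x}{2} + 2\eta_\x^2\kappa\ell\Bigr)\ell^2\,\delta_{t-1} + \frac{\eta_\x^2\kappa\ell\sigma^2}{M},
\]
and the same stepsize check yields the second inequality with the extra variance term $\eta_\x^2\kappa\ell\sigma^2/M$ carried through verbatim.

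I do not anticipate a genuine obstacle: the only non-trivial ingredient is the identification $\grad\Phi(\x)=\gradx f(\x,\y^\star(\x))$, which is already in hand from Lemma~\ref{Lemma:nsc-structure}, and the rest is arithmetic on the stepsize. The mild subtlety is simply to be careful that the same stepsize condition $\eta_\x \leq 1/(32\kappa\ell)$ controls both the $-\|\grad\Phi\|^2$ coefficient from below and the $\ell^2\delta_{t-1}$ coefficient from above, so that the clean fractions $7/16$ and $9/16$ come out simultaneously.
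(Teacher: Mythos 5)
Your proposal is correct and follows essentially the same route as the paper's proof: both combine Lemma~\ref{Lemma:nsc-key-descent} with the identity $\grad\Phi(\x_{t-1}) = \gradx f(\x_{t-1},\y^\star(\x_{t-1}))$ and $\ell$-smoothness to get the $\ell^2\delta_{t-1}$ bound, and then verify that the chosen stepsize makes $\eta_\x/2 - 2\eta_\x^2\kappa\ell \geq 7\eta_\x/16$ and $\eta_\x/2 + 2\eta_\x^2\kappa\ell \leq 9\eta_\x/16$. The only difference is the order in which these two steps are applied, which is immaterial.
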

\begin{proof}
For two-timescale GDA and SGDA, $\eta_\x = 1/16(\kappa+1)\ell$ and hence
\begin{equation}\label{nsc-descent-stepsize}
\tfrac{7\eta_{\x}}{16} \leq \tfrac{\eta_{\x}}{2} - 2\eta_{\x}^2\kappa\ell \leq \tfrac{\eta_{\x}}{2} + 2\eta_{\x}^2\kappa\ell \leq \tfrac{9\eta_{\x}}{16}.  
\end{equation}
Combining~\eqref{nsc-descent-stepsize} with the first inequality in Lemma~\ref{Lemma:nsc-key-descent} yields that
\begin{equation*}
\Phi(\x_t) \leq \Phi(\x_{t-1}) - \tfrac{7\eta_\x}{16} \|\grad \Phi(\x_{t-1})\|^2 + \tfrac{9\eta_\x}{16}\|\grad\Phi(\x_{t-1}) - \gradx f(\x_{t-1}, \y_{t-1})\|^2. 
\end{equation*}
Since $\grad \Phi(\x_{t-1}) = \gradx f(\x_{t-1}, \y^\star(\x_{t-1}))$, we have
\begin{equation*}
\|\grad\Phi(\x_{t-1}) - \gradx f(\x_{t-1}, \y_{t-1})\|^2 \leq \ell^2\|\y^\star(\x_{t-1}) - \y_{t-1}\|^2 =  \ell^2\delta_{t-1}. 
\end{equation*}
Putting these pieces together yields the first desired inequality. 

Combining~\eqref{nsc-descent-stepsize} with the second inequality in Lemma~\ref{Lemma:nsc-key-descent} yields that
\begin{equation*}
\EE[\Phi(\x_t)] \leq \EE[\Phi(\x_{t-1})] - \tfrac{7\eta_\x}{16} \EE[\|\grad \Phi(\x_{t-1})\|^2] + \tfrac{9\eta_\x}{16}\EE[\|\grad\Phi(\x_{t-1}) - \gradx f(\x_{t-1}, \y_{t-1})\|^2] + \tfrac{\eta_\x^2\kappa\ell\sigma^2}{M}. 
\end{equation*}
Since $\grad \Phi(\x_{t-1}) = \gradx f\left(\x_{t-1}, \y^\star(\x_{t-1})\right)$, we have
\begin{equation*}
\EE[\|\grad\Phi(\x_{t-1}) - \gradx f(\x_{t-1}, \y_{t-1})\|^2] \leq \ell^2\EE[\|\y^\star(\x_{t-1}) - \y_{t-1}\|^2] =  \ell^2\delta_{t-1}. 
\end{equation*}
Putting these pieces together yields the second desired inequality.
\end{proof}

\subsection{Proof of Theorem~\ref{Theorem:nsc-GDA-app}}
Throughout this subsection, we define $\gamma = 1 - 1/2\kappa + 4\kappa^3\ell^2\eta_\x^2$. Performing the first inequality in Lemma~\ref{Lemma:nsc-key-neighborhood} recursively yields that 
\begin{equation}\label{nsc-GDA-descent-residue}
\delta_t \leq \gamma^t \delta_0 + 4\kappa^3\eta_\x^2\left(\sum_{j=0}^{t-1} \gamma^{t-1-j} \|\grad \Phi(\x_j)\|^2 \right) \leq \gamma^t D^2 + 4\kappa^3\eta_\x^2\left(\sum_{j=0}^{t-1} \gamma^{t-1-j} \|\grad \Phi(\x_j)\|^2 \right). 
\end{equation} 
Combining~\eqref{nsc-GDA-descent-residue} with the first inequality in Lemma~\ref{Lemma:nsc-key-objective} yields that,
\begin{equation}\label{nsc-GDA-descent-main}
\Phi(\x_t) \leq \Phi(\x_{t-1}) - \tfrac{7\eta_\x}{16} \|\grad \Phi(\x_{t-1})\|^2 + \tfrac{9\eta_\x\ell^2\gamma^{t-1} D^2}{16} + \tfrac{9\eta_\x^3\ell^2\kappa^3}{4}\left(\sum_{j=0}^{t-2} \gamma^{t-2-j} \|\grad \Phi(\x_j)\|^2 \right).
\end{equation}
Summing up~\eqref{nsc-GDA-descent-main} over $t=1, 2, \ldots, T+1$ and rearranging the terms yields that
\begin{equation*}
\Phi(\x_{T+1}) \leq \Phi(\x_0) - \tfrac{7\eta_{\x}}{16} \left(\sum_{t=0}^T \|\grad \Phi(\x_t)\|^2\right) + \tfrac{9\eta_{\x}\ell^2 D^2}{16}\left(\sum_{t=0}^T \gamma^t \right) + \tfrac{9\eta_\x^3\ell^2\kappa^3}{4}\left(\sum_{t=1}^{T+1}\sum_{j=0}^{t-2} \gamma^{t-2-j} \|\grad \Phi(\x_j)\|^2 \right).
\end{equation*}
Since $\eta_{\x} = 1/16(\kappa+1)^2\ell$, we have $\gamma \leq 1 - \frac{1}{4\kappa}$ and $\frac{9\eta_\x^3\ell^2\kappa^3}{4} \leq \frac{9\eta_\x}{1024\kappa}$. This implies that $\sum_{t=0}^T \gamma^t \leq 4\kappa$ and
\begin{equation*}
\sum_{t=1}^{T+1}\sum_{j=0}^{t-2} \gamma^{t-2-j} \|\grad \Phi(\x_j)\|^2 \leq 4\kappa\left(\sum_{t=0}^T \|\grad \Phi(\x_t)\|^2\right) 
\end{equation*}
Putting these pieces together yields that 
\begin{equation*}
\Phi(\x_{T+1}) \leq \Phi(\x_0) - \tfrac{103\eta_\x}{256}\left(\sum_{t=0}^T \|\grad \Phi(\x_t)\|^2\right) + \tfrac{9\eta_{\x}\kappa\ell^2 D^2}{4}. 
\end{equation*}
By the definition of $\Delta_\Phi$, we have
\begin{equation*}
\tfrac{1}{T+1}\left(\sum_{t=0}^T \|\grad \Phi(\x_t)\|^2\right) \leq \tfrac{256(\Phi(\x_0) - \Phi(\x_{T+1}))}{103\eta_\x(T+1)} + \tfrac{576\kappa\ell^2D^2}{103(T+1)} \leq \tfrac{128\kappa^2 \ell \Delta_\Phi + 5\kappa\ell^2 D^2}{T+1}. 
\end{equation*}
This implies that the number of iterations required by Algorithm \ref{Algorithm:GDA} to return an $\epsilon$-stationary point is bounded by 
\begin{equation*}
O\left(\frac{\kappa^2\ell\Delta_\Phi + \kappa\ell^2 D^2}{\epsilon^2}\right),  
\end{equation*}
which gives the same total gradient complexity. 

\subsection{Proof of Theorem~\ref{Theorem:nsc-SGDA-app}}
Throughout this subsection, we define $\gamma = 1 - 1/2\kappa + 8\kappa^3\ell^2\eta_\x^2$. Performing the second inequality in Lemma~\ref{Lemma:nsc-key-neighborhood} recursively together with $\delta_0 \leq D^2$ yields that 
\begin{equation}\label{nsc-SGDA-descent-residue}
\delta_t \leq \gamma^t D^2 + 8\kappa^3\eta_\x^2\left(\sum_{j=0}^{t-1} \gamma^{t-1-j} \EE[\|\grad \Phi(\x_j)\|^2]\right) + \left(\tfrac{4\sigma^2\kappa^3\eta_\x^2}{M} + \tfrac{2\sigma^2}{\ell^2 M}\right)\left(\sum_{j=0}^{t-1} \gamma^{t-1-j} \right).
\end{equation} 
Combining~\eqref{nsc-SGDA-descent-residue} with the second inequality in Lemma~\ref{Lemma:nsc-key-objective} yields that,
\begin{eqnarray}\label{nsc-SGDA-descent-main}
\lefteqn{\EE[\Phi(\x_t)] \leq \EE[\Phi(\x_{t-1})] - \tfrac{7\eta_\x}{16} \EE[\|\grad \Phi(\x_{t-1})\|^2] + \tfrac{9\eta_\x\ell^2\gamma^{t-1} D^2}{16} + \tfrac{\eta_\x^2\kappa\ell\sigma^2}{M}} \\ 
& & + \tfrac{9\eta_\x^3\ell^2\kappa^3}{2}\left(\sum_{j=0}^{t-2} \gamma^{t-2-j} \EE[\|\grad \Phi(\x_j)\|^2]\right) + \tfrac{9\eta_\x\ell^2}{8}\left(\tfrac{2\sigma^2\kappa^3\eta_\x^2}{M} + \tfrac{\sigma^2}{\ell^2 M}\right)\left(\sum_{j=0}^{t-2} \gamma^{t-2-j}\right). \nonumber
\end{eqnarray}
Summing up~\eqref{nsc-SGDA-descent-main} over $t = 1, 2, \ldots, T+1$ and rearranging the terms yields that 
\begin{eqnarray*}
\lefteqn{\EE[\Phi(\x_{T+1})] \leq \Phi(\x_0) - \tfrac{7\eta_{\x}}{16} \left(\sum_{t=0}^T \EE[\|\grad \Phi(\x_t)\|^2]\right) + \tfrac{9\eta_{\x}\ell^2 D^2}{16}\left(\sum_{t=0}^T \gamma^t \right)} \\ 
& & + \tfrac{\eta_\x^2\kappa\ell\sigma^2(T + 1)}{M} + \tfrac{9\eta_\x^3\ell^2\kappa^3}{2}\left(\sum_{t=1}^{T+1}\sum_{j=0}^{t-2} \gamma^{t-2-j} \EE[\|\grad \Phi(\x_j)\|^2] \right) \nonumber \\
& & + \tfrac{9\eta_\x\ell^2}{8}\left(\tfrac{2\sigma^2\kappa^3\eta_\x^2}{M} + \tfrac{\sigma^2}{\ell^2 M}\right)\left(\sum_{t=1}^{T+1}\sum_{j=0}^{t-2} \gamma^{t-2-j} \right).
\end{eqnarray*}
Since $\eta_{\x} = 1/16(\kappa+1)^2\ell$, we have $\gamma \leq 1 - \frac{1}{4\kappa}$ and $\frac{9\eta_\x^3\ell^2\kappa^3}{2} \leq \frac{9\eta_\x}{1024\kappa}$ and $\frac{2\sigma^2\kappa^3\eta_\x^2}{M} \leq \frac{\sigma^2}{\ell^2M}$. This implies that $\sum_{t=0}^T \gamma^t \leq 4\kappa$ and
\begin{eqnarray*}
\sum_{t=1}^{T+1}\sum_{j=0}^{t-2} \gamma^{t-2-j} \EE[\|\grad \Phi(\x_j)\|^2] & \leq & 4\kappa\left(\sum_{t=0}^T \EE[\|\grad \Phi(\x_t)\|^2]\right), \\ 
\sum_{t=1}^{T+1} \sum_{j=0}^{t-2} \gamma^{t-1-j} & \leq & 4\kappa(T+1). 
\end{eqnarray*}
Putting these pieces together yields that 
\begin{equation*}
\EE[\Phi(\x_{T+1})] \leq \Phi(\x_0) - \tfrac{103\eta_\x}{256}\left(\sum_{t=0}^T \EE[\|\grad \Phi(\x_t)\|^2]\right) + \tfrac{9\eta_{\x}\kappa\ell^2 D^2}{4} + \tfrac{\eta_\x\sigma^2(T+1)}{16\kappa M} + \tfrac{9\eta_\x\kappa\sigma^2(T+1)}{M}.
\end{equation*}
By the definition of $\Delta_\Phi$, we have
\begin{eqnarray*}
\tfrac{1}{T+1}\left(\sum_{t=0}^T \EE[\|\grad \Phi(\x_t)\|^2]\right) & \leq & \tfrac{256(\Phi(\x_0) - \EE[\Phi(\x_{T+1})])}{103\eta_\x(T+1)} + \tfrac{576\kappa\ell^2D^2}{103(T+1)} + \tfrac{16\sigma^2}{103\kappa M} + \tfrac{2304\kappa\sigma^2}{103M}\\
& \leq & \tfrac{2\Delta_\Phi}{\eta_\x(T+1)} + \tfrac{5\kappa\ell^2 D^2}{T+1} + \tfrac{24\kappa\sigma^2}{M} \\
& \leq & \tfrac{128\kappa^2 \ell\Delta_\Phi + 5\kappa\ell^2 D^2}{T+1} + \tfrac{24\sigma^2\kappa}{M}. 
\end{eqnarray*}
This implies that the number of iterations required by Algorithm \ref{Algorithm:SGDA} to return an $\epsilon$-stationary point is bounded by 
\begin{equation*}
O\left(\frac{\kappa^2\ell\Delta_\Phi + \kappa\ell^2 D^2}{\epsilon^2}\right). 
\end{equation*} 
iterations, which gives the total gradient complexity of the algorithm:
\begin{equation*}
O\left(\frac{\kappa^2\ell\Delta_\Phi + \kappa\ell^2 D^2}{\epsilon^2}\max\left\{1, \ \frac{\kappa\sigma^2}{\epsilon^2}\right\}\right). 
\end{equation*}
This completes the proof. 

\section{Proof of Theorems in Section~\ref{sec:results_general}}
In this section, we first specify the choice of parameters in Theorems~\ref{Theorem:nc-GDA-complexity-bound} and~\ref{Theorem:nc-SGDA-complexity-bound}. Then we present the proof of main theorems in Section~\ref{sec:results_general} with several technical lemmas. Differently from the previous section, we include the case of $\ell D \lesssim \varepsilon$ in the analysis for nonconvex-concave minimax problems. 

\subsection{Choice of Parameters in Theorem~\ref{Theorem:nc-GDA-complexity-bound} and~\ref{Theorem:nc-SGDA-complexity-bound}}
In this subsection, we present the full version of Theorems~\ref{Theorem:nc-GDA-complexity-bound} and~\ref{Theorem:nc-SGDA-complexity-bound} with the detailed choice of $\eta_\x$, $\eta_\y$ and $M$ which are important to subsequent analysis.  
\begin{theorem}\label{Theorem:nc-GDA-app}
Under Assumption~\ref{Assumption:nc} and letting the step sizes $\eta_\x > 0$ and $\eta_\y > 0$ be chosen as $\eta_\x =  \min\{\epsilon^2/[16\ell L^2], \epsilon^4/[4096\ell^3 L^2 D^2]\}$ and $\eta_\y = 1/\ell$, the iterations complexity of Algorithm \ref{Algorithm:GDA} to return an $\epsilon$-stationary point is bounded by 
\begin{equation*}
O\left(\frac{\ell^3 L^2 D^2\widehat{\Delta}_\Phi}{\epsilon^6} + \frac{\ell^3 D^2\widehat{\Delta}_0}{\epsilon^4}\right).
\end{equation*}
which is also the total gradient complexity of the algorithm. 
\end{theorem}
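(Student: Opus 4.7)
The plan is to use the Moreau envelope $\Phi_{1/2\ell}$ as the Lyapunov function, since Lemma~\ref{Lemma:nc-moreau-envelope} ensures it is $\ell$-smooth with $\grad\Phi_{1/2\ell}(\x) = 2\ell(\x - \hat{\x})$ where $\hat{\x} = \prox_{\Phi/2\ell}(\x)$, so stationarity in our sense becomes control of $\|\x_t - \hat{\x}_t\|$. The central difficulty is that $\y_t$ is not necessarily a maximizer of $f(\x_t, \cdot)$, so $\gradx f(\x_t, \y_t)$ is not a subgradient of $\Phi$ at $\x_t$. I would measure this deficit by the function gap $\Delta_t := \Phi(\x_t) - f(\x_t, \y_t) \geq 0$ and track how it enters the descent.

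First I would derive a one-step inequality for $\Phi_{1/2\ell}$. Using $\Phi_{1/2\ell}(\x_{t+1}) \leq \Phi(\hat{\x}_t) + \ell\|\x_{t+1} - \hat{\x}_t\|^2$ from the definition of the Moreau envelope and expanding the square with $\x_{t+1} - \x_t = -\eta_\x\gradx f(\x_t, \y_t)$, the cross-term $-2\eta_\x\ell\langle \gradx f(\x_t, \y_t), \x_t - \hat{\x}_t\rangle$ can be lower bounded by combining (a) $\ell$-weak convexity of $f(\cdot, \y_t)$ (from $\ell$-smoothness), which gives $\langle \gradx f(\x_t, \y_t), \x_t - \hat{\x}_t\rangle \geq f(\x_t, \y_t) - f(\hat{\x}_t, \y_t) - (\ell/2)\|\x_t - \hat{\x}_t\|^2$; (b) the trivial bound $f(\hat{\x}_t, \y_t) \leq \Phi(\hat{\x}_t)$; and (c) the $\ell$-strong convexity of $\Phi(\cdot) + \ell\|\cdot - \x_t\|^2$ underlying Lemma~\ref{Lemma:nc-moreau-envelope}, which yields $\Phi(\x_t) - \Phi(\hat{\x}_t) \geq (3\ell/2)\|\x_t - \hat{\x}_t\|^2$. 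The Lipschitz bound $\|\gradx f\| \leq L$ handles the squared term, producing
\begin{equation*}
\Phi_{1/2\ell}(\x_{t+1}) \leq \Phi_{1/2\ell}(\x_t) - \frac{\eta_\x}{2}\|\grad\Phi_{1/2\ell}(\x_t)\|^2 + 2\eta_\x\ell\Delta_t + \eta_\x^2\ell L^2.
\end{equation*}
Summing over $t = 0, \ldots, T$ and rearranging reduces the problem to controlling $(T+1)^{-1}\sum_{t=0}^T \Delta_t$.

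The main obstacle is bounding $\sum_t \Delta_t$ without strong concavity: $\y^\star(\x_t)$ is not unique and may jump discontinuously in $\x$, so the naive potential $\|\y_t - \y^\star(\x_t)\|^2$ fails. I would partition $\{0, \ldots, T\}$ into blocks of size $B$ and anchor each block at its first iterate $\x_s$, using the single fixed comparison point $\y^\star(\x_s)$. Standard projected ascent analysis on the concave, $\ell$-smooth map $f(\x_{t-1}, \cdot)$ with $\eta_\y = 1/\ell$ gives $f(\x_{t-1}, \y) - f(\x_{t-1}, \y_t) \leq (\ell/2)(\|\y - \y_{t-1}\|^2 - \|\y - \y_t\|^2)$ for any $\y \in \YCal$; taking $\y = \y^\star(\x_s)$ and absorbing the drift $\|\x_{t-1} - \x_s\| \leq \eta_\x L(t-1-s)$ via Lipschitz continuity of $\Phi$ and of $f(\cdot, \y)$ (Lemma~\ref{Lemma:nc-structure}) yields
\begin{equation*}
\Delta_{t-1} \leq \frac{\ell}{2}\left(\|\y^\star(\x_s) - \y_{t-1}\|^2 - \|\y^\star(\x_s) - \y_t\|^2\right) + O(\eta_\x L^2)(t-1-s).
\end{equation*}
Telescoping now collapses within each block because the comparison point is held fixed, giving $\sum_{u=s}^{s+B-1}\Delta_u \lesssim \ell D^2 + \eta_\x L^2 B^2$, hence $(T+1)^{-1}\sum_t \Delta_t \lesssim \ell D^2/B + \eta_\x L^2 B$. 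The remaining step is to balance three scales: $B = \Theta(\ell^2 D^2/\epsilon^2)$ kills the first ascent-error term, requiring $\eta_\x L^2 B = O(\epsilon^2/\ell)$ and so forcing $\eta_\x = O(\epsilon^4/(\ell^3 L^2 D^2))$ (capped by $\epsilon^2/(\ell L^2)$ to control the $\eta_\x^2\ell L^2$ term), and then $T + 1 = \Omega(\widehat{\Delta}_\Phi/(\eta_\x \epsilon^2))$ absorbs the initial Moreau gap, giving the leading $\ell^3 L^2 D^2 \widehat{\Delta}_\Phi/\epsilon^6$ complexity; the subleading $\ell^3 D^2 \widehat{\Delta}_0/\epsilon^4$ correction comes from the first block, where $\|\y_0 - \y^\star(\x_0)\|^2$ should be replaced by the potentially tighter initial suboptimality $\widehat{\Delta}_0 = \Phi(\x_0) - f(\x_0, \y_0)$ via a separate estimate.
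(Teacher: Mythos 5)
Your overall architecture matches the paper's: the Moreau-envelope descent inequality with error term $2\eta_\x\ell\Delta_t$, the block-anchored telescoping with a fixed comparator $\y^\star(\x_s)$ per block, and the balancing $B=\Theta(\ell^2D^2/\epsilon^2)$, $\eta_\x=\Theta(\epsilon^4/(\ell^3L^2D^2))$ are all exactly the paper's Lemmas~\ref{Lemma:nc-key-descent}--\ref{Lemma:nc-key-objective}. However, there is a genuine gap in your per-step bound on $\Delta_{t-1}$. The projected-ascent inequality controls $f(\x_{t-1},\y^\star(\x_s)) - f(\x_{t-1},\y_t)$, i.e., the gap at the \emph{new} iterate $\y_t$, whereas $\Delta_{t-1}=\Phi(\x_{t-1})-f(\x_{t-1},\y_{t-1})$ involves the \emph{old} iterate $\y_{t-1}$. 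The discrepancy $f(\x_{t-1},\y_t)-f(\x_{t-1},\y_{t-1})$ is nonnegative (the ascent step increases $f(\x_{t-1},\cdot)$) and is not $O(\eta_\x L^2)$ per step --- $f$ is only assumed $L$-Lipschitz in $\x$, not in $\y$, so a single ascent step can increase $f(\x_{t-1},\cdot)$ by as much as $O(\ell D^2)$. Your displayed inequality simply drops this term and is therefore false as stated; the within-block telescoping bound $\sum_{u=s}^{s+B-1}\Delta_u\lesssim \ell D^2+\eta_\x L^2B^2$ does not follow. The paper repairs this by writing $f(\x_{t-1},\y_t)-f(\x_{t-1},\y_{t-1}) = \bigl(f(\x_{t-1},\y_t)-f(\x_t,\y_t)\bigr)+\bigl(f(\x_t,\y_t)-f(\x_{t-1},\y_{t-1})\bigr)$, bounding the first bracket by $\eta_\x L^2$ and letting the second telescope over the \emph{entire horizon} to $f(\x_{T+1},\y_{T+1})-f(\x_0,\y_0)\le \eta_\x L^2(T+1)+\widehat{\Delta}_0$; this global telescoping, not the first block, is the true source of the $\widehat{\Delta}_0/(T+1)$ term.

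Relatedly, your proposed ``separate estimate'' for the $\widehat{\Delta}_0$ correction --- replacing $\|\y_0-\y^\star(\x_0)\|^2$ in the first block by $\widehat{\Delta}_0$ --- cannot work: for a merely concave $f(\x_0,\cdot)$ the function gap does not control the distance to a maximizer (take $f(\x_0,\y)=\delta\y$ on $[-1,1]$ with $\delta$ small), so no inequality of the form $\frac{\ell}{2}\|\y_0-\y^\star(\x_0)\|^2\lesssim\widehat{\Delta}_0$ is available. A clean alternative fix that stays close to your outline is an index shift: bound $\widetilde{\Delta}_{t-1}:=\Phi(\x_{t-1})-f(\x_{t-1},\y_t)$ by the telescoping ascent inequality (which it satisfies with no leftover term), then use $L$-Lipschitzness of both $\Phi$ and $f(\cdot,\y_t)$ in $\x$ to get $\Delta_t\le\widetilde{\Delta}_{t-1}+2\eta_\x L^2$, and handle $\Delta_0=\widehat{\Delta}_0$ separately. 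Either repair recovers the paper's Lemma~\ref{Lemma:nc-key-objective}, after which your concluding parameter choices go through.
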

\begin{theorem}\label{Theorem:nc-SGDA-app}
Under Assumptions~\ref{Assumption:stoc-oracle} and~\ref{Assumption:nc} and letting the step sizes $\eta_\x > 0$ and $\eta_\y > 0$ be chosen as $\eta_\x = \min\{\epsilon^2/[16\ell(L^2 + \sigma^2)], \epsilon^4/[8192\ell^3 D^2 L\sqrt{L^2+\sigma^2}], \  \epsilon^6/[65536\ell^3D^2\sigma^2L\sqrt{L^2+\sigma^2}]\}$ and $\eta_\y = \min\{1/2\ell, \epsilon^2/[16\ell\sigma^2]\}$ with a batch size $M = 1$, the iteration complexity of Algorithm \ref{Algorithm:SGDA} to return an $\epsilon$-stationary point is bounded by 
\begin{equation*}
O\left(\left(\frac{\ell^3 \left(L^2 + \sigma^2\right) D^2\widehat{\Delta}_\Phi}{\epsilon^6} + \frac{\ell^3 D^2\widehat{\Delta}_0}{\epsilon^4}\right)\max\left\{1, \ \frac{\sigma^2}{\epsilon^2}\right\}\right),  
\end{equation*}
which is also the total gradient complexity of the algorithm. 
\end{theorem}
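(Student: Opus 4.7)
The plan is to follow the Moreau-envelope approach sketched in Section~\ref{Sec:Overview}, using $\Phi_{1/2\ell}$ as a Lyapunov function (Lemma~\ref{Lemma:nc-moreau-envelope-complete}) and adapting the nonconvex-concave descent analysis to accommodate unbiased stochastic gradients with bounded variance (Lemma~\ref{Lemma:SG-unbiased}) and batch size $M=1$. Throughout, let $\hat\x_t = \prox_{\Phi/2\ell}(\x_t)$, so that $\grad\Phi_{1/2\ell}(\x_t) = 2\ell(\x_t - \hat\x_t)$, and let $\Delta_t := \Phi(\x_t) - f(\x_t, \y_t)$.

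\textbf{Step 1 (stochastic descent inequality).} Starting from $\Phi_{1/2\ell}(\x_{t+1}) \le \Phi(\hat\x_t) + \ell\|\hat\x_t - \x_{t+1}\|^2$, expanding $\x_{t+1} = \x_t - \eta_\x\,(\text{averaged stochastic $\x$-gradient})$, then taking expectation conditional on $\mathcal{F}_t$ and invoking Lemma~\ref{Lemma:SG-unbiased}, the $\ell$-smoothness of $f(\cdot, \y_t)$, the inclusion $2\ell(\x_t - \hat\x_t) \in \partial\Phi(\hat\x_t)$, and the $\ell$-strong convexity of $\Phi(\cdot) + \ell\|\cdot - \x_t\|^2$ (all from Lemma~\ref{Lemma:nc-moreau-envelope-complete}), I will obtain
\begin{equation*}
\EE[\Phi_{1/2\ell}(\x_{t+1}) \mid \mathcal{F}_t] \;\le\; \Phi_{1/2\ell}(\x_t) - \frac{\eta_\x}{2}\|\grad\Phi_{1/2\ell}(\x_t)\|^2 + 2\eta_\x\ell\,\Delta_t + \eta_\x^2\ell\,(L^2 + \sigma^2).
\end{equation*}

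\textbf{Step 2 (block-wise bound on $\sum_t \Delta_t$).} Because $\y^\star(\x_t)$ can change abruptly, a direct telescope on the inner concave subproblem is infeasible; I instead partition the iterations into blocks $[s, s+B)$ and within each block fix a comparator $\z_s \in \argmax_{\y \in \YCal} f(\x_s, \y)$. Nonexpansiveness of $\proj_\YCal$, concavity of $f(\x_t, \cdot)$, and Lemma~\ref{Lemma:SG-unbiased} yield
\begin{equation*}
\EE[\|\y_{t+1} - \z_s\|^2 \mid \mathcal{F}_t] \;\le\; \|\y_t - \z_s\|^2 - 2\eta_\y\bigl(f(\x_t, \z_s) - f(\x_t, \y_t)\bigr) + \eta_\y^2\,(L^2 + \sigma^2).
\end{equation*}
Combining this with the Lipschitz drift $\Phi(\x_t) - f(\x_t, \z_s) \le 2L\|\x_t - \x_s\|$ (from Lemma~\ref{Lemma:nc-structure} and $L$-Lipschitzness of $f(\cdot,\z_s)$) and with $\EE\|\x_t - \x_s\| \le \eta_\x \sqrt{L^2 + \sigma^2}\,(t-s)$, telescoping across the block produces
\begin{equation*}
\sum_{t=s}^{s+B-1} \EE[\Delta_t] \;\le\; \frac{D^2}{2\eta_\y} + \frac{B\eta_\y(L^2 + \sigma^2)}{2} + L\eta_\x\sqrt{L^2 + \sigma^2}\,B^2.
\end{equation*}
Summing over $T/B$ blocks gives the global bound; the initial gap $\widehat\Delta_0$ absorbs a one-off boundary contribution from the very first block.

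\textbf{Step 3 (parameter tuning).} Substituting Step~2 into the summed form of Step~1 and dividing by $\eta_\x T$ produces a bound of the schematic form
\begin{equation*}
\frac{1}{T}\sum_{t=0}^{T-1}\EE\|\grad\Phi_{1/2\ell}(\x_t)\|^2 \;\lesssim\; \frac{\widehat\Delta_\Phi}{\eta_\x T} + \frac{\ell \widehat\Delta_0}{T} + \frac{\ell D^2}{B\eta_\y} + \ell\eta_\y(L^2 + \sigma^2) + \ell L\eta_\x\sqrt{L^2 + \sigma^2}\,B + \eta_\x\ell(L^2 + \sigma^2).
\end{equation*}
Choosing $\eta_\y = \Theta(\min\{1/\ell, \epsilon^2/(\ell\sigma^2)\})$ forces the fourth term to $O(\epsilon^2)$; choosing $B = \Theta(\epsilon^2/\eta_\x)$ balances the third and fifth; and picking $\eta_\x$ as the minimum of three thresholds — from the sixth term, from $\eta_\x L\sqrt{L^2+\sigma^2}\,B = O(\epsilon^2/\ell)$ in the small-variance regime, and from the corresponding $\sigma^2$-scaled regime when $\sigma^2 \gg \epsilon^2$ — drives every error contribution to $O(\epsilon^2)$. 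Solving $\widehat\Delta_\Phi/(\eta_\x T) = O(\epsilon^2)$ for $T$ and combining with the $\widehat\Delta_0/T$ requirement delivers the iteration count in the theorem.

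\textbf{Main obstacle.} The delicate point is the interplay between the variance-dictated step $\eta_\y \sim \epsilon^2/\sigma^2$ and the block length $B$. With $M = 1$, the inner projected ascent must be run at a step much smaller than $1/\ell$ to neutralize noise, so the comparator cost $D^2/\eta_\y$ can only be amortized over blocks of size $B \sim D^2\sigma^2/\epsilon^4$. Keeping $\x_t$ essentially frozen across such long blocks is exactly what forces $\eta_\x$ through the third, $\sigma^2$-dependent regime in the step-size recipe. The other subtlety is that in the Lipschitz correction $\Phi(\x_t) - f(\x_t, \z_s) \le 2L\|\x_t - \x_s\|$ we have no strong-concavity handle to translate between $\z_s$ and $\y^\star(\x_t)$; all the slack has to be absorbed by the $L$-Lipschitz drift of $\x$, which is what dictates the appearance of $L\sqrt{L^2+\sigma^2}$ in the $\eta_\x$ thresholds.
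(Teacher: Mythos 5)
Your overall architecture is the same as the paper's: the stochastic Moreau-envelope descent inequality (the paper's Lemma~\ref{Lemma:nc-key-descent}, with constant $\eta_\x/4$ rather than your $\eta_\x/2$, which is immaterial), the block-wise comparator $\z_s=\y^\star(\x_s)$ with the Lipschitz drift bound $\Phi(\x_t)-f(\x_t,\z_s)\le 2L\|\x_t-\x_s\|$ and $\EE\|\x_t-\x_s\|\le\eta_\x\sqrt{L^2+\sigma^2}\,(t-s)$ (Lemmas~\ref{Lemma:nc-key-neighborhood} and~\ref{Lemma:nc-key-objective}), and the final three-regime tuning of $\eta_\x$. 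The one genuine gap is in your Step~2 recursion for the inner ascent. You write
\begin{equation*}
\EE[\|\y_{t+1}-\z_s\|^2\mid\mathcal{F}_t]\;\le\;\|\y_t-\z_s\|^2-2\eta_\y\bigl(f(\x_t,\z_s)-f(\x_t,\y_t)\bigr)+\eta_\y^2(L^2+\sigma^2),
\end{equation*}
i.e.\ you bound the second moment of the stochastic $\y$-gradient by $L^2+\sigma^2$. This is unjustified: Assumption~\ref{Assumption:nc} makes $f(\cdot,\y)$ $L$-Lipschitz in $\x$, so $L$ bounds $\|\gradx f\|$, not $\|\grady f\|$, and no uniform bound on $\|\grady f\|$ is assumed. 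More importantly, even granting such a bound, the resulting residual $\ell\eta_\y(L^2+\sigma^2)$ in your Step~3 display is \emph{not} driven to $O(\epsilon^2)$ by $\eta_\y=\min\{1/2\ell,\epsilon^2/(16\ell\sigma^2)\}$: when $\sigma^2\lesssim\epsilon^2$ you have $\eta_\y=1/2\ell$ and the term is $\Theta(L^2)$, a constant. Shrinking $\eta_\y$ to $\epsilon^2/(\ell L^2)$ to fix this would inflate the $D^2/(B\eta_\y)$ term, force a larger $B$ and a smaller $\eta_\x$, and degrade the iteration count below the claimed bound in the small-variance regime.

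The fix is the paper's treatment in Lemma~\ref{Lemma:nc-key-neighborhood}: keep the exact term $2\eta_\y(\y_t-\y_{t-1})^\top\grady f(\x_{t-1},\y_{t-1})$ together with the $-\|\y_t-\y_{t-1}\|^2$ slack from the projection, split the noise cross-term by Young's inequality, and use concavity plus $\ell$-smoothness of $f(\x_{t-1},\cdot)$ with $\eta_\y\le 1/2\ell$ so that the deterministic gradient contribution is absorbed into the function-value decrease. Only the variance survives, leaving a per-step residual $\eta_\y^2\sigma^2$ and hence the term $\eta_\y\sigma^2/2$ in the averaged bound, which the stated $\eta_\y$ does control. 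With that replacement your Steps~1--3 go through and reproduce the paper's proof.
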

\subsection{Proof of Technical Lemmas}
In this subsection, we present three key lemmas which are important for the subsequent analysis. 
\begin{lemma}\label{Lemma:nc-key-descent}
For two-timescale GDA, let $\Delta_t = \Phi(\x_t) - f(\x_t, \y_t)$, the following statement holds true, 
\begin{equation*}
\Phi_{1/2\ell}(\x_t) \leq \Phi_{1/2\ell}(\x_{t-1}) + 2\eta_\x\ell\Delta_{t-1} - \tfrac{\eta_\x}{4}\|\grad \Phi_{1/2\ell}(\x_{t-1})\|^2 + \eta_\x^2 \ell L^2. 
\end{equation*}
For two-timescale SGDA, let $\Delta_t = \EE\left[\Phi(\x_t) - f(\x_t, \y_t)\right]$, the following statement holds true, 
\begin{equation*}
\EE[\Phi_{1/2\ell}(\x_t)] \leq \EE[\Phi_{1/2\ell}(\x_{t-1})] + 2\eta_\x\ell\Delta_{t-1} - \tfrac{\eta_\x}{4}\EE[\|\grad \Phi_{1/2\ell}(\x_{t-1})\|^2] + \eta_\x^2 \ell(L^2 + \sigma^2).
\end{equation*}
\end{lemma}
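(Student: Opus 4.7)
The plan is to exploit the Moreau envelope representation $\Phi_{1/2\ell}(\x) = \min_\w \{\Phi(\w) + \ell\|\w-\x\|^2\}$ together with the proximal point $\hat{\x}_{t-1} := \prox_{\Phi/2\ell}(\x_{t-1})$, which by Lemma~\ref{Lemma:nc-moreau-envelope-complete} satisfies $\grad\Phi_{1/2\ell}(\x_{t-1}) = 2\ell(\x_{t-1} - \hat{\x}_{t-1})$. First I would upper-bound $\Phi_{1/2\ell}(\x_t)$ by plugging $\w = \hat{\x}_{t-1}$ into its definition:
\begin{equation*}
\Phi_{1/2\ell}(\x_t) \;\leq\; \Phi(\hat{\x}_{t-1}) + \ell\|\hat{\x}_{t-1} - \x_t\|^2.
\end{equation*}
Expanding $\x_t = \x_{t-1} - \eta_\x \gradx f(\x_{t-1},\y_{t-1})$ and collecting the term $\Phi(\hat{\x}_{t-1}) + \ell\|\hat{\x}_{t-1} - \x_{t-1}\|^2 = \Phi_{1/2\ell}(\x_{t-1})$, I get
\begin{equation*}
\Phi_{1/2\ell}(\x_t) \;\leq\; \Phi_{1/2\ell}(\x_{t-1}) + 2\ell\eta_\x\langle \hat{\x}_{t-1}-\x_{t-1},\, \gradx f(\x_{t-1},\y_{t-1})\rangle + \eta_\x^2 \ell\|\gradx f(\x_{t-1},\y_{t-1})\|^2,
\end{equation*}
and the last term is controlled by $\eta_\x^2 \ell L^2$ using the $L$-Lipschitz assumption on $f(\cdot,\y)$.

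The main step is handling the inner product, which is where the quantity $\Delta_{t-1} = \Phi(\x_{t-1}) - f(\x_{t-1},\y_{t-1})$ naturally appears. Since $f(\cdot,\y_{t-1})$ is $\ell$-smooth (hence $\ell$-weakly convex in $\x$), the descent lemma in reverse gives
\begin{equation*}
\langle \gradx f(\x_{t-1},\y_{t-1}),\, \hat{\x}_{t-1} - \x_{t-1}\rangle \;\leq\; f(\hat{\x}_{t-1},\y_{t-1}) - f(\x_{t-1},\y_{t-1}) + \tfrac{\ell}{2}\|\hat{\x}_{t-1}-\x_{t-1}\|^2.
\end{equation*}
Bounding $f(\hat{\x}_{t-1},\y_{t-1}) \leq \Phi(\hat{\x}_{t-1})$ and then applying the defining inequality of the prox map, namely $\Phi(\hat{\x}_{t-1}) + \ell\|\hat{\x}_{t-1}-\x_{t-1}\|^2 \leq \Phi(\x_{t-1})$, I obtain
\begin{equation*}
\langle \gradx f(\x_{t-1},\y_{t-1}),\, \hat{\x}_{t-1} - \x_{t-1}\rangle \;\leq\; \Delta_{t-1} - \tfrac{\ell}{2}\|\hat{\x}_{t-1}-\x_{t-1}\|^2.
\end{equation*}
Substituting back and using $\ell^2\|\hat{\x}_{t-1}-\x_{t-1}\|^2 = \tfrac14\|\grad\Phi_{1/2\ell}(\x_{t-1})\|^2$ yields the deterministic claim exactly.

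For the stochastic variant I would replace $\gradx f$ by $g_t := \tfrac{1}{M}\sum_i G_\x(\x_{t-1},\y_{t-1},\xi_i)$, apply the same expansion, and take conditional expectation given $(\x_{t-1},\y_{t-1})$: by Lemma~\ref{Lemma:SG-unbiased} the cross term averages to the deterministic inner product, and $\EE[\|g_t\|^2] \leq \|\gradx f(\x_{t-1},\y_{t-1})\|^2 + \sigma^2/M \leq L^2 + \sigma^2$ when $M=1$, explaining the extra $\eta_\x^2\ell\sigma^2$ additive term. The rest of the argument carries over verbatim and gives the stochastic bound after a final outer expectation.

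The main obstacle is the middle step: turning an inner product involving $\gradx f(\x_{t-1},\y_{t-1})$ (which is not a subgradient of $\Phi$ since $\y_{t-1}\neq \y^\star(\x_{t-1})$) into something controlled by the scalar error $\Delta_{t-1}$ and the prox displacement. The trick of going through $f(\hat{\x}_{t-1},\y_{t-1}) \leq \Phi(\hat{\x}_{t-1})$ is precisely what lets us bypass needing any closeness between $\y_{t-1}$ and $\y^\star(\x_{t-1})$ — exactly the difficulty the paper highlights for the nonconvex–merely-concave regime — while still paying a manageable price of $2\eta_\x\ell\Delta_{t-1}$, to be absorbed later by the block/telescoping argument sketched in Section~\ref{Sec:Overview}.
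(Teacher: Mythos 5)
Your argument is correct and follows essentially the same route as the paper's proof: bound $\Phi_{1/2\ell}(\x_t)$ via the candidate $\w=\hat{\x}_{t-1}=\prox_{\Phi/2\ell}(\x_{t-1})$, expand the square using the update rule and $L$-Lipschitzness, control the inner product through $\ell$-smoothness of $f(\cdot,\y_{t-1})$ together with $f(\hat{\x}_{t-1},\y_{t-1})\leq\Phi(\hat{\x}_{t-1})$ and the defining inequality of the prox map, and finish with $\|\hat{\x}_{t-1}-\x_{t-1}\|=\|\grad\Phi_{1/2\ell}(\x_{t-1})\|/2\ell$ (and, for SGDA, unbiasedness plus the second-moment bound). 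If anything, your explicit use of $\Phi(\hat{\x}_{t-1})+\ell\|\hat{\x}_{t-1}-\x_{t-1}\|^2\leq\Phi(\x_{t-1})$ states more cleanly the step that the paper's intermediate display records with a $\ell/2$ coefficient, so no gap remains.
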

\begin{proof}
We first consider the deterministic setting. Let $\hat{\x}_{t-1} = \prox_{\Phi/2\ell}(\x_{t-1})$, we have
\begin{equation}\label{nc-descent-first}
\Phi_{1/2\ell}(\x_t) \leq \Phi(\hat{\x}_{t-1}) + \ell\|\hat{\x}_{t-1} - \x_t\|^2. 
\end{equation}
Since $f(\cdot, \y)$ is $L$-Lipschitz for any $\y \in \YCal$, we have 
\begin{eqnarray}\label{nc-descent-second}
\lefteqn{\|\hat{\x}_{t-1} - \x_t\|^2 = \|\hat{\x}_{t-1} - \x_{t-1} + \eta_\x \gradx f(\x_{t-1}, \y_{t-1})\|^2} \\ 
& \leq & \|\hat{\x}_{t-1} - \x_{t-1}\|^2 + 2\eta_\x \langle \hat{\x}_{t-1} - \x_{t-1}, \gradx f(\x_{t-1}, \y_{t-1})\rangle + \eta_\x^2 L^2. \nonumber
\end{eqnarray}
Plugging~\eqref{nc-descent-second} into~\eqref{nc-descent-first} yields that 
\begin{equation*}
\Phi_{1/2\ell}(\x_t) \leq \Phi_{1/2\ell}(\x_{t-1}) + 2\eta_\x \ell \langle\hat{\x}_{t-1} - \x_{t-1}, \gradx f(\x_{t-1}, \y_{t-1})\rangle + \eta_\x^2 \ell L^2.
\end{equation*}
Since $f$ is $\ell$-smooth, we have
\begin{equation}\label{nc-descent-third}
\langle \hat{\x}_{t-1} - \x_{t-1}, \gradx f(\x_{t-1}, \y_{t-1})\rangle \leq f(\hat{\x}_{t-1}, \y_{t-1}) - f(\x_{t-1}, \y_{t-1}) + \tfrac{\ell}{2}\|\hat{\x}_{t-1} - \x_{t-1}\|^2. 
\end{equation}
Furthermore, $\Phi(\hat{\x}_{t-1}) \geq f(\hat{\x}_{t-1}, \y_{t-1})$. By the definition of $\Delta_t$, we have
\begin{equation}\label{nc-descent-fourth}
f(\hat{\x}_{t-1}, \y_{t-1}) - f(\x_{t-1}, \y_{t-1}) \leq \Phi(\hat{\x}_{t-1}) - f(\x_{t-1}, \y_{t-1}) \leq \Delta_{t-1} - \tfrac{\ell}{2}\|\hat{\x}_{t-1} - \x_{t-1}\|^2.
\end{equation}
Putting these pieces together with $\|\hat{\x}_{t-1} - \x_{t-1}\| = \|\grad\Phi_{1/2\ell}(\x_{t-1})\|/2\ell$ yields the first desired inequality. 

We proceed to the stochastic setting. Indeed, we have 
\begin{equation*}
\|\hat{\x}_{t-1} - \x_t\|^2 \leq \|\hat{\x}_{t-1} - \x_{t-1}\|^2 + \eta_\x^2\left\|\tfrac{1}{M}\sum_{i=1}^M G_\x(\x_{t-1}, \y_{t-1}, \xi_i)\right\| + \tfrac{2\eta_\x}{M}\sum_{i=1}^M \langle \hat{\x}_{t-1} - \x_{t-1},  G_\x(\x_{t-1}, \y_{t-1}, \xi_i)\rangle.
\end{equation*}
Taking the expectation of both sides, conditioned on $(\x_{t-1}, \y_{t-1})$, together with Lemma~\ref{Lemma:SG-unbiased} and the Lipschitz property of $f(\cdot, \y_{t-1})$ yields that 
\begin{eqnarray*}
\lefteqn{\EE[\|\hat{\x}_{t-1} - \x_t\|^2 \mid \x_{t-1}, \y_{t-1}] \leq \|\hat{\x}_{t-1} - \x_{t-1}\|^2 + 2\eta_\x \langle \hat{\x}_{t-1} - \x_{t-1}, \gradx f(\x_{t-1}, \y_{t-1})\rangle} \\
& & + \eta_\x^2 L^2 + \eta_\x^2\EE\left[\left\|\gradx f(\x_{t-1}, \y_{t-1}) - \tfrac{1}{M}\sum_{i=1}^M G_\x(\x_{t-1}, \y_{t-1}, \xi_i)\right\|^2 \mid \x_{t-1}, \y_{t-1} \right]. 
\end{eqnarray*}
Taking the expectation of both sides together with Lemma~\ref{Lemma:SG-unbiased} yields that 
\begin{equation*}
\EE[\|\hat{\x}_{t-1} - \x_t\|^2] \leq \EE[\|\hat{\x}_{t-1} - \x_{t-1}\|^2] + 2\eta_\x \EE[\langle \hat{\x}_{t-1} - \x_{t-1}, \gradx f(\x_{t-1}, \y_{t-1})\rangle] + \eta_\x^2(L^2 + \sigma^2). 
\end{equation*}
Combining with~\eqref{nc-descent-third} and~\eqref{nc-descent-fourth} yields that  
\begin{eqnarray*}
\EE[\Phi_{1/2\ell}(\x_t)] & \leq & \EE[\Phi_{1/2\ell}(\x_{t-1})] + 2\eta_\x \EE[\langle \hat{\x}_{t-1} - \x_{t-1}, \gradx f(\x_{t-1}, \y_{t-1})\rangle] + \eta_\x^2 \ell (L^2 + \sigma^2) \\
& \leq & \EE[\Phi_{1/2\ell}(\x_{t-1})] + 2\eta_\x\ell\Delta_{t-1} - \eta_\x \ell^2\EE[\|\hat{\x}_{t-1} - \x_{t-1}\|^2] + \eta_\x^2 \ell (L^2 + \sigma^2).
\end{eqnarray*}
This together with $\|\hat{\x}_{t-1} - \x_{t-1}\| = \|\grad\Phi_{1/2\ell}(\x_{t-1})\|/2\ell$ yields the second desired inequality. 
\end{proof}
\begin{lemma}\label{Lemma:nc-key-neighborhood}
For two-timescale GDA, let $\Delta_t = \Phi(\x_t) - f(\x_t, \y_t)$, the following statement holds true for $\forall s \leq t-1$, 
\begin{equation*}
\Delta_{t-1} \leq \eta_\x L^2 (2t-2s-1) + \tfrac{\ell}{2}(\|\y_{t-1} - \y^\star(\x_s)\|^2 - \|\y_t - \y^\star(\x_s)\|^2) + (f(\x_t, \y_t) - f(\x_{t-1}, \y_{t-1})). 
\end{equation*}
For two-timescale SGDA, let $\Delta_t = \EE[\Phi(\x_t) - f(\x_t, \y_t)]$, the following statement holds true for $\forall s \leq t-1$, 
\begin{eqnarray*}
\Delta_{t-1} & \leq & \eta_\x L\sqrt{L^2+\sigma^2}(2t-2s-1) + \tfrac{1}{2\eta_\y}(\EE[\|\y_{t-1} - \y^\star(\x_s)\|^2] - \EE[\|\y_t - \y^\star(\x_s)\|^2]) \\
& & + \EE[f(\x_t, \y_t) - f(\x_{t-1}, \y_{t-1})] + \tfrac{\eta_\y\sigma^2}{2}.    
\end{eqnarray*}
\end{lemma}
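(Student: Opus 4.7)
The plan is to decompose $\Delta_{t-1}=\Phi(\x_{t-1})-f(\x_{t-1},\y_{t-1})$ using the telescoping identity
\[
\Delta_{t-1} = \bigl[\Phi(\x_{t-1})-f(\x_{t-1},\y^\star(\x_s))\bigr] + \bigl[f(\x_{t-1},\y^\star(\x_s))-f(\x_{t-1},\y_t)\bigr] + \bigl[f(\x_{t-1},\y_t)-f(\x_{t-1},\y_{t-1})\bigr],
\]
and bound the three brackets separately. This structure is forced on us because, as the paper's overview points out, $\y^\star(\cdot)$ need not be Lipschitz in the merely concave case, so a direct telescoping against $\y^\star(\x_t)$ is hopeless. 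Freezing the target at $\y^\star(\x_s)$ for an earlier index $s$ is the crucial device; the cost paid for this freezing must be controlled by the slow motion of the $\x$-iterates.

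For the first bracket, I would invoke Lemma~\ref{Lemma:nc-structure}, which gives $L$-Lipschitzness of $\Phi$, together with $L$-Lipschitzness of $f(\cdot,\y)$ for each fixed $\y$. Writing $\Phi(\x_{t-1})\le \Phi(\x_s)+L\|\x_{t-1}-\x_s\|$ and $\Phi(\x_s)=f(\x_s,\y^\star(\x_s))\le f(\x_{t-1},\y^\star(\x_s))+L\|\x_{t-1}-\x_s\|$ and adding, the first bracket is bounded by $2L\|\x_{t-1}-\x_s\|$. Then I would chain the per-step bound $\|\x_{j+1}-\x_j\|\le \eta_\x L$ (which follows from $\|\gradx f\|\le L$) over $j=s,\ldots,t-2$ to get $\|\x_{t-1}-\x_s\|\le (t-1-s)\eta_\x L$, yielding a bound of $2(t-1-s)\eta_\x L^2$.

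For the second bracket, I would apply the standard one-step analysis of projected gradient ascent on the concave $\ell$-smooth function $f(\x_{t-1},\cdot)$ with stepsize $\eta_\y=1/\ell$: combining concavity $f(\x_{t-1},\z)\le f(\x_{t-1},\y_{t-1})+\langle\grady f(\x_{t-1},\y_{t-1}),\z-\y_{t-1}\rangle$, smoothness $f(\x_{t-1},\y_t)\ge f(\x_{t-1},\y_{t-1})+\langle\grady f,\y_t-\y_{t-1}\rangle-(\ell/2)\|\y_t-\y_{t-1}\|^2$, and the projection inequality applied at $\z\in\YCal$ produces the clean estimate $f(\x_{t-1},\z)-f(\x_{t-1},\y_t)\le (\ell/2)(\|\y_{t-1}-\z\|^2-\|\y_t-\z\|^2)$. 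Specializing to $\z=\y^\star(\x_s)\in\YCal$ gives exactly the telescoping term in the statement. For the third bracket, Lipschitzness of $f(\cdot,\y_t)$ yields $f(\x_{t-1},\y_t)\le f(\x_t,\y_t)+L\|\x_t-\x_{t-1}\|\le f(\x_t,\y_t)+\eta_\x L^2$, converting the bracket into $f(\x_t,\y_t)-f(\x_{t-1},\y_{t-1})+\eta_\x L^2$. Summing the three pieces gives $(2(t-1-s)+1)\eta_\x L^2 = (2t-2s-1)\eta_\x L^2$ as the error coefficient, matching the claim.

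For the stochastic version I would take expectations throughout and replace the deterministic per-step bound $\|\x_{j+1}-\x_j\|\le\eta_\x L$ by $\EE\|\x_{j+1}-\x_j\|\le\eta_\x\sqrt{L^2+\sigma^2}$ via Jensen's inequality on $\EE\|G_\x\|^2\le L^2+\sigma^2$ (Lemma~\ref{Lemma:SG-unbiased} with $M=1$); this is what produces the $L\sqrt{L^2+\sigma^2}$ prefactor. The projected-ascent inequality carries an extra variance contribution $\eta_\y\sigma^2/2$ from the stochastic gradient of $f(\x_{t-1},\cdot)$ (the cross term $\langle\grady f-\frac{1}{M}\sum G_\y,\y^\star(\x_s)-\y_{t-1}\rangle$ vanishes in expectation by unbiasedness, and the squared-noise term survives with coefficient $\eta_\y/2$), giving the stated additive $\eta_\y\sigma^2/2$. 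The main obstacle is conceptual rather than technical: it is choosing the right auxiliary comparator $\y^\star(\x_s)$ so that the Lipschitz slack $2L\|\x_{t-1}-\x_s\|$ remains subordinate to the potential drop while the $\frac{1}{2\eta_\y}$ telescoping sum remains usable over a block; the rest is straightforward Lipschitz/smoothness bookkeeping.
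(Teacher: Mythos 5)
Your proposal is correct and follows essentially the same route as the paper: the same three-way decomposition against the frozen comparator $\y^\star(\x_s)$, the same one-step projected-ascent telescoping inequality for the middle bracket, and the same Lipschitz accounting (the paper bounds $f(\x_{t-1},\y^\star(\x_{t-1}))-f(\x_{t-1},\y^\star(\x_s))$ by inserting $f(\x_s,\cdot)$ terms and using $f(\x_s,\y^\star(\x_{t-1}))\le f(\x_s,\y^\star(\x_s))$, which is exactly your "Lipschitzness of $\Phi$ plus Lipschitzness of $f(\cdot,\y)$" argument), yielding the identical $(2t-2s-1)\eta_\x L^2$ coefficient. The stochastic modifications you describe — Jensen on $\EE\|G_\x\|^2\le L^2+\sigma^2$ for the drift and the $\eta_\y\sigma^2/2$ variance term from the projected-ascent recursion with the cross term vanishing by unbiasedness — also match the paper's proof.
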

\begin{proof}
We consider the deterministic setting. For any $\y \in \YCal$, the convexity of $\YCal$ and the update of $\y_t$ imply that $(\y - \y_t)^\top (\y_t - \y_{t-1} - \eta_\y\grady f(\x_{t-1}, \y_{t-1})) \geq 0$. Rearranging the inequality yields
\begin{equation*}
\|\y - \y_t\|^2 \leq 2\eta_\y(\y_{t-1} - \y)^\top \grady f(\x_{t-1}, \y_{t-1}) + 2\eta_\y(\y_t - \y_{t-1})^\top \grady f(\x_{t-1}, \y_{t-1}) + \|\y - \y_{t-1}\|^2 - \|\y_t - \y_{t-1}\|^2. 
\end{equation*}
Since $f(\x_{t-1}, \cdot)$ is concave and $\ell$-smooth and $\eta_\y = 1/\ell$, we have
\begin{equation*}
f(\x_{t-1}, \y) - f(\x_{t-1}, \y_t) \leq \tfrac{\ell}{2}(\|\y - \y_{t-1}\|^2 - \|\y - \y_t\|^2). 
\end{equation*}
Plugging $\y = \y^\star(\x_s)$ ($s \leq t-1$) in the above inequality yields that 
\begin{equation*}
f(\x_{t-1}, \y^\star(\x_s)) - f(\x_{t-1}, \y_t) \leq \tfrac{\ell}{2}(\|\y_{t-1} - \y^\star(\x_s)\|^2 - \|\y_t - \y^\star(\x_s)\|^2). 
\end{equation*}
By the definition of $\Delta_{t-1}$, we have
\begin{eqnarray*}
\Delta_{t-1} & \leq & (f(\x_{t-1}, \y^\star(\x_{t-1})) - f(\x_{t-1}, \y^\star(\x_s))) + (f(\x_t, \y_t) - f(\x_{t-1}, \y_{t-1})) + (f(\x_{t-1}, \y_t) - f(\x_t, \y_t)) \\
& & + \tfrac{\ell}{2}(\|\y_{t-1} - \y^\star(\x_s)\|^2 - \|\y_t - \y^\star(\x_s)\|^2).
\end{eqnarray*}
Since $f(\x_s, \y^\star(\x_s)) \geq f(\x_s, \y)$ for $\forall\y \in \YCal$, we have
\begin{eqnarray}\label{nc-neighborhood-first}
\lefteqn{f(\x_{t-1}, \y^\star(\x_{t-1})) - f(\x_{t-1}, \y^\star(\x_s))} \\
& \leq & f(\x_{t-1}, \y^\star(\x_{t-1})) - f(\x_s, \y^\star(\x_{t-1})) + f(\x_s, \y^\star(\x_{t-1})) - f(\x_{t-1}, \y^\star(\x_s)) \nonumber \\
& \leq & f(\x_{t-1}, \y^\star(\x_{t-1})) - f(\x_s, \y^\star(\x_{t-1})) + f(\x_s, \y^\star(\x_s)) - f(\x_{t-1}, \y^\star(\x_s)). \nonumber
\end{eqnarray}
Since $f(\cdot, \y)$ is $L$-Lipschitz for any $\y \in \YCal$, we have
\begin{eqnarray*}
f(\x_{t-1}, \y^\star(\x_{t-1})) - f(\x_s, \y^\star(\x_{t-1})) & \leq & L\|\x_{t-1} - \x_s\| \ \leq \ \eta_\x L^2(t-1-s), \\
f(\x_s, \y^\star(\x_s)) - f(\x_{t-1}, \y^\star(\x_s)) & \leq & L\|\x_{t-1} - \x_s\| \ \leq \ \eta_\x L^2(t-1-s) \\
f(\x_{t-1}, \y_t) - f(\x_t, \y_t) & \leq & L\|\x_{t-1} - \x_t\| \ \leq \ \eta_\x L^2. 
\end{eqnarray*}
Putting these pieces together yields the first desired inequality. 

We proceed to the stochastic setting. For $\forall\y \in \YCal$, we use the similar argument to obtain
\begin{eqnarray*}
\lefteqn{\|\y - \y_t\|^2 \leq 2\eta_\y(\y_{t-1} - \y)^\top G_\y(\x_{t-1}, \y_{t-1}, \xi) + 2\eta_\y(\y_t - \y_{t-1})^\top \grady f(\x_{t-1}, \y_{t-1})} \\
& & + 2\eta_\y(\y_t - \y_{t-1})^\top\left(G_\y(\x_{t-1}, \y_{t-1}, \xi) - \grady f(\x_{t-1}, \y_{t-1})\right) + \|\y - \y_{t-1}\|^2 - \|\y_t - \y_{t-1}\|^2. 
\end{eqnarray*}
Using the Young's inequality, we have
\begin{equation*}
\eta_\y(\y_t - \y_{t-1})^\top (G_\y(\x_{t-1}, \y_{t-1}, \xi) - \grady f(\x_{t-1}, \y_{t-1})) \leq \tfrac{\|\y_t - \y_{t-1}\|^2}{4} + \eta_\y^2\|G_\y(\x_{t-1}, \y_{t-1}, \xi) - \grady f(\x_{t-1}, \y_{t-1})\|^2.  
\end{equation*}
Taking the expectation of both sides, conditioned on $(\x_{t-1}, \y_{t-1})$, together with Lemma~\ref{Lemma:SG-unbiased} yields
\begin{eqnarray*}
\lefteqn{\EE[\|\y - \y_t\|^2 \mid \x_{t-1}, \y_{t-1}] \leq 2\eta_\y(\y_{t-1} - \y)^\top \grady f(\x_{t-1}, \y_{t-1}) + 2\eta_\y \EE[(\y_t - \y_{t-1})^\top \grady f(\x_{t-1}, \y_{t-1}) \mid \x_{t-1}, \y_{t-1}]} \\
& & + 2\eta_\y^2 \EE[\|\grady f(\x_{t-1}, \y_{t-1}) - G_\y(\x_{t-1}, \y_{t-1}, \xi)\|^2 \mid \x_{t-1}, \y_{t-1}] + \|\y - \y_{t-1}\|^2 - \tfrac{\EE[\|\y_t - \y_{t-1}\|^2 \mid \x_{t-1}, \y_{t-1}]}{2}. 
\end{eqnarray*}
Taking the expectation of both sides together with Lemma~\ref{Lemma:SG-unbiased} yields that 
\begin{eqnarray*}
\EE[\|\y - \y_t\|^2] & \leq & 2\eta_\y\EE[(\y_{t-1} - \y)^\top \grady f(\x_{t-1}, \y_{t-1}) + (\y_t - \y_{t-1})^\top \grady f(\x_{t-1}, \y_{t-1})] \\
& & + \EE[\|\y - \y_{t-1}\|^2] - \tfrac{\EE[\|\y_t - \y_{t-1}\|^2]}{2} + \eta_\y^2\sigma^2. 
\end{eqnarray*}
Since $f(\x_{t-1}, \cdot)$ is concave and $\ell$-smooth, $\YCal$ is convex and $\eta_\y \leq 1/2\ell$, we have
\begin{equation*}
\EE[\|\y - \y_t\|^2] \leq \EE[\|\y - \y_{t-1}\|^2] + 2\eta_\y(f(\x_{t-1}, \y_t) - f(\x_{t-1}, \y)) + \eta_\y^2\sigma^2.  
\end{equation*}
Plugging $\y = \y^\star(\x_s)$ ($s \leq t-1$) in the above inequality yields that
\begin{equation*}
\EE[f(\x_{t-1}, \y^\star(\x_s)) - f(\x_{t-1}, \y_t)] \leq \tfrac{1}{2\eta_\y}\left(\EE[\|\y_{t-1} - \y^\star(\x_s)\|^2] - \EE[\|\y_t - \y^\star(\x_s)\|^2]\right) + \tfrac{\eta_\y \sigma^2}{2}. 
\end{equation*}
By the definition of $\Delta_{t-1}$, we have
\begin{eqnarray*}
\Delta_{t-1} & \leq & \EE[f(\x_{t-1}, \y^\star(\x_{t-1})) - f(\x_{t-1}, \y^\star(\x_s)) + (f(\x_t, \y_t) - f(\x_{t-1}, \y_{t-1})) + (f(\x_{t-1}, \y_t) - f(\x_t, \y_t))] \\
& & + \tfrac{\eta_\y\sigma^2}{2} + \tfrac{1}{2\eta_\y}\left(\EE[\|\y_{t-1} - \y^\star(\x_s)\|^2] - \EE[\|\y_t - \y^\star(\x_s)\|^2]\right). \nonumber 
\end{eqnarray*}
Using the fact that $f(\cdot, \y)$ is $L$-Lipschitz for $\forall\y \in \YCal$ and Lemma~\ref{Lemma:SG-unbiased}, we have
\begin{eqnarray*}
\EE[f(\x_{t-1}, \y^\star(\x_{t-1})) - f(\x_s, \y^\star(\x_{t-1}))] & \leq & \eta_\x L\sqrt{L^2 + \sigma^2} (t-1-s), \\
\EE[f(\x_s, \y^\star(\x_s)) - f(\x_{t-1}, \y^\star(\x_s))] & \leq & \eta_\x L\sqrt{L^2 + \sigma^2} (t-1-s), \\
\EE[f(\x_{t-1}, \y_t) - f(\x_t, \y_t)] & \leq & \eta_\x L\sqrt{L^2 + \sigma^2}. 
\end{eqnarray*}
Putting these pieces together with~\eqref{nc-neighborhood-first} yields the second desired inequality. 
\end{proof}
Without loss of generality, we assume that $B \leq T+1$ so that $(T+1)/B$ is an integer. The following lemma provides an upper bound for $\frac{1}{T+1}(\sum_{t=0}^T \Delta_t)$ using a localization technique. 
\begin{lemma}\label{Lemma:nc-key-objective}
For two-timescale GDA, let $\Delta_t = \Phi(\x_t) - f(\x_t, \y_t)$, the following statement holds true, 
\begin{equation*}
\tfrac{1}{T+1}\left(\sum_{t=0}^T \Delta_t\right) \leq \eta_\x L^2(B+1) + \tfrac{\ell D^2}{2B} + \tfrac{\widehat{\Delta}_0}{T+1}. 
\end{equation*}
For two-timescale SGDA, let $\Delta_t = \EE[\Phi(\x_t) - f(\x_t, \y_t)]$, the following statement holds true,
\begin{equation*}
\tfrac{1}{T+1}\left(\sum_{t=0}^T \Delta_t\right) \leq \eta_\x L\sqrt{L^2+\sigma^2}(B+1) + \tfrac{D^2}{2B\eta_\y} + \tfrac{\eta_\y\sigma^2}{2} + \tfrac{\widehat{\Delta}_0}{T+1}. 
\end{equation*}
\end{lemma}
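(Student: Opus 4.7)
The plan is to apply Lemma~\ref{Lemma:nc-key-neighborhood} in a block-wise fashion, exploiting the freedom to choose any anchor $s \leq t-1$. Concretely, I would partition $\{0,1,\ldots,T\}$ into consecutive blocks of size $B$, namely $\mathcal{B}_k = \{kB, kB+1, \ldots, (k+1)B-1\}$ for $k = 0, 1, \ldots, (T+1)/B - 1$, and within each block fix a single anchor $s = kB$. This is essential because $\y^\star(\x_t)$ may jump arbitrarily when $\x_t$ moves even a little (the issue flagged in Section~\ref{Sec:Overview}); by holding the anchor fixed across the block, the second (squared-norm) term on the right-hand side of Lemma~\ref{Lemma:nc-key-neighborhood} becomes a clean telescoping sum against a single target $\y^\star(\x_{kB})$.

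Within block $\mathcal{B}_k$, I would sum the inequality of Lemma~\ref{Lemma:nc-key-neighborhood} over $t-1 \in \mathcal{B}_k$ with $s = kB$. Letting $j = (t-1) - kB$, the first term contributes $\eta_\x L^2 \sum_{j=0}^{B-1}(2j+1) = \eta_\x L^2 B^2$. The second term telescopes in $\y$ and is bounded by $(\ell/2)\|\y_{kB} - \y^\star(\x_{kB})\|^2 \leq \ell D^2/2$ using Assumption~\ref{Assumption:nc}. The third term telescopes in $f$ to $f(\x_{(k+1)B}, \y_{(k+1)B}) - f(\x_{kB}, \y_{kB})$. Summing over $k$, all intermediate block-boundary values cancel, leaving
\begin{equation*}
\sum_{t=0}^{T} \Delta_t \;\leq\; (T+1)\eta_\x L^2 B \;+\; \frac{(T+1)\ell D^2}{2B} \;+\; \bigl[f(\x_{T+1}, \y_{T+1}) - f(\x_0, \y_0)\bigr].
\end{equation*}

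The main obstacle, and the crux of the proof, is handling the residual $f(\x_{T+1},\y_{T+1}) - f(\x_0,\y_0)$, which a priori can grow with $T$. Here I would invoke Lemma~\ref{Lemma:nc-structure}: $\Phi$ is $L$-Lipschitz, and since $\|\x_t - \x_{t-1}\| = \eta_\x \|\gradx f(\x_{t-1},\y_{t-1})\| \leq \eta_\x L$, telescoping gives $\Phi(\x_{T+1}) - \Phi(\x_0) \leq \eta_\x L^2 (T+1)$. Combined with $f(\x_{T+1},\y_{T+1}) \leq \Phi(\x_{T+1})$ and $f(\x_0,\y_0) = \Phi(\x_0) - \widehat{\Delta}_0$, this converts the residual into $\eta_\x L^2(T+1) + \widehat{\Delta}_0$; after dividing by $T+1$ the $L^2$ piece merges into the $(B+1)$ coefficient and we recover the deterministic bound.

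For the stochastic statement, the structural argument is identical but executed in expectation, using the second parts of Lemma~\ref{Lemma:nc-key-neighborhood} and Lemma~\ref{Lemma:SG-unbiased}. The within-block telescoping now produces $\eta_\x L\sqrt{L^2+\sigma^2}\, B^2$, $D^2/(2\eta_\y)$, and a block-boundary $f$-difference, and an additional per-step term $\eta_\y\sigma^2/2$ summing to $(T+1)\eta_\y\sigma^2/2$. The residual is bounded by $\EE[\Phi(\x_{T+1}) - \Phi(\x_0)] \leq L\,\EE\|\x_{T+1} - \x_0\|$, and an application of Lemma~\ref{Lemma:SG-unbiased} (with $M=1$) gives $\EE\|\x_t - \x_{t-1}\| \leq \eta_\x\sqrt{L^2+\sigma^2}$, which absorbs into the $(B+1)$ coefficient exactly as in the deterministic case. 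The key insight throughout is that although the per-iterate gap $\Delta_t$ is controlled only relative to a slowly-varying anchor $\y^\star(\x_{kB})$, the drift cost $\eta_\x L^2 B$ per block and the spread $\ell D^2/(2B)$ are balanced by the block length $B$, and the cross-block defect is absorbed by the Lipschitzness of $\Phi$.
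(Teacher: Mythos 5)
Your proposal is correct and follows essentially the same route as the paper's proof: the same partition into blocks of length $B$ with a fixed anchor $s = kB$, the same within-block telescoping yielding $\eta_\x L^2 B^2 + \ell D^2/2$ plus a block-boundary $f$-difference, and the same absorption of the residual $f(\x_{T+1}, \y_{T+1}) - f(\x_0, \y_0)$ into $\eta_\x L^2 (T+1) + \widehat{\Delta}_0$ (and the analogous steps in expectation for SGDA). The only immaterial difference is that you bound this residual via the $L$-Lipschitzness of $\Phi$ together with $f \leq \Phi$ and $f(\x_0,\y_0) = \Phi(\x_0) - \widehat{\Delta}_0$, whereas the paper splits it as $[f(\x_{T+1},\y_{T+1}) - f(\x_0,\y_{T+1})] + [f(\x_0,\y_{T+1}) - f(\x_0,\y_0)]$ and uses the Lipschitzness of $f(\cdot,\y)$ directly; both give the identical bound.
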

\begin{proof}
We consider the deterministic setting. In particular, we divide $\{\Delta_t\}_{t=0}^T$ into several blocks where each block contains at most $B$ terms, given by
\begin{equation*}
\{\Delta_t\}_{t=0}^{B-1}, \{\Delta_t\}_{t=B}^{2B-1}, \ldots, \{\Delta_t\}_{T-B+1}^T. 
\end{equation*}
Then we have
\begin{equation}\label{nc-objective-first}
\tfrac{1}{T+1}\left(\sum_{t=0}^T \Delta_t\right) \leq \tfrac{B}{T+1}\left[\sum_{j=0}^{(T+1)/B-1} \left(\tfrac{1}{B}\sum_{t= jB}^{(j+1)B-1} \Delta_t\right)\right]. 
\end{equation}
Letting $s = 0$ in the first inequality in Lemma~\eqref{Lemma:nc-key-neighborhood} yields that
\begin{eqnarray}\label{nc-objective-second}
\sum_{t = 0}^{B-1} \Delta_t & \leq & \eta_\x L^2B^2 + \tfrac{\ell}{2}\|\y_0 - \y^\star(\x_0)\|^2 + (f(\x_B, \y_B) - f(\x_0, \y_0)) \\ 
& \leq & \eta_\x L^2 B^2 + \tfrac{\ell D^2}{2} + (f(\x_B, \y_B) - f(\x_0, \y_0)). \nonumber
\end{eqnarray}
Similarly, letting $s = jB$ yields that, for $1 \leq j \leq \frac{T+1}{B} - 1$, 
\begin{equation}\label{nc-objective-third}
\sum_{t= jB}^{(j+1)B-1} \Delta_t \leq \eta_\x L^2 B^2 + \tfrac{\ell D^2}{2} + (f(\x_{jB+B}, \y_{jB+B}) - f(\x_{jB}, \y_{jB})). 
\end{equation}
Plugging~\eqref{nc-objective-second} and~\eqref{nc-objective-third} into~\eqref{nc-objective-first} yields 
\begin{equation}\label{nc-objective-fourth}
\tfrac{1}{T+1}\left(\sum_{t=0}^T \Delta_t\right) \leq \eta_\x L^2B + \tfrac{\ell D^2}{2B} + \tfrac{f(\x_{T+1}, \y_{T+1}) - f(\x_0, \y_0)}{T+1}. 
\end{equation}
Since $f(\cdot, \y)$ is $L$-Lipschitz for $\forall \y \in \YCal$, we have
\begin{eqnarray}\label{nc-objective-fifth}
f(\x_{T+1}, \y_{T+1}) - f(\x_0, \y_0) & = & f(\x_{T+1}, \y_{T+1}) - f(\x_0, \y_{T+1}) + f(\x_0, \y_{T+1}) - f(\x_0, \y_0) \nonumber \\
& \leq & \eta_\x L^2(T+1) + \widehat{\Delta}_0. 
\end{eqnarray}
Plugging~\eqref{nc-objective-fifth} into~\eqref{nc-objective-fourth} yields the first desired inequality. As for the stochastic setting, letting $s = jB$ in the second inequality in Lemma~\ref{Lemma:nc-key-neighborhood} yields that
\begin{equation}\label{nc-objective-sixth}
\sum_{t= jB}^{(j+1)B-1} \Delta_t \leq \eta_\x L\sqrt{L^2 + \sigma^2} B^2 + \tfrac{D^2}{2\eta_\y} + \tfrac{\eta_\y \sigma^2}{2}, \quad 0 \leq j \leq \tfrac{T+1}{B} - 1. 
\end{equation}
Using the similar argument with~\eqref{nc-objective-first} and~\eqref{nc-objective-sixth} yields the second desired inequality.
\end{proof}

\subsection{Proof of Theorem~\ref{Theorem:nc-GDA-app}}
Summing up the first inequality in Lemma~\ref{Lemma:nc-key-descent} over $t = 1, 2, \ldots, T+1$ yields that 
\begin{equation*}
\Phi_{1/2\ell}(\x_{T+1}) \leq \Phi_{1/2\ell}(\x_0) + 2\eta_\x\ell\left(\sum_{t=0}^T \Delta_t\right) - \tfrac{\eta_\x}{4} \left(\sum_{t=0}^T \|\grad \Phi_{1/2\ell}(\x_t)\|^2\right) + \eta_\x^2 \ell L^2 (T+1). 
\end{equation*}
Combining the above inequality with the first inequality in Lemma~\ref{Lemma:nc-key-objective} yields that
\begin{equation*}
\Phi_{1/2\ell}(\x_{T+1}) \leq \Phi_{1/2\ell}(\x_0) + 2\eta_\x\ell(T+1)\left(\eta_\x L^2(B+1) + \tfrac{\ell D^2}{2B}\right) + 2\eta_\x\ell\widehat{\Delta}_0 - \tfrac{\eta_\x}{4}\left(\sum_{t=0}^T \|\grad \Phi_{1/2\ell}(\x_t)\|^2\right) + \eta_\x^2 \ell L^2 (T+1). 
\end{equation*}
By the definition of $\widehat{\Delta}_\Phi$, we have
\begin{equation*}
\tfrac{1}{T+1}\left(\sum_{t=0}^T \|\grad \Phi_{1/2\ell}(\x_t)\|^2\right) \leq \tfrac{4\widehat{\Delta}_\Phi}{\eta_\x (T+1)} + 8\ell\left(\eta_\x(B+1)L^2 + \tfrac{\ell D^2}{2B}\right) + \tfrac{8\ell\widehat{\Delta}_0}{T+1} + 4\eta_\x\ell L^2. 
\end{equation*}
Letting $B=1$ for $D=0$ and $B = \frac{D}{2L}\sqrt{\frac{\ell}{\eta_\x}}$ for $D>0$, we have
\begin{equation*}
\tfrac{1}{T+1}\left(\sum_{t=0}^T \|\grad \Phi_{1/2\ell}(\x_t)\|^2\right) \leq \tfrac{4\widehat{\Delta}_\Phi}{\eta_\x (T+1)} + \tfrac{8\ell\widehat{\Delta}_0}{T+1} + 16\ell LD\sqrt{\ell\eta_\x} + 4\eta_\x\ell L^2. 
\end{equation*}
Since $\eta_\x = \min\left\{\frac{\epsilon^2}{16\ell L^2}, \ \frac{\epsilon^4}{4096\ell^3 L^2 D^2}\right\}$, we have
\begin{equation*}
\tfrac{1}{T+1}\left(\sum_{t=0}^T \|\grad \Phi_{1/2\ell}(\x_t)\|^2\right) \leq \tfrac{4\widehat{\Delta}_\Phi}{\eta_\x(T+1)} + \tfrac{8\ell\widehat{\Delta}_0}{T+1} + \tfrac{\epsilon^2}{2}. 
\end{equation*}
This implies that the number of iterations required by Algorithm \ref{Algorithm:GDA} to return an $\epsilon$-stationary point is bounded by 
\begin{equation*}
O\left(\left(\frac{\ell L^2\widehat{\Delta}_\Phi}{\epsilon^4} + \frac{\ell\widehat{\Delta}_0}{\epsilon^2}\right)\max\left\{1, \ \frac{\ell^2D^2}{\epsilon^2}\right\}\right),  
\end{equation*}
which gives the same total gradient complexity. 

\subsection{Proof of Theorem~\ref{Theorem:nc-SGDA-app}}
Summing up the second inequality in Lemma~\ref{Lemma:nc-key-descent} over $t = 1, 2, \ldots, T+1$ yields that 
\begin{equation*}
\EE[\Phi_{1/2\ell}(\x_{T+1})] \leq \Phi_{1/2\ell}(\x_0) + 2\eta_\x\ell\left(\sum_{t=0}^T \Delta_t\right) - \tfrac{\eta_\x}{4} \left(\sum_{t=0}^T \EE[\|\grad \Phi_{1/2\ell}(\x_t)\|^2]\right) + \eta_\x^2 \ell (L^2 + \sigma^2)(T+1). 
\end{equation*}
Combining the above inequality with the second inequality in Lemma~\ref{Lemma:nc-key-objective} yields that
\begin{eqnarray*}
\EE[\Phi_{1/2\ell}(\x_{T+1})] & \leq & \Phi_{1/2\ell}(\x_0) + 2\eta_\x\ell(T+1)\left(\eta_\x L\sqrt{L^2+\sigma^2} (B+1) + \tfrac{D^2}{2B\eta_\y} + \tfrac{\eta_\y \sigma^2}{2}\right) + 2\eta_\x\ell\widehat{\Delta}_0 \\ 
& & - \tfrac{\eta_\x}{4} \left(\sum_{t=0}^T \EE[\|\grad \Phi_{1/2\ell}(\x_t)\|^2]\right) + \eta_\x^2 \ell (L^2 + \sigma^2)(T+1). 
\end{eqnarray*}
By the definition of $\widehat{\Delta}_\Phi$, we have
\begin{equation*}
\tfrac{1}{T+1}\left(\sum_{t=0}^T \EE[\|\grad \Phi_{1/2\ell}(\x_t)\|^2]\right) \leq \tfrac{4\widehat{\Delta}_\Phi}{\eta_\x (T+1)} + 8\ell\left(\eta_\x L\sqrt{L^2+\sigma^2} (B+1) + \tfrac{D^2}{2B\eta_\y} + \tfrac{\eta_\y \sigma^2}{2}\right) + \tfrac{8\ell\widehat{\Delta}_0}{T+1} + 4\eta_\x\ell (L^2 + \sigma^2). 
\end{equation*}
Letting $B=1$ for $D=0$ and $B = \frac{D}{2}\sqrt{\frac{1}{\eta_\x\eta_\y L\sqrt{L^2+\sigma^2}}}$ for $D>0$, we have
\begin{equation*}
\tfrac{1}{T+1}\left(\sum_{t=0}^T \|\grad \Phi_{1/2\ell}(\x_t)\|^2\right) \leq \tfrac{4\widehat{\Delta}_\Phi}{\eta_\x (T+1)} + \tfrac{8\ell\widehat{\Delta}_0}{T+1} + 16\ell D\sqrt{\tfrac{\eta_\x L\sqrt{L^2+\sigma^2}}{\eta_\y}} + 4\eta_\y\ell\sigma^2 + 4\eta_\x\ell (L^2 + \sigma^2). 
\end{equation*}
Since $\eta_\x = \min\left\{\frac{\epsilon^2}{16\ell\left(L^2 + \sigma^2\right)}, \ \frac{\epsilon^4}{8192\ell^3 D^2 L\sqrt{L^2+\sigma^2}}, \ \frac{\epsilon^6}{65536\ell^3 D^2\sigma^2 L\sqrt{L^2+\sigma^2}}\right\}$ and $\eta_\y = \min\left\{\frac{1}{2\ell}, \ \frac{\epsilon^2}{16\ell\sigma^2}\right\}$, we have
\begin{equation*}
\tfrac{1}{T+1}\left(\sum_{t=0}^T \|\grad \Phi_{1/2\ell}(\x_t)\|^2\right) \leq \tfrac{4\widehat{\Delta}_\Phi}{\eta_\x (T+1)} + \tfrac{8\ell\widehat{\Delta}_0}{T+1} + \tfrac{3\epsilon^2}{4}. 
\end{equation*}
This implies that the number of iterations required by Algorithm \ref{Algorithm:SGDA} to return an $\epsilon$-stationary point is bounded by 
\begin{equation*}
O\left(\left(\frac{\ell\left(L^2 + \sigma^2\right)\widehat{\Delta}_\Phi}{\epsilon^4} + \frac{\ell\widehat{\Delta}_0}{\epsilon^2}\right)\max\left\{1, \ \frac{\ell^2D^2}{\epsilon^2}, \ \frac{\ell^2 D^2\sigma^2}{\epsilon^4}\right\}\right),  
\end{equation*}
which gives the same total gradient complexity. 

\section{Results for GDmax and SGDmax}\label{sec:results_max}
For the sake of completeness, we present GDmax and SGDmax in Algorithm~\ref{Algorithm:GDmax} and~\ref{Algorithm:SGDmax}. For any given $\x_t \in \br^m$, the max-oracle approximately solves $\max_{\y \in \YCal} f(\x_t, \y)$ at each iteration. Although GDmax and SGDmax are easier to understand, they have two disadvantages over two-timescale GDA and SGDA: 1) Both GDmax and SGDmax are nested-loop algorithms. Since it is difficult to pre-determine the number iterations for the inner loop, these algorithms are not favorable in practice; 2) In the general setting where $f(\x, \cdot)$ is nonconcave, GDmax and SGDmax are inapplicable as we can not efficiently solve the maximization problem to a global optimum. Nevertheless, we present the complexity bound for GDmax and SGDmax for the sake of completeness. It is worth noting that a portion of results were derived before~\citet{Jin-2019-Minmax} and~\citet{Nouiehed-2019-Solving} and our proof depends on the same techniques.

For nonconvex-strongly-convex problems, the target is to find an $\epsilon$-stationary point (cf. Definition~\ref{def:nsc-stationary}) given only gradient (or stochastic gradient) access to $f$. Denote $\Delta_\Phi = \Phi(\x_0) - \min_{\x \in \br^m} \Phi(\x)$, we present the gradient complexity for GDmax in the following theorem. 
\begin{theorem}\label{Theorem:nsc-GDmax-complexity-bound}
Under Assumption~\ref{Assumption:nsc} and letting the step size $\eta_\x > 0$ and the tolerance for the max-oracle $\zeta > 0$ be $\eta_\x = 1/[8\kappa\ell]$ and $\zeta = \epsilon^2/[6\ell]$, the number of iterations required by Algorithm \ref{Algorithm:GDmax} to return an $\epsilon$-stationary point is bounded by $O(\kappa\ell\Delta_\Phi\epsilon^{-2})$. Furthermore, the $\zeta$-accurate max-oracle can be realized by gradient ascent (GA) with the stepsize $\eta_\y = 1/\ell$ for $O(\kappa\log(\ell D^2/\zeta))$ iterations, which gives the total gradient complexity of the algorithm: 
\begin{equation*}
O\left(\frac{\kappa^2\ell\Delta_\Phi}{\epsilon^2}\log\left(\frac{\ell D}{\epsilon}\right)\right). 
\end{equation*}
\end{theorem}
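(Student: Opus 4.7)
The plan is to view GDmax as inexact gradient descent applied to the smooth function $\Phi(\cdot)=\max_{\y\in\YCal}f(\cdot,\y)$. Lemma~\ref{Lemma:nsc-structure} already gives everything needed about the outer objective: $\Phi$ is differentiable, its smoothness constant is $L_\Phi\le 2\kappa\ell$, and $\grad\Phi(\x)=\gradx f(\x,\y^\star(\x))$. Consequently, the descent direction $\gradx f(\x_t,\y_t)$ used by GDmax is a noisy version of $\grad\Phi(\x_t)$ whose error $e_t := \gradx f(\x_t,\y_t)-\grad\Phi(\x_t)$ satisfies $\|e_t\|\le\ell\|\y_t-\y^\star(\x_t)\|$ by $\ell$-smoothness of $f$ in $\y$. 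When the max-oracle returns $\y_t$ with $f(\x_t,\y^\star(\x_t))-f(\x_t,\y_t)\le\zeta$, the $\mu$-strong concavity of $f(\x_t,\cdot)$ translates this into $\|\y_t-\y^\star(\x_t)\|^2\le 2\zeta/\mu$, hence $\|e_t\|^2\le 2\kappa\ell\zeta$.

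The first step is the inexact descent inequality. Applying $L_\Phi$-smoothness to the update $\x_{t+1}=\x_t-\eta_\x(\grad\Phi(\x_t)+e_t)$ and completing the square, the choice $\eta_\x=1/(8\kappa\ell)\le 1/(2L_\Phi)$ yields the standard form
\begin{equation*}
\Phi(\x_{t+1})\;\le\;\Phi(\x_t)-\frac{\eta_\x}{2}\|\grad\Phi(\x_t)\|^2+\frac{\eta_\x}{2}\|e_t\|^2.
\end{equation*}
Telescoping over $t=0,\dots,T-1$, bounding $\Phi(\x_T)\ge\min_\x\Phi(\x)$ and $\|e_t\|^2$ uniformly by the oracle tolerance, I obtain
\begin{equation*}
\frac{1}{T}\sum_{t=0}^{T-1}\|\grad\Phi(\x_t)\|^2\;\le\;\frac{2\Delta_\Phi}{\eta_\x T}+2\kappa\ell\zeta.
\end{equation*}
With the prescribed $\zeta=\epsilon^2/(6\ell)$, the error term is $O(\epsilon^2)$ after absorbing $\kappa$ into the leading constant, so choosing $T=\Theta(\kappa\ell\Delta_\Phi/\epsilon^2)$ drives the average squared gradient below $\epsilon^2$; returning a uniformly sampled $\hat\x\in\{\x_t\}$ then produces an $\epsilon$-stationary point in expectation. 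Realizing the inner max-oracle by projected GA with $\eta_\y=1/\ell$ on the $\ell$-smooth, $\mu$-strongly concave objective $f(\x_t,\cdot)$ gives geometric contraction of the function-value gap at rate $1-1/\kappa$ per step; since $\ell$-smoothness bounds the initial gap by $\ell D^2/2$, an accuracy of $\zeta$ is attained in $N=O(\kappa\log(\ell D^2/\zeta))=O(\kappa\log(\ell D/\epsilon))$ inner iterations, and the total gradient complexity is the product $T\cdot N$.

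The main obstacle I anticipate is balancing the inexactness of the max-oracle against the descent step: the tolerance $\zeta$ must be small enough that the additive $\|e_t\|^2$ term in the descent inequality is dominated by the target $\epsilon^2$, yet loose enough that the logarithmic inner cost $\log(\ell D^2/\zeta)$ is not inflated. The balance is automatic here because the error enters the descent inequality with exactly the same $\eta_\x$ prefactor as the progress term, so a uniform tolerance on the order of $\epsilon^2/\ell$ applied at every outer iteration is enough; no warm-start bookkeeping, adaptive schedule, or shrinking $\zeta_t$ is needed, which is exactly what makes the bounds clean.
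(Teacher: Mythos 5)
Your high-level structure (inexact gradient descent on the $2\kappa\ell$-smooth function $\Phi$, a uniform per-iteration gradient-error bound, telescoping, times the linear-convergence cost of the inner GA loop) is the same as the paper's. But there is a genuine gap in how you bound the gradient error $e_t$. You treat the max-oracle as a black box that only guarantees the function-value gap $\Phi(\x_t)-f(\x_t,\y_t)\le\zeta$, and convert this to a distance via $\mu$-strong concavity: $\|\y_t-\y^\star(\x_t)\|^2\le 2\zeta/\mu$, hence $\|e_t\|^2\le 2\ell^2\zeta/\mu=2\kappa\ell\zeta$. With the prescribed $\zeta=\epsilon^2/(6\ell)$ this gives $\|e_t\|^2\le\kappa\epsilon^2/3$, so your telescoped bound reads
\begin{equation*}
\frac{1}{T}\sum_{t=0}^{T-1}\|\grad\Phi(\x_t)\|^2 \;\le\; \frac{2\Delta_\Phi}{\eta_\x T}+O(\kappa\epsilon^2),
\end{equation*}
which certifies only an $O(\sqrt{\kappa}\,\epsilon)$-stationary point. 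Your remark that the $\kappa$ can be ``absorbed into the leading constant'' is not valid: $\kappa$ is a problem parameter that the theorem tracks explicitly, not a universal constant.

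The paper closes this gap by not treating the oracle as a black box: its Lemma on the GA max-oracle records that gradient ascent with stepsize $1/\ell$ contracts the \emph{distance} $\|\y_t-\y^\star(\x_t)\|^2$ at rate $(1-1/\kappa)$ per step, so after enough steps to make the function gap at most $\zeta$ the output also satisfies $\|\y_t-\y^\star(\x_t)\|^2\le O(\zeta/\ell)$ --- a factor $\kappa$ better than what strong concavity alone extracts from the function gap. This yields $\|e_t\|^2\le O(\ell\zeta)=O(\epsilon^2)$ and the stated theorem with $\zeta=\epsilon^2/(6\ell)$. To repair your argument you must either invoke this distance guarantee of the GA realization, or shrink the tolerance to $\zeta=\Theta(\epsilon^2/(\kappa\ell))$, which deviates from the theorem's stated parameter choice and inserts an extra $\log\kappa$ into the inner-loop count. (Separately, your claim that $\eta_\x\le 1/(2L_\Phi)$ yields progress coefficient $\eta_\x/2$ is off: at $\eta_\x=1/(2L_\Phi)$ the progress coefficient $\eta_\x(1/2-L_\Phi\eta_\x)$ vanishes; the paper's choice $\eta_\x=1/(8\kappa\ell)=1/(4L_\Phi)$ gives coefficients $\eta_\x/4$ and $3\eta_\x/4$. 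This is only a constant-factor slip, not the main problem.)
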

Theorem~\ref{Theorem:nsc-GDmax-complexity-bound} shows that, if we alternate between one-step gradient descent over $\x$ and $O(\kappa\log(\ell D/\epsilon))$ gradient ascent steps over $\y$ with a pair of proper learning rates $(\eta_\x, \eta_\y)$, we find at least one stationary point of $\Phi$ within $O(\kappa^2\epsilon^{-2}\log(\ell/\epsilon))$ gradient evaluations. Then we present similar guarantees when only stochastic gradients are available in the following theorem. 
\begin{theorem}\label{Theorem:nsc-SGDmax-complexity-bound}
Under Assumption~\ref{Assumption:stoc-oracle} and~\ref{Assumption:nsc} and letting the step size $\eta_\x > 0$ and the tolerance for the max-oracle $\zeta > 0$ be the same in Theorem~\ref{Theorem:nsc-GDmax-complexity-bound} with the batch size $M = \max\{1, 12\kappa\sigma^2\epsilon^{-2}\}$, 
the number of iterations required by Algorithm \ref{Algorithm:SGDmax} to return an $\epsilon$-stationary point is bounded by $O(\kappa\ell\Delta_\Phi\epsilon^{-2})$. Furthermore, the $\zeta$-accurate max-oracle can be realized by mini-batch stochastic gradient ascent (SGA) with the step size $\eta_\y = 1/\ell$ and the mini-batch size $M = \max\{1, 2\sigma^2\kappa\ell^{-1}\zeta^{-1}\}$ for $O(\kappa\log(\ell D^2/\zeta)\max\{1, 2\sigma^2\kappa\ell^{-1}\zeta^{-1}\})$ gradient evaluations, which gives the total gradient complexity of the algorithm: 
\begin{equation*}
O\left(\frac{\kappa^2\ell\Delta_\Phi}{\epsilon^2}\log\left(\frac{\ell D}{\epsilon}\right)\max\left\{1, \ \frac{\kappa\sigma^2}{\epsilon^2}\right\}\right). 
\end{equation*}
\end{theorem}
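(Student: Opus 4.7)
The plan is to treat the proof as a stochastic analog of Theorem~\ref{Theorem:nsc-GDmax-complexity-bound}, decomposing the argument into an outer-loop analysis (SGD on $\Phi$ with biased and noisy gradient estimates) and an inner-loop analysis (mini-batch SGA on the strongly concave slice $f(\x_t, \cdot)$), then combining the two budgets. The target is to show that, with $\eta_\x = 1/(8\kappa\ell)$, outer minibatch $M = \max\{1, 12\kappa\sigma^2\epsilon^{-2}\}$, and a $\zeta$-accurate max-oracle with $\zeta = \epsilon^2/(6\ell)$, the outer loop finds an $\epsilon$-stationary point of $\Phi$ in $O(\kappa\ell\Delta_\Phi\epsilon^{-2})$ iterations, and then to cost out the max-oracle using mini-batch SGA.

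First I would apply the standard descent inequality for the $2\kappa\ell$-smooth function $\Phi$ (Lemma~\ref{Lemma:nsc-structure}) at the update $\x_t = \x_{t-1} - \eta_\x G_t$, where $G_t$ is the mini-batch stochastic gradient at the oracle output $\y_t$. Splitting the error $G_t - \grad\Phi(\x_{t-1})$ into the sampling noise (variance $\sigma^2/M$ by Lemma~\ref{Lemma:SG-unbiased}) and the bias $\gradx f(\x_{t-1},\y_t) - \gradx f(\x_{t-1},\y^\star(\x_{t-1}))$ (controlled by $\ell\|\y_t - \y^\star(\x_{t-1})\|$ from $\ell$-smoothness of $f$) gives, after a Young's inequality,
\begin{equation*}
\EE[\Phi(\x_t)] \le \EE[\Phi(\x_{t-1})] - \tfrac{\eta_\x}{4}\EE\|\grad\Phi(\x_{t-1})\|^2 + c_1\eta_\x\ell^2\EE\|\y_t - \y^\star(\x_{t-1})\|^2 + c_2\eta_\x^2\kappa\ell\,\tfrac{\sigma^2}{M},
\end{equation*}
for constants $c_1,c_2$. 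Using strong concavity to translate $\zeta$-accuracy into $\EE\|\y_t - \y^\star(\x_{t-1})\|^2 \le 2\zeta/\mu$, the choice $\zeta = \Theta(\epsilon^2/\ell)$ and $M = \Theta(\max\{1,\kappa\sigma^2/\epsilon^2\})$ will make the two error terms $O(\epsilon^2)$. Summing over $t$ and telescoping with $\Delta_\Phi$ yields the outer-iteration bound $O(\kappa\ell\Delta_\Phi\epsilon^{-2})$.

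Next I would establish the inner-loop cost. Since $f(\x_{t-1},\cdot)$ is $\mu$-strongly concave and $\ell$-smooth and $\YCal$ is convex, mini-batch projected SGA with $\eta_\y = 1/\ell$ and batch size $M'$ satisfies the standard one-step recursion
\begin{equation*}
\EE\|\y_{k+1} - \y^\star\|^2 \le \bigl(1 - \tfrac{1}{\kappa}\bigr)\EE\|\y_k - \y^\star\|^2 + \tfrac{\sigma^2}{\ell^2 M'}.
\end{equation*}
Unrolling shows $\EE\|\y_K - \y^\star\|^2 \le (1-1/\kappa)^K D^2 + \kappa\sigma^2/(\ell^2 M')$. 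To translate this into $\zeta$-accuracy in function value (via $\ell$-smoothness of $f(\x_{t-1},\cdot)$), the two terms should each be $O(\zeta/\ell)$, which forces $K = O(\kappa\log(\ell D^2/\zeta))$ and $M' = \Theta(\max\{1,\sigma^2\kappa/(\ell\zeta)\})$; substituting $\zeta = \epsilon^2/(6\ell)$ recovers exactly the quoted inner parameters, with per-outer-iteration inner cost $O(\kappa\log(\ell D/\epsilon)\max\{1,\kappa\sigma^2/\epsilon^2\})$.

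Finally, I would multiply outer iterations by the per-outer-iteration gradient cost, which is the outer batch $O(\max\{1,\kappa\sigma^2/\epsilon^2\})$ plus the inner cost above; the inner dominates and gives the stated total $O\bigl((\kappa^2\ell\Delta_\Phi/\epsilon^2)\log(\ell D/\epsilon)\max\{1,\kappa\sigma^2/\epsilon^2\}\bigr)$. The main obstacle I expect is the careful bookkeeping in the outer descent inequality: the stochastic outer gradient at an inexact $\y_t$ has \emph{both} a bias (from oracle inexactness) and variance (from the sampling), and these interact with the $\kappa\ell$-smoothness constant, so a naive split loses a factor of $\kappa$. The fix is to separate bias and variance via Young's inequality with a $\kappa$-dependent splitting parameter, absorb the quadratic gradient term into the descent by using $\eta_\x \le 1/(8\kappa\ell)$, and then choose $\zeta$ and $M$ so that each error contribution is at most a constant fraction of $\epsilon^2$; the rest is a straightforward telescoping argument mirroring the deterministic proof of Theorem~\ref{Theorem:nsc-GDmax-complexity-bound}.
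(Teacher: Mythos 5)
Your overall architecture is exactly the paper's: the outer loop is analyzed with the descent inequality for the $2\kappa\ell$-smooth $\Phi$ under a biased, noisy gradient (the paper reuses the second inequality of Lemma~\ref{Lemma:nsc-key-descent}, which is precisely your bias-plus-variance split with $\eta_\x = 1/(8\kappa\ell)$), the inner loop is costed by the linear-contraction-plus-variance-floor recursion for mini-batch SGA on a $\mu$-strongly concave, $\ell$-smooth slice (Lemma~\ref{Lemma:nsc-SGDmax-neighborhood}), and the two budgets are multiplied. However, there is one step in your outer-loop argument that does not go through as written. You convert the function-value guarantee $\EE[f(\x_{t-1},\y^\star(\x_{t-1})) - f(\x_{t-1},\y_{t-1})] \le \zeta$ into a distance bound via strong concavity, obtaining $\EE\|\y_{t-1} - \y^\star(\x_{t-1})\|^2 \le 2\zeta/\mu$, and then claim that $\zeta = \Theta(\epsilon^2/\ell)$ makes the bias term $O(\epsilon^2)$. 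It does not: the bias contribution is $c_1\eta_\x\ell^2\cdot(2\zeta/\mu) = \Theta(\eta_\x\kappa\ell\zeta) = \Theta(\eta_\x\kappa\epsilon^2)$, which exceeds the descent term $\tfrac{\eta_\x}{4}\epsilon^2$ by a factor of $\kappa$. Forcing it down by shrinking $\zeta$ to $\Theta(\epsilon^2/(\kappa\ell))$ would inflate the inner batch size $M' = \Theta(\max\{1,\sigma^2\kappa/(\ell\zeta)\})$ by a factor of $\kappa$ and hence the total complexity, missing the stated bound.

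The fix is already contained in your own inner-loop analysis: unrolling the recursion $\EE\|\y_{k+1}-\y^\star\|^2 \le (1-1/\kappa)\EE\|\y_k-\y^\star\|^2 + \sigma^2/(\ell^2 M')$ with $K = O(\kappa\log(\ell D^2/\zeta))$ and $M' = \Theta(\max\{1,\sigma^2\kappa/(\ell\zeta)\})$ yields the \emph{distance} guarantee $\EE\|\y_{t-1} - \y^\star(\x_{t-1})\|^2 \le O(\zeta/\ell)$ directly, at no extra iteration cost. This is exactly what the paper's Lemma~\ref{Lemma:nsc-SGDmax-neighborhood} records as an additional output property of the max-oracle. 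Feeding that bound (rather than the $2\zeta/\mu$ bound) into the outer descent inequality gives a bias term $\ell^2\cdot O(\zeta/\ell) = O(\ell\zeta) = O(\epsilon^2)$ with $\zeta = \epsilon^2/(6\ell)$, after which your telescoping and the budget multiplication recover the stated complexity. So the gap is not a missing idea but a lossy routing of the inner-loop guarantee through function values and back; restate the max-oracle guarantee in terms of $\EE\|\y - \y^\star\|^2$ and the proof matches the paper's.
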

The sample size $M = O(\kappa\sigma^2\epsilon^{-2})$ guarantees that the variance is less than $\epsilon^2/\kappa$ so that the average stochastic gradients over the batch are sufficiently close to the true gradients $\gradx f$ and $\grady f$. 

We now proceed to the theoretical guarantee for GDmax and SGDmax algorithms for nonconvex-concave problems. The target is to find an $\epsilon$-stationary point of a weakly convex function (Definition~\ref{def:nc-stationary}) given only gradient (or stochastic gradient) access to $f$. Denote $\widehat{\Delta}_\Phi = \Phi_{1/2\ell}(\x_0) - \min_{\x \in \br^m} \Phi_{1/2\ell}(\x)$, we present the gradient complexity for GDmax and SGDmax in the following two theorems.
\begin{theorem}\label{Theorem:nc-GDmax-complexity-bound}
Under Assumption~\ref{Assumption:nc} and letting the step size $\eta_\x > 0$ and the tolerance for the max-oracle $\zeta > 0$ be $\eta_\x = \epsilon^2/[\ell L^2]$ and $\zeta = \epsilon^2/[24\ell]$, the number of iterations required by Algorithm \ref{Algorithm:GDmax} to return an $\epsilon$-stationary point is bounded by $O(\ell L^2\widehat{\Delta}_\Phi\epsilon^{-4})$. Furthermore, the $\zeta$-accurate max-oracle is realized by GA with the step size $\eta_\y = 1/2\ell$ for $O(\ell D^2/\zeta)$ iterations, which gives the total gradient complexity of the algorithm: 
\begin{equation*}
O\left(\frac{\ell^3 L^2 D^2 \widehat{\Delta}_\Phi}{\epsilon^6}\right). 
\end{equation*}
\end{theorem}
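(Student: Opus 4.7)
The plan is to adapt the Moreau-envelope descent framework already developed for two-time-scale GDA (Lemma~\ref{Lemma:nc-key-descent}) to the simpler GDmax case, where the inner error $\Delta_{t-1} = \Phi(\x_{t-1}) - f(\x_{t-1}, \y_{t-1})$ is directly controlled by the $\zeta$-accurate max-oracle rather than by a running amortized argument. Concretely, I would first observe that the outer descent step $\x_t = \x_{t-1} - \eta_\x \gradx f(\x_{t-1}, \y_{t-1})$ is structurally identical to that of Algorithm~\ref{Algorithm:GDA}, so the proof of Lemma~\ref{Lemma:nc-key-descent} (which only uses $L$-Lipschitzness of $f(\cdot, \y)$, $\ell$-smoothness, and the definition of $\prox_{\Phi/2\ell}$) carries over verbatim to yield
\begin{equation*}
\Phi_{1/2\ell}(\x_t) \ \leq \ \Phi_{1/2\ell}(\x_{t-1}) + 2\eta_\x \ell \Delta_{t-1} - \tfrac{\eta_\x}{4}\|\grad\Phi_{1/2\ell}(\x_{t-1})\|^2 + \eta_\x^2 \ell L^2.
\end{equation*}
Here the max-oracle guarantees $\Delta_{t-1} \leq \zeta$ deterministically at every step, which is the crucial simplification over the two-time-scale analysis.

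Next I would telescope this inequality from $t=1$ to $t=T+1$, divide by $\eta_\x (T+1)/4$, and use the definition of $\widehat{\Delta}_\Phi$ to obtain
\begin{equation*}
\frac{1}{T+1}\sum_{t=0}^T \|\grad\Phi_{1/2\ell}(\x_t)\|^2 \ \leq \ \frac{4\widehat{\Delta}_\Phi}{\eta_\x(T+1)} + 8\ell\zeta + 4\eta_\x\ell L^2.
\end{equation*}
With the stated choice $\eta_\x = \Theta(\epsilon^2/(\ell L^2))$ and $\zeta = \Theta(\epsilon^2/\ell)$, the last two terms are each $O(\epsilon^2)$, so setting the first term to $O(\epsilon^2)$ yields $T+1 = O(\ell L^2 \widehat{\Delta}_\Phi/\epsilon^4)$, which bounds the number of outer iterations required to visit an $\epsilon$-stationary point.

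The remaining step is to account for the cost of each call to the $\zeta$-accurate max-oracle. Since $f(\x_t, \cdot)$ is $\ell$-smooth and concave and $\YCal$ is convex with diameter $D$, projected gradient ascent with stepsize $\eta_\y = 1/(2\ell)$ satisfies the standard sublinear rate $\max_{\y \in \YCal} f(\x_t, \y) - f(\x_t, \y_t^{(K)}) = O(\ell D^2 / K)$; I would cite (or briefly derive via the usual $(1/(2\eta_\y))$-telescoping identity on $\|\y_t^{(k)} - \y^\star(\x_t)\|^2$) this fact to conclude that $K = O(\ell D^2 / \zeta) = O(\ell^2 D^2 / \epsilon^2)$ inner gradient evaluations suffice. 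Multiplying the outer iteration count by the inner cost yields the total gradient complexity $O(\ell^3 L^2 D^2 \widehat{\Delta}_\Phi / \epsilon^6)$.

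The main obstacle is purely bookkeeping: tuning constants in $\eta_\x$ and $\zeta$ so that the two error contributions $8\ell\zeta$ and $4\eta_\x\ell L^2$ each fit inside $\epsilon^2$ with enough slack, and verifying that the concave inner maximization genuinely achieves the claimed $O(\ell D^2/K)$ rate uniformly over the outer iterates (which follows because the rate constant depends only on $\ell$ and $D$, not on $\x_t$). No new technical machinery beyond Lemma~\ref{Lemma:nc-moreau-envelope-complete} and Lemma~\ref{Lemma:nc-key-descent} is required.
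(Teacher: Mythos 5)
Your proposal is correct and follows essentially the same route as the paper: the paper likewise observes that the first descent inequality of Lemma~\ref{Lemma:nc-key-descent} applies verbatim to GDmax, substitutes the oracle guarantee $\Delta_{t-1} \leq \zeta$, telescopes to get exactly the bound $\frac{1}{T+1}\sum_{t=0}^T \|\grad\Phi_{1/2\ell}(\x_t)\|^2 \leq \frac{4\widehat{\Delta}_\Phi}{\eta_\x(T+1)} + 8\ell\zeta + 4\eta_\x\ell L^2$, and then multiplies the resulting outer iteration count by the standard $O(\ell D^2/\zeta)$ cost of gradient ascent on a smooth concave function (its Lemma~\ref{Lemma:nc-GDmax-neighborhood}). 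Your remark that the remaining work is constant bookkeeping is apt, since the literal stepsize $\eta_\x = \epsilon^2/[\ell L^2]$ stated in the theorem makes the term $4\eta_\x\ell L^2$ equal to $4\epsilon^2$ rather than the $\epsilon^2/3$ appearing in the paper's own proof, so the $\Theta(\cdot)$ phrasing you use is the safe one.
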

\begin{theorem}\label{Theorem:nc-SGDmax-complexity-bound}
Under Assumptions~\ref{Assumption:stoc-oracle} and~\ref{Assumption:nc} and letting the tolerance for the max-oracle $\zeta > 0$ be chosen as the same as in Theorem~\ref{Theorem:nc-GDmax-complexity-bound} with a step size $\eta_\x > 0$ and a batch size $M > 0$ given by $\eta_\x = \epsilon^2/[\ell(L^2 + \sigma^2)]$ and $M = 1$, the number of iterations required by Algorithm \ref{Algorithm:SGDmax} to return an $\epsilon$-stationary point is bounded by $O(\ell(L^2 + \sigma^2)\widehat{\Delta}_\Phi\epsilon^{-4})$. Furthermore, the $\zeta$-accurate max-oracle is realized by SGA with the step size $\eta_\y = \min\{1/2\ell, \epsilon^2/[\ell\sigma^2]\}$ and a batch size $M = 1$ for $O(\ell D^2\zeta^{-1}\max\{1, \sigma^2\ell^{-1}\zeta^{-1}\})$ iterations, which gives the following total gradient complexity of the algorithm: 
\begin{equation*}
O\left(\frac{\ell^3(L^2+\sigma^2)D^2 \widehat{\Delta}_\Phi}{\epsilon^6}\max\left\{1, \ \frac{\sigma^2}{\epsilon^2}\right\}\right). 
\end{equation*}
\end{theorem}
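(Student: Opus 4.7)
\section*{Proof plan for Theorem~\ref{Theorem:nc-SGDmax-complexity-bound}}

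The plan is to mirror the deterministic proof of Theorem~\ref{Theorem:nc-GDmax-complexity-bound}, but with two modifications: (i) the outer descent inequality on the Moreau envelope $\Phi_{1/2\ell}$ must absorb a variance term coming from stochastic $\x$-updates, and (ii) the inner $\zeta$-accurate max-oracle must be realized by projected stochastic gradient ascent on a concave (not strongly concave) smooth function, which is where the extra $\max\{1,\sigma^2/(\ell\zeta)\}$ factor will enter.

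First, for the outer loop I would prove a stochastic analog of the descent estimate in Lemma~\ref{Lemma:nc-key-descent}. Writing $\hat{\x}_t = \prox_{\Phi/2\ell}(\x_t)$ and noting that the max-oracle returns $\y_t$ with $\Phi(\x_t) - f(\x_t, \y_t) \leq \zeta$, I would expand $\|\hat{\x}_t - \x_{t+1}\|^2$, take conditional expectations (using Lemma~\ref{Lemma:SG-unbiased} to bound the second moment of the averaged stochastic gradient by $L^2 + \sigma^2/M$), and use the smoothness/concavity bound $\langle \hat{\x}_t - \x_t, \gradx f(\x_t, \y_t)\rangle \leq f(\hat{\x}_t, \y_t) - f(\x_t, \y_t) + (\ell/2)\|\hat{\x}_t - \x_t\|^2$ together with $f(\hat{\x}_t, \y_t) \leq \Phi(\hat{\x}_t)$ and the prox inequality $\Phi(\hat{\x}_t) - \Phi(\x_t) \leq -\ell\|\hat{\x}_t - \x_t\|^2 + \zeta$. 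Combined with $\|\grad\Phi_{1/2\ell}(\x_t)\| = 2\ell\|\hat{\x}_t - \x_t\|$, this yields
\begin{equation*}
\EE[\Phi_{1/2\ell}(\x_{t+1})] \leq \EE[\Phi_{1/2\ell}(\x_t)] - \frac{\eta_\x}{4}\EE\|\grad\Phi_{1/2\ell}(\x_t)\|^2 + 2\eta_\x\ell\zeta + \eta_\x^2\ell(L^2 + \sigma^2),
\end{equation*}
with $M=1$. Telescoping and plugging $\zeta = \epsilon^2/(24\ell)$ and $\eta_\x = \epsilon^2/[\ell(L^2+\sigma^2)]$ (up to a constant), the last two error terms are each $O(\epsilon^2)$, so $T+1 = O(\ell(L^2+\sigma^2)\widehat{\Delta}_\Phi\epsilon^{-4})$ outer iterations force the averaged squared gradient below $\epsilon^2$.

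Second, for the inner loop I would analyze projected SGA on the concave problem $\max_{\y \in \YCal} f(\x_t, \y)$ where $f(\x_t, \cdot)$ is concave and $\ell$-smooth with $\diam(\YCal) = D$. Using the standard one-step inequality together with Lemma~\ref{Lemma:SG-unbiased}, one obtains the averaging bound
\begin{equation*}
\EE[f(\x_t, \y^\star(\x_t)) - f(\x_t, \bar{\y}_K)] \leq \frac{D^2}{2\eta_\y K} + \frac{\eta_\y\sigma^2}{2},
\end{equation*}
valid whenever $\eta_\y \leq 1/(2\ell)$. Setting $\eta_\y = \min\{1/(2\ell), \zeta/\sigma^2\}$ makes the variance term at most $\zeta/2$, after which $K = O(\ell D^2/\zeta)$ iterations (in the low-noise regime) or $K = O(D^2\sigma^2/\zeta^2)$ (in the high-noise regime) achieve $\zeta$-accuracy. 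Equivalently, the inner oracle costs $O(\ell D^2\zeta^{-1}\max\{1, \sigma^2\ell^{-1}\zeta^{-1}\})$ stochastic gradients. Substituting $\zeta = \epsilon^2/(24\ell)$ turns this into $O(\ell^2 D^2\epsilon^{-2}\max\{1,\sigma^2\epsilon^{-2}\})$, and multiplying by the outer iteration count yields the claimed total $O(\ell^3(L^2+\sigma^2)D^2\widehat{\Delta}_\Phi\epsilon^{-6}\max\{1,\sigma^2\epsilon^{-2}\})$.

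The main obstacle is the inner-loop analysis, because $f(\x_t, \cdot)$ is only concave, not strongly concave, so the $O(\kappa\log(1/\zeta))$ linear convergence used in Theorem~\ref{Theorem:nsc-SGDmax-complexity-bound} is unavailable. Instead one must carefully balance the step size $\eta_\y$ against the noise level $\sigma^2$, split into the two regimes according to whether $\zeta \gtrless \sigma^2/\ell$, and invoke Jensen's inequality on the averaged iterate. A minor secondary subtlety is verifying that the $O(\epsilon)$-stationarity guarantee for $\Phi_{1/2\ell}$ is indeed attained in high probability (or in expectation, by random indexing) after $T+1$ outer iterations, which follows from the uniform random return of $\hat{\x}$ in Algorithm~\ref{Algorithm:SGDmax} exactly as in Theorem~\ref{Theorem:nsc-SGDA-app}.
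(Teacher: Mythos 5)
Your proposal is correct and follows essentially the same route as the paper: the same stochastic Moreau-envelope descent inequality with error $2\eta_\x\ell\zeta + \eta_\x^2\ell(L^2+\sigma^2)$ per step (the paper's Lemma~\ref{Lemma:nc-key-descent} applied with $\Delta_{t-1}\le\zeta$), the same averaged-iterate SGA analysis of the concave inner problem with $\eta_\y=\min\{1/2\ell,\Theta(\zeta/\sigma^2)\}$ yielding the $O(\ell D^2\zeta^{-1}\max\{1,\sigma^2\ell^{-1}\zeta^{-1}\})$ oracle cost, and the same multiplication of inner and outer counts. The only cosmetic slip is attaching the $+\zeta$ to the prox inequality rather than to the oracle gap $\Delta_{t-1}=\Phi(\x_t)-f(\x_t,\y_t)\le\zeta$, which does not affect the resulting one-step bound.
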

When $\sigma^2 \lesssim \varepsilon^2$, the stochastic gradients are sufficiently close to the true gradients $\gradx f$ and $\grady f$ and the gradient complexity of SGDmax matches that of GDmax.
\begin{algorithm}[!t]
\caption{Gradient Descent with Max-oracle (GDmax)}\label{Algorithm:GDmax}
\begin{algorithmic}
\renewcommand{\algorithmicrequire}{\textbf{Input: }}
\renewcommand{\algorithmicensure}{\textbf{Output: }}
\REQUIRE initial point $\x_0$, learning rate $\eta_\x$ and max-oracle accuracy $\zeta$.
\FOR{$t = 1, 2, \ldots$}
\STATE find $\y_{t-1} \in \YCal$ so that $f(\x_{t-1}, \y_{t-1}) \geq \max_{\y \in \YCal} f(\x_{t-1}, \y) - \zeta$.
\STATE $\x_t \leftarrow \x_{t-1} - \eta_\x \grad_\x f(\x_{t-1}, \y_{t-1})$.
\ENDFOR
\end{algorithmic}
\end{algorithm}
\begin{algorithm}[!t]
\caption{Stochastic Gradient Descent with Max-oracle (SGDmax)}\label{Algorithm:SGDmax}
\begin{algorithmic}
\renewcommand{\algorithmicrequire}{\textbf{Input: }}
\renewcommand{\algorithmicensure}{\textbf{Output: }}
\REQUIRE initial point $\x_0$, learning rate $\eta_\x$ and max-oracle accuracy $\zeta$.
\FOR{$t = 1, 2, \ldots$}
\STATE Draw a collection of i.i.d. data samples $\{\xi_i\}_{i=1}^M$. 
\STATE find $\y_{t-1} \in \YCal$ so that $\E[f(\x_{t-1}, \y_{t-1}) \mid \x_{t-1}] \geq \max_{\y \in \YCal} f(\x_{t-1}, \y) - \zeta$.
\STATE $\x_t \leftarrow \x_{t-1} - \eta_\x\left(\frac{1}{M} \sum_{i=1}^M G_\x\left(\x_{t-1}, \y_{t-1}, \xi_i\right)\right)$. 
\ENDFOR
\end{algorithmic}
\end{algorithm}
\subsection{Proof of Theorem~\ref{Theorem:nsc-GDmax-complexity-bound}}
We present the gradient complexity bound of the gradient-ascent-based $\zeta$-accurate max-oracle in the following lemma. 
\begin{lemma}\label{Lemma:nsc-GDmax-neighborhood}
Let $\zeta > 0$ be given, the $\zeta$-accurate max-oracle can be realized by running gradient ascent with a step size $\eta_\y = 1/\ell$ for 
\begin{equation*}
O\left(\kappa \log\left(\frac{\ell D^2}{\zeta}\right)\right)
\end{equation*}
gradient evaluations. In addition, the output $\y$ satisfies $\|\y^\star - \y\|^2 \leq \zeta/\ell$, where $\y^\star$ is the exact maximizer. 
\end{lemma}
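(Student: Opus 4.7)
The plan is to fix the outer iterate $\x$ and reduce the inner step to maximizing $g(\y) := f(\x,\y)$ over the convex bounded set $\YCal$. By Assumption~\ref{Assumption:nsc}, $g$ is $\ell$-smooth and $\mu$-strongly concave, so its maximizer $\y^\star = \argmax_{\y\in\YCal} g(\y)$ is unique and satisfies the fixed-point identity $\y^\star = \proj_\YCal(\y^\star + (1/\ell)\grady g(\y^\star))$. The textbook linear-convergence analysis of projected gradient ascent with constant stepsize $\eta_\y = 1/\ell$ applied to this subproblem will then yield the iteration count, the distance bound, and the functional $\zeta$-accuracy in one shot.

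The main quantitative step is to derive the one-step contraction
\begin{equation*}
\|\y_{k+1}-\y^\star\|^2 \ \leq \ \left(1-\frac{1}{\kappa}\right)\|\y_k-\y^\star\|^2.
\end{equation*}
I would expand $\|\y_{k+1}-\y^\star\|^2$ using the nonexpansiveness of $\proj_\YCal$ and the fixed-point characterization of $\y^\star$, then combine the co-coercivity of $\grad g$ (from convex $\ell$-smoothness of $-g$) with the $\mu$-strong monotonicity of $-\grad g$; this is a shorter variant of the computation already carried out in the proof of Lemma~\ref{Lemma:nsc-key-neighborhood}, specialized to a stationary outer iterate. Iterating the contraction with $\|\y_0-\y^\star\|\leq D$ gives
\begin{equation*}
\|\y_K-\y^\star\|^2 \ \leq \ (1-1/\kappa)^K D^2 \ \leq \ e^{-K/\kappa} D^2,
\end{equation*}
and requiring the right-hand side to be at most $\zeta/\ell$ forces $K \geq \kappa\log(\ell D^2/\zeta)$. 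This simultaneously certifies the distance claim $\|\y_K-\y^\star\|^2 \leq \zeta/\ell$ stated in the lemma and the advertised complexity of $O(\kappa\log(\ell D^2/\zeta))$ gradient evaluations.

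To certify that $\y_K$ realizes a $\zeta$-accurate max-oracle, i.e., $g(\y^\star)-g(\y_K)\leq \zeta$, I would invoke the parallel linear convergence in function value, $g(\y^\star)-g(\y_{k+1}) \leq (1-1/\kappa)(g(\y^\star)-g(\y_k))$, obtained by combining the standard projected-GA descent lemma with the PL-type inequality that follows from $\mu$-strong concavity. The only mild technical obstacle is cleanly bounding the initial gap $g(\y^\star)-g(\y_0) = O(\ell D^2)$ when $\y^\star$ lies on the boundary of $\YCal$ and $\grady g(\y^\star)$ need not vanish; this is handled by replacing the raw gradient with the gradient-mapping surrogate (which does vanish at $\y^\star$ by first-order optimality) inside the $\ell$-smoothness expansion. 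Because this quantity only enters through a logarithm, the same $K = O(\kappa\log(\ell D^2/\zeta))$ simultaneously drives the functional gap below $\zeta$, completing the proof.
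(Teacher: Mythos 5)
Your proposal is correct and follows essentially the same route as the paper, which simply invokes the two standard linear-convergence facts for projected gradient ascent on an $\ell$-smooth, $\mu$-strongly concave function --- the geometric decay $(1-1/\kappa)^{N_t}$ of both the squared distance $\|\y_t-\y^\star\|^2$ (from $D^2$) and the function gap (from $\ell D^2/2$) --- and reads off the iteration count from a logarithm. Your extra care with the gradient mapping at a boundary maximizer is a reasonable way to justify the function-value rate that the paper states without proof (the paper's implicit route is the simpler one-step bound $f(\x,\y^\star)-f(\x,\y_{k+1}) \leq (\ell/2)\|\y_k-\y^\star\|^2$ combined with the distance contraction), but the substance is the same.
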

\begin{proof}
Since $f(\x_t, \cdot)$ is $\mu$-strongly concave, we have
\begin{equation*}
f(\x_t, \y^\star(\x_t)) - f(\x_t, \y_t) \leq \left(1 - \frac{1}{\kappa}\right)^{N_t}\tfrac{\ell D^2}{2},  \qquad \|\y^\star(\x_t) - \y_t\|^2 \leq \left(1 - \tfrac{1}{\kappa}\right)^{N_t}D^2. 
\end{equation*}
The first inequality implies that the number of iterations required is $O(\kappa \log(\ell D^2/\zeta))$ which is also the number of gradient evaluations. This together with the second inequality yields the other results. 
\end{proof}
\paragraph{Proof of Theorem~\ref{Theorem:nsc-GDmax-complexity-bound}:} It is easy to find that the first descent inequality in Lemma~\ref{Lemma:nsc-key-descent} is applicable to GDmax: 
\begin{equation}\label{nsc-GDmax-first}
\Phi(\x_t) \leq \Phi(\x_{t-1}) - \left(\tfrac{\eta_{\x}}{2} - 2\eta_{\x}^2\kappa\ell\right) \|\grad \Phi(\x_{t-1})\|^2 + \left(\tfrac{\eta_{\x}}{2} + 2\eta_{\x}^2\kappa\ell\right)\|\grad\Phi(\x_{t-1}) - \gradx f(\x_{t-1}, \y_{t-1})\|^2.
\end{equation}
Since $\grad \Phi(\x_{t-1}) = \gradx f\left(\x_{t-1}, \y^\star(\x_{t-1})\right)$, we have
\begin{equation}\label{nsc-GDmax-second}
\|\grad\Phi(\x_{t-1}) - \gradx f(\x_{t-1}, \y_{t-1})\|^2 \leq \ell^2\|\y^\star(\x_{t-1}) - \y_{t-1}\|^2 \leq \ell\zeta.  
\end{equation}
Since $\eta_\x = 1/8\kappa\ell$, we have
\begin{equation}\label{nsc-GDmax-stepsize}
\tfrac{\eta_\x}{4} \leq \tfrac{\eta_\x}{2} - 2\eta_\x^2\kappa\ell \leq \tfrac{\eta_\x}{2} + 2\eta_\x^2\kappa\ell \leq \tfrac{3\eta_\x}{4}.  
\end{equation}
Plugging~\eqref{nsc-GDmax-second} and~\eqref{nsc-GDmax-stepsize} into~\eqref{nsc-GDmax-first} yields that
\begin{equation}\label{nsc-GDmax-third}
\Phi(\x_t) \leq \Phi(\x_{t-1}) - \tfrac{\eta_\x}{4} \|\grad \Phi(\x_{t-1})\|^2 + \tfrac{3\eta_\x\ell\zeta}{4}. 
\end{equation}
Summing up~\eqref{nsc-GDmax-third} over $t = 1, 2, \ldots, T+1$ and rearranging the terms yields that
\begin{equation*}
\tfrac{1}{T+1}\left(\sum_{t=0}^T \|\grad \Phi(\x_t)\|^2\right) \leq \tfrac{4(\Phi(\x_0) - \Phi(\x_{T+1}))}{\eta_\x(T+1)} + 3\ell\zeta. 
\end{equation*}
By the definition of $\eta_\x$ and $\Delta_\Phi$, we conclude that 
\begin{equation*}
\tfrac{1}{T+1}\left(\sum_{t=0}^T \|\grad \Phi(\x_t)\|^2\right) \leq \tfrac{32\kappa\ell\Delta_\Phi}{T+1} + 3\ell\zeta. 
\end{equation*}
This implies that the number of iterations required by Algorithm \ref{Algorithm:GDmax} to return an $\epsilon$-stationary point is bounded by 
\begin{equation*}
O\left(\frac{\kappa\ell\Delta_\Phi}{\epsilon^2}\right). 
\end{equation*}
Combining Lemma~\ref{Lemma:nsc-GDmax-neighborhood} gives the total gradient complexity of Algorithm \ref{Algorithm:GDmax}:
\begin{equation*}
O\left(\frac{\kappa^2\ell\Delta_\Phi}{\epsilon^2}\log\left(\frac{\ell D}{\epsilon}\right)\right). 
\end{equation*}
This completes the proof. 

\subsection{Proof of Theorem~\ref{Theorem:nsc-SGDmax-complexity-bound}}
We present the gradient complexity bound of the stochastic-gradient-ascent-based $\zeta$-accurate max-oracle in terms of stochastic gradient in the following lemma. 
\begin{lemma}\label{Lemma:nsc-SGDmax-neighborhood}
Let $\zeta > 0$ be given, the $\zeta$-accurate max-oracle can be realized by running stochastic gradient ascent with a step size $\eta_\y = 1/\ell$ and a batch size $M = \max\{1, 2\sigma^2\kappa/\ell\zeta\}$ for 
\begin{equation*}
O\left(\kappa\log\left(\frac{\ell D^2}{\zeta}\right)\max\left\{1, \ \frac{2\sigma^2\kappa}{\ell\zeta}\right\}\right)  
\end{equation*}
stochastic gradient evaluations. In addition, the output $\y$ satisfies $\|\y^\star - \y\|^2 \leq \zeta/\ell$ where $\y^\star$ is the exact maximizer. 
\end{lemma}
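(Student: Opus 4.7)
The plan is to adapt the deterministic proof of Lemma~\ref{Lemma:nsc-GDmax-neighborhood} by allowing a variance floor proportional to $\sigma^2/M$ in the one-step contraction, and then choose $N$ and $M$ so that the geometric bias and the variance floor each fall below $\zeta/(2\ell)$.

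Fix $\x$ and write $\y^\star = \y^\star(\x)$, $g_k = \grady f(\x,\y_k)$, $g^\star = \grady f(\x,\y^\star)$, and $\hat g_k = \frac{1}{M}\sum_{i=1}^M G_\y(\x,\y_k,\xi_i)$. The first-order optimality condition for the constrained maximizer gives $\y^\star = \proj_\YCal(\y^\star + \eta_\y g^\star)$, so the nonexpansiveness of $\proj_\YCal$ yields
\[
\|\y_{k+1}-\y^\star\|^2 \leq \|(\y_k-\y^\star) + \eta_\y(\hat g_k - g^\star)\|^2.
\]
Expanding and using $\EE[\hat g_k - g_k \mid \y_k]=0$ together with $\EE[\|\hat g_k-g_k\|^2\mid\y_k]\leq\sigma^2/M$ from Lemma~\ref{Lemma:SG-unbiased} produces
\[
\EE[\|\y_{k+1}-\y^\star\|^2\mid\y_k] \leq \|(\y_k-\y^\star)+\eta_\y(g_k-g^\star)\|^2 + \eta_\y^2\sigma^2/M.
\]
The first term is exactly one \emph{unconstrained} gradient-descent step at $\y_k$ on the $\mu$-strongly convex and $\ell$-smooth tilted function $\tilde F(\y) := -f(\x,\y) + \langle g^\star,\y\rangle$, whose minimizer is $\y^\star$ and which satisfies $\nabla\tilde F(\y^\star)=0$. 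The textbook contraction for GD with $\eta_\y=1/\ell$ then gives a one-step rate of $1-1/\kappa$, yielding the key recursion
\[
\EE[\|\y_{k+1}-\y^\star\|^2\mid\y_k] \leq (1-1/\kappa)\|\y_k-\y^\star\|^2 + \sigma^2/(\ell^2 M).
\]

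Taking total expectations and iterating from $\|\y_0-\y^\star\|\leq D$ produces $\EE[\|\y_N-\y^\star\|^2]\leq (1-1/\kappa)^N D^2 + \kappa\sigma^2/(\ell^2 M)$. Choosing $M=\max\{1,\,2\kappa\sigma^2/(\ell\zeta)\}$ forces the variance term below $\zeta/(2\ell)$, and $N=\lceil\kappa\log(2\ell D^2/\zeta)\rceil=O(\kappa\log(\ell D^2/\zeta))$ forces the geometric term below $\zeta/(2\ell)$; summing gives $\EE[\|\y_N-\y^\star\|^2]\leq\zeta/\ell$, the stated output bound. The $\zeta$-accurate max-oracle condition used in Algorithm~\ref{Algorithm:SGDmax} then follows immediately: by $\ell$-smoothness of $f(\x,\cdot)$ together with the optimality inequality $-g^{\star\top}(\y_N-\y^\star)\leq 0$,
\[
\EE[f(\x,\y^\star)-f(\x,\y_N)] \leq (\ell/2)\,\EE[\|\y_N-\y^\star\|^2] \leq \zeta/2 \leq \zeta,
\]
and the total stochastic-gradient count $NM = O\!\left(\kappa\log(\ell D^2/\zeta)\cdot\max\{1,\,2\kappa\sigma^2/(\ell\zeta)\}\right)$ matches the claim.

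The main technical obstacle is establishing the one-step contraction in the constrained, stochastic setting. My trick of tilting by $\langle g^\star,\y\rangle$ reduces it to an unconstrained GD step on a strongly convex, smooth function whose minimum sits at $\y^\star$, sidestepping any case analysis about whether $\y^\star$ lies in the interior or on the boundary of $\YCal$. The remainder is standard bias-versus-variance bookkeeping familiar from mini-batch SGD analyses on strongly convex objectives.
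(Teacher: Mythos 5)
Your recursion for the squared distance is correct and is essentially the same linear-contraction-plus-noise-floor bound the paper invokes: the tilting trick $\tilde F(\y) = -f(\x,\y) + \langle g^\star,\y\rangle$ is a clean way to get the one-step factor $(1-1/\kappa)$ in the constrained setting, and your choices of $N$ and $M$ correctly yield $\EE[\|\y_N-\y^\star\|^2]\le\zeta/\ell$, which is the ``in addition'' clause of the lemma.

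However, the last step --- converting the distance bound into the $\zeta$-accurate max-oracle guarantee $\EE[f(\x,\y^\star)-f(\x,\y_N)]\le\zeta$ --- has a genuine gap. The first-order optimality condition for a maximizer over the convex set $\YCal$ is $g^{\star\top}(\y-\y^\star)\le 0$ for all $\y\in\YCal$ (this is exactly inequality~\eqref{inequality-Lipschitz-first} in the paper), i.e.\ $-g^{\star\top}(\y_N-\y^\star)\ge 0$, which is the \emph{opposite} sign of what you assert. Smoothness gives $f(\x,\y^\star)-f(\x,\y_N)\le -g^{\star\top}(\y_N-\y^\star)+(\ell/2)\|\y_N-\y^\star\|^2$, and the linear term is nonnegative and of first order in $\|\y_N-\y^\star\|$, so it cannot be dropped; the claimed quadratic bound on the function gap is simply false when $\y^\star$ sits on the boundary of $\YCal$ with $g^\star\neq 0$ (e.g.\ $f(y)=y-\varepsilon y^2$ on $[-1,1]$ has gap $\approx(1-2\varepsilon)(1-y)$, not $O((1-y)^2)$). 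Since $\YCal$ is a general bounded convex set here, you cannot assume an interior maximizer. The paper avoids this by bounding the function-value suboptimality directly through the standard projected-SGD analysis on strongly concave objectives (linear decay of $\EE[f(\x,\y^\star)-f(\x,\y_k)]$ down to a floor of $\sigma^2\kappa/(\ell M)$), rather than routing through the squared distance. A minimal repair of your argument is to use the one-step sufficient-increase inequality for a projected (stochastic) ascent step with $\eta_\y=1/\ell$, namely $\EE[f(\x,\y_{k+1})]\ge f(\x,\y^\star)-\frac{\ell}{2}\EE[\|\y_k-\y^\star\|^2]+\frac{\ell}{2}\EE[\|\y_{k+1}-\y^\star\|^2]-O(\sigma^2/(\ell M))$, applied once after your distance bound; everything else in your bookkeeping then goes through.
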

\begin{proof}
Since $f(\x_t, \cdot)$ is $\mu$-strongly concave, we have
\begin{equation*}
\EE[f(\x_t, \y^\star(\x_t)) - f(\x_t, \y_t)] \leq \left(1 - \tfrac{1}{\kappa}\right)^{N_t} \tfrac{\ell D^2}{2} + \tfrac{\eta_\y^2\ell\sigma^2}{M}\left(\sum_{j=0}^{N_{t-1}} (1 - \mu\eta_\y)^{N_{t-1}-1-j}\right) \leq \left(1 - \tfrac{1}{\kappa}\right)^{N_t}\tfrac{\ell D^2}{2} + \tfrac{\sigma^2\kappa}{\ell M},  
\end{equation*}
and 
\begin{equation*}
\EE[\|\y^\star(\x_t)) - \y_t\|^2] \leq \left(1 - \tfrac{1}{\kappa}\right)^{N_t} D^2 + \tfrac{\eta_\y^2\sigma^2}{M}\left(\sum_{j=0}^{N_{t-1}} (1 - \mu\eta_\y)^{N_{t-1}-1-j}\right) \leq \left(1 - \tfrac{1}{\kappa}\right)^{N_t}\tfrac{\ell D^2}{2} + \tfrac{\sigma^2\kappa}{\ell^2 M}.   
\end{equation*}
The first inequality implies that the number of iterations is $O(\kappa\log(\ell D^2/\zeta))$ and the number of stochastic gradient evaluation is $O(\kappa\log(\ell D^2/\zeta)\max\{1, 2\sigma^2\kappa/\ell\zeta\})$. This together with the second inequality yields the other results.  
\end{proof}
\paragraph{Proof of Theorem~\ref{Theorem:nsc-SGDmax-complexity-bound}:} It is easy to find that the second descent inequality in Lemma~\ref{Lemma:nsc-key-descent} is applicable to SGDmax: 
\begin{eqnarray}\label{nsc-SGDmax-first}
\lefteqn{\EE[\Phi(\x_t)] \leq \EE[\Phi(\x_{t-1})] - \left(\tfrac{\eta_{\x}}{2} - 2\eta_{\x}^2\kappa\ell\right) \EE[\|\grad \Phi(\x_{t-1})\|^2]} \\
& & + \left(\tfrac{\eta_{\x}}{2} + 2\eta_{\x}^2\kappa\ell\right)\EE[\|\grad\Phi(\x_{t-1}) - \gradx f(\x_{t-1}, \y_{t-1})\|^2] + \tfrac{\eta_\x^2\kappa\ell\sigma^2}{M}. \nonumber
\end{eqnarray}
Since $\grad \Phi(\x_{t-1}) = \gradx f\left(\x_{t-1}, \y^\star(\x_{t-1})\right)$, we have
\begin{equation}\label{nsc-SGDmax-second}
\EE[\|\grad\Phi(\x_t) - \gradx f(\x_t, \y_t)\|^2] \leq \ell^2\EE[\|\y^\star(\x_t) - \y_t\|^2] \leq \ell\zeta. 
\end{equation}
Since $\eta_\x = 1/8\kappa\ell$, we have~\eqref{nsc-GDmax-stepsize}. Plugging~\eqref{nsc-GDmax-stepsize} and~\eqref{nsc-SGDmax-second} into~\eqref{nsc-SGDmax-first} yields that
\begin{equation}\label{nsc-SGDmax-third}
\EE[\Phi(\x_t)] \leq \EE[\Phi(\x_{t-1})] - \tfrac{\eta_\x}{4} \EE[\|\grad \Phi(\x_{t-1})\|^2] + \tfrac{3\eta_\x\ell\zeta}{4} + \tfrac{\eta_\x^2\kappa\ell\sigma^2}{M}. 
\end{equation}
Summing up~\eqref{nsc-SGDmax-third} over $t = 1, 2, \ldots, T+1$ and rearranging the terms yields that
\begin{equation*}
\tfrac{1}{T+1}\left(\sum_{t=0}^T \EE[\|\grad \Phi(\x_t)\|^2]\right) \leq \tfrac{4(\Phi(\x_0) - \EE[\Phi(\x_{T+1})])}{\eta_\x(T+1)} + 3\ell\zeta + \tfrac{4\eta_\x\kappa\ell\sigma^2}{M}. 
\end{equation*}
By the definition of $\eta_\x$ and $\Delta_\Phi$, we conclude that 
\begin{equation*}
\tfrac{1}{T+1}\left(\sum_{t=0}^T \EE[\|\grad \Phi(\x_t)\|^2]\right) \leq \tfrac{32\kappa\ell\Delta_\Phi}{T+1} + 3\ell\zeta + \tfrac{\sigma^2}{2M}. 
\end{equation*}
This implies that the number of iterations required by Algorithm \ref{Algorithm:SGDmax} to return an $\epsilon$-stationary point is bounded by 
\begin{equation*}
O\left(\frac{\kappa\ell\Delta_\Phi}{\epsilon^2}\right).   
\end{equation*}
Note that the same batch set can be reused to construct the unbiased stochastic gradients for both $\gradx f(\x_{t-1}, \y_{t-1})$ and $\grady f(\x_{t-1}, \y_{t-1})$ at each iteration. Combining Lemma~\ref{Lemma:nsc-SGDmax-neighborhood} gives the total gradient complexity of Algorithm \ref{Algorithm:SGDmax}:
\begin{equation*}
O\left(\frac{\kappa^2\ell\Delta_\Phi}{\epsilon^2}\log\left(\frac{\sqrt{\kappa}\ell D}{\epsilon}\right)\max\left\{1, \ \frac{\sigma^2\kappa^2}{\epsilon^2}\right\}\right). 
\end{equation*} 
This completes the proof. 

\subsection{Proof of Theorem~\ref{Theorem:nc-GDmax-complexity-bound}}
We present the gradient complexity bound of the gradient-ascent-based $\zeta$-accurate max-oracle in the following lemma. 
\begin{lemma}\label{Lemma:nc-GDmax-neighborhood}
Let $\zeta > 0$ be given, the $\zeta$-accurate max-oracle can be realized by running gradient ascent with a step size $\eta_\y = 1/2\ell$ for 
\begin{equation*}
O\left(\max\left\{1, \ \frac{2\ell D^2}{\zeta}\right\}\right)
\end{equation*}
gradient evaluations. 
\end{lemma}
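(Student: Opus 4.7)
The plan is to invoke the standard $O(1/K)$ convergence rate of projected gradient ascent for smooth concave maximization over a convex bounded set. For a fixed outer iterate $\x_t$, the function $f(\x_t, \cdot)$ is concave and $\ell$-smooth on the convex set $\YCal$, whose diameter is at most $D$. Denoting by $\y^\star \in \argmax_{\y \in \YCal} f(\x_t, \y)$ and by $\{\y_k\}_{k \geq 0}$ the projected gradient ascent iterates with $\eta_\y = 1/(2\ell)$, I plan to show $f(\x_t, \y^\star) - f(\x_t, \y_K) \leq \ell D^2 / K$, and then choose $K$ to meet the $\zeta$ tolerance.

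First I would apply the descent lemma to $-f(\x_t, \cdot)$, which yields
\begin{equation*}
f(\x_t, \y_{k+1}) \geq f(\x_t, \y_k) + \langle \grady f(\x_t, \y_k), \y_{k+1} - \y_k\rangle - \tfrac{\ell}{2}\|\y_{k+1} - \y_k\|^2.
\end{equation*}
Next I would invoke the first-order optimality of $\y_{k+1} = \proj_{\YCal}(\y_k + \eta_\y \grady f(\x_t, \y_k))$, which gives $\langle \y_k + \eta_\y \grady f(\x_t, \y_k) - \y_{k+1}, \y - \y_{k+1}\rangle \leq 0$ for every $\y \in \YCal$. Combining this with concavity of $f(\x_t, \cdot)$ and the descent lemma above produces the standard three-point inequality
\begin{equation*}
f(\x_t, \y) - f(\x_t, \y_{k+1}) \leq \tfrac{1}{2\eta_\y}\bigl(\|\y_k - \y\|^2 - \|\y_{k+1} - \y\|^2\bigr) - \bigl(\tfrac{1}{2\eta_\y} - \tfrac{\ell}{2}\bigr)\|\y_{k+1} - \y_k\|^2,
\end{equation*}
and with $\eta_\y = 1/(2\ell)$ the last term is nonpositive.

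Setting $\y = \y^\star$ and telescoping over $k = 0, 1, \ldots, K-1$ gives $\sum_{k=0}^{K-1} [f(\x_t, \y^\star) - f(\x_t, \y_{k+1})] \leq \ell\|\y_0 - \y^\star\|^2 \leq \ell D^2$. Since choosing $\y = \y_k$ in the descent inequality shows that $\{f(\x_t, \y_k)\}$ is nondecreasing, the sum on the left is bounded below by $K[f(\x_t, \y^\star) - f(\x_t, \y_K)]$, so $f(\x_t, \y^\star) - f(\x_t, \y_K) \leq \ell D^2 / K$. Picking $K = \lceil 2\ell D^2 / \zeta\rceil = O(\max\{1, 2\ell D^2/\zeta\})$ yields the claimed $\zeta$-accuracy; the case $\ell D^2 \leq \zeta$ is handled by a single gradient step, which accounts for the $\max\{1, \cdot\}$. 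There is no genuine obstacle here, as the argument is a textbook projected gradient analysis; the only minor care needed is to track the constants so that the $1/(2\eta_\y) - \ell/2$ coefficient is nonnegative, which is precisely why the step size is $1/(2\ell)$ rather than $1/\ell$.
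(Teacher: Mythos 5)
Your proposal is correct and matches the paper's argument: the paper simply invokes the standard $O(\ell D^2/N_t)$ suboptimality bound for projected gradient ascent on a smooth concave function with step size $1/2\ell$, which is exactly what you derive in full (your constant $\ell D^2/K$ is in fact slightly tighter than the paper's stated $2\ell D^2/N_t$, but both give the same complexity). The only difference is that you spell out the three-point inequality, telescoping, and monotonicity steps that the paper leaves implicit.
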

\begin{proof}
Since $f(\x_t, \cdot)$ is concave, we have $f(\x_t, \y^\star(\x_t)) - f(\x_t, \y_t) \leq \frac{2\ell D^2}{N_t}$ which implies that the number of iterations required is $\OCal\left(\max\left\{1, \ \frac{2\ell D^2}{\zeta}\right\}\right)$ which is the number of gradient evaluation. 
\end{proof}
\paragraph{Proof of Theorem~\ref{Theorem:nc-GDmax-complexity-bound}:} It is easy to find that the first descent inequality in Lemma~\ref{Lemma:nc-key-descent} is applicable to GDmax: 
\begin{equation}\label{nc-GDmax-first}
\Phi_{1/2\ell}(\x_t) \leq \Phi_{1/2\ell}(\x_{t-1}) + 2\eta_\x\ell\Delta_{t-1} - \tfrac{\eta_\x}{4}\|\grad \Phi_{1/2\ell}(\x_{t-1})\|^2 + \eta_\x^2 \ell L^2. 
\end{equation}
Summing up~\eqref{nc-GDmax-first} over $T = 1, 2, \ldots, T+1$ together with $\Delta_{t-1} \leq \zeta$ and rearranging the terms yields
\begin{equation*}
\tfrac{1}{T+1}\left(\sum_{t=0}^T \|\grad \Phi_{1/2\ell}(\x_t)\|^2\right) \leq \tfrac{4(\Phi_{1/2\ell}(\x_0) - \Phi_{1/2\ell}(\x_{T+1}))}{\eta_\x(T+1)} + 8\ell\zeta + 4\eta_\x\ell L^2. 
\end{equation*}
By the definition of $\eta_\x$ and $\widehat{\Delta}_\Phi$, we have
\begin{equation*}
\tfrac{1}{T+1} \left(\sum_{t=0}^T \|\grad \Phi_{1/2\ell}(\x_t)\|^2\right) \leq \tfrac{48\ell L^2\widehat{\Delta}_\Phi}{\epsilon^2(T+1)} + 8\ell\zeta + \tfrac{\epsilon^2}{3}. 
\end{equation*}
This implies that the number of iterations required by Algorithm \ref{Algorithm:GDmax} to return an $\epsilon$-stationary point is bounded by 
\begin{equation*}
O\left(\frac{\ell L^2\widehat{\Delta}_\Phi}{\epsilon^4}\right). 
\end{equation*}
Combining Lemma~\ref{Lemma:nc-GDmax-neighborhood} gives the total gradient complexity of Algorithm \ref{Algorithm:GDmax}:
\begin{equation*}
O\left(\frac{\ell L^2\widehat{\Delta}_\Phi}{\epsilon^4}\max\left\{1, \ \frac{\ell^2D^2}{\epsilon^2}\right\}\right). 
\end{equation*}
This completes the proof.

\subsection{Proof of Theorem~\ref{Theorem:nc-SGDmax-complexity-bound}}
We present the gradient complexity bound of the stochastic-ascent-based $\zeta$-accurate max-oracle in the following lemma.
\begin{lemma}\label{Lemma:nc-SGDmax-neighborhood}
Let $\zeta > 0$ be given, the $\zeta$-accurate max-oracle can be realized by running stochastic gradient ascent with a step size $\eta_\y = \min\{1/2\ell, \zeta/2\sigma^2\}$ and a batch size $M=1$ for 
\begin{equation}
O\left(\max\left\{1, \ \frac{4\ell D^2}{\zeta}, \ \frac{4\sigma^2 D^2}{\zeta^2}\right\}\right)
\end{equation}
stochastic gradient evaluations. 
\end{lemma}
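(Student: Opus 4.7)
}
The plan is to run projected stochastic gradient ascent on the concave, $\ell$-smooth function $\y \mapsto f(\x_t, \y)$ over $\YCal$ with $\x_t$ held fixed, and to show that the (averaged or uniformly-sampled) iterate is a $\zeta$-accurate maximizer in expectation. Since $\x_t$ does not change inside the oracle, the analysis reduces to a purely concave stochastic optimization problem, and the key ingredient is already available as the one-step inequality derived in the proof of Lemma~\ref{Lemma:nc-key-neighborhood}. First I would reuse that one-step bound with $\x_{t-1}$ replaced by the fixed $\x_t$: for any $\y \in \YCal$ and $\eta_\y \leq 1/(2\ell)$,
\begin{equation*}
\EE[\|\y - \y_{k+1}\|^2] \ \leq \ \EE[\|\y - \y_k\|^2] + 2\eta_\y \EE[f(\x_t, \y_{k+1}) - f(\x_t, \y)] + \eta_\y^2 \sigma^2.
\end{equation*}
Plugging in $\y = \y^\star(\x_t)$ and rearranging yields
\begin{equation*}
\EE[f(\x_t, \y^\star(\x_t)) - f(\x_t, \y_{k+1})] \ \leq \ \frac{\EE[\|\y^\star(\x_t) - \y_k\|^2 - \|\y^\star(\x_t) - \y_{k+1}\|^2]}{2\eta_\y} + \frac{\eta_\y \sigma^2}{2}.
\end{equation*}

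Next I would telescope this inequality from $k = 0$ to $N-1$ and use the boundedness of $\YCal$ (diameter $D$) together with Jensen's inequality applied to the concave map $\y \mapsto f(\x_t, \y)$ to pass to the averaged iterate $\bar{\y}_N = \frac{1}{N}\sum_{k=1}^N \y_k$ (which is the natural output). This gives
\begin{equation*}
\EE[f(\x_t, \y^\star(\x_t)) - f(\x_t, \bar{\y}_N)] \ \leq \ \frac{D^2}{2N\eta_\y} + \frac{\eta_\y \sigma^2}{2}.
\end{equation*}
It is precisely this convergence rate of the form $O(D^2/(N\eta_\y) + \eta_\y \sigma^2)$ that motivates the stepsize choice $\eta_\y = \min\{1/(2\ell),\ \zeta/(2\sigma^2)\}$: the second term is always balanced to be at most $\zeta/4$, so it suffices to drive the first term below $\zeta/2$.

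Finally I would split into two cases according to which branch of the minimum is active. If $\zeta \geq \sigma^2/\ell$, then $\eta_\y = 1/(2\ell)$ and requiring $\ell D^2/N \leq \zeta/2$ gives $N = O(\ell D^2/\zeta)$. If $\zeta < \sigma^2/\ell$, then $\eta_\y = \zeta/(2\sigma^2)$ and requiring $D^2\sigma^2/(N\zeta) \leq \zeta/2$ gives $N = O(\sigma^2 D^2/\zeta^2)$. Combining the two cases and the trivial lower bound $N \geq 1$ yields the claimed total count
\begin{equation*}
O\!\left(\max\left\{1,\ \frac{4\ell D^2}{\zeta},\ \frac{4\sigma^2 D^2}{\zeta^2}\right\}\right),
\end{equation*}
and because the batch size is $M = 1$ this equals the number of stochastic gradient evaluations. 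The argument is essentially routine once the one-step inequality is in place; the only mildly delicate point is making sure the one-step inequality used here—derived in the proof of Lemma~\ref{Lemma:nc-key-neighborhood} with a moving $\x$—specializes cleanly when $\x$ is frozen, so that a true telescoping sum (as opposed to the block-telescoping used in the main theorem) becomes available.
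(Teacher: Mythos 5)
Your proposal is correct and follows essentially the same route as the paper: the paper's own proof simply asserts the standard stochastic gradient ascent bound $\EE[f(\x_t, \y^\star(\x_t))] - \EE[f(\x_t, \y_t)] \leq \frac{D^2}{\eta_\y N_t} + \eta_\y\sigma^2$ for a concave $\ell$-smooth objective and then reads off the iteration count from the stepsize choice, which is precisely the bound you derive explicitly via the one-step inequality, telescoping, and Jensen's inequality on the averaged iterate. Your version just supplies the details the paper leaves implicit (including the harmless specialization of the one-step inequality to a frozen $\x_t$), so there is no substantive difference.
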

\begin{proof}
Since $f(\x_t, \cdot)$ is concave and $\eta_\y = \min\{\frac{1}{2\ell}, \frac{\zeta}{2\sigma^2}\}$, we have $\EE[f(\x_t, \y^\star(\x_t))] - \EE[f(\x_t, \y_t)] \leq \frac{D^2}{\eta_\y N_t} + \eta_\y\sigma^2$ which implies that the number of iterations required is $O(\max\{1, 4\ell D^2\zeta^{-1}, 4\sigma^2 D^2 \zeta^{-2}\})$ which is also the number of stochastic gradient evaluations since $M=1$. 
\end{proof}
\paragraph{Proof of Theorem~\ref{Theorem:nc-SGDmax-complexity-bound}:} It is easy to find that the second descent inequality in Lemma~\ref{Lemma:nc-key-descent} is applicable to SGDmax: 
\begin{equation}\label{nc-SGDmax-first}
\EE[\Phi_{1/2\ell}(\x_t)] \leq \EE[\Phi_{1/2\ell}(\x_{t-1})] + 2\eta_\x\ell\Delta_{t-1} - \tfrac{\eta_\x}{4}\EE[\|\grad \Phi_{1/2\ell}(\x_{t-1})\|^2] + \eta_\x^2\ell(L^2 + \sigma^2).
\end{equation}
Summing up~\eqref{nc-SGDmax-first} over $T = 1, 2, \ldots, T+1$ together with $\Delta_{t-1} \leq \zeta$ and rearranging the terms yields
\begin{equation*}
\tfrac{1}{T+1} \left(\sum_{t=0}^T \EE[\|\grad \Phi_{1/2\ell}(\x_t)\|^2]\right) \leq \tfrac{4(\Phi_{1/2\ell}(\x_0) - \EE[\Phi_{1/2\ell}(\x_{T+1})])}{\eta_\x(T+1)} + 8\ell\zeta + 4\eta_\x\ell (L^2 + \sigma^2). 
\end{equation*}
By the definition of $\eta_\x$ and $\widehat{\Delta}_\Phi$, we have
\begin{equation*}
\frac{1}{T+1} \left(\sum_{t=0}^T \EE[\|\grad \Phi_{1/2\ell}(\x_t)\|^2]\right) \leq \tfrac{48\ell(L^2 + \sigma^2)\widehat{\Delta}_\Phi}{\epsilon^2(T+1)} + 8\ell\zeta + \tfrac{\epsilon^2}{3}. 
\end{equation*}
This implies that the number of iterations required by Algorithm \ref{Algorithm:SGDmax} to return an $\epsilon$-stationary point is bounded by 
\begin{equation*}
O\left(\frac{\ell(L^2 + \sigma^2)\widehat{\Delta}_\Phi}{\epsilon^4}\right). 
\end{equation*}
Combining Lemma~\ref{Lemma:nc-SGDmax-neighborhood} gives the total gradient complexity of Algorithm \ref{Algorithm:GDmax}:
\begin{equation*}
O\left(\frac{\ell(L^2 + \sigma^2)\widehat{\Delta}_\Phi}{\epsilon^4}\max\left\{1, \ \frac{\ell^2D^2}{\epsilon^2}, \ \frac{\ell^2 D^2 \sigma^2}{\epsilon^4}\right\}\right). 
\end{equation*}
This completes the proof.

\end{document}